\documentclass[letterpaper]{article} 
\usepackage{aaai2026}  
\usepackage{times}  
\usepackage{helvet}  
\usepackage{courier}  
\usepackage[hyphens]{url}  
\usepackage{graphicx} 
\urlstyle{rm} 
\usepackage{natbib}  
\usepackage{caption} 
\frenchspacing  
\setlength{\pdfpagewidth}{8.5in} 
\setlength{\pdfpageheight}{11in} 
%
\usepackage{algorithm}
\usepackage{algorithmic}

%
\usepackage{newfloat}
\usepackage{listings}
\DeclareCaptionStyle{ruled}{labelfont=normalfont,labelsep=colon,strut=off} 
\lstset{%
	basicstyle={\footnotesize\ttfamily},
	numbers=left,numberstyle=\footnotesize,xleftmargin=2em,
	aboveskip=0pt,belowskip=0pt,%
	showstringspaces=false,tabsize=2,breaklines=true}
\floatstyle{ruled}
\newfloat{listing}{tb}{lst}{}
\floatname{listing}{Listing}
%
\pdfinfo{
/TemplateVersion (2026.1)
}

\usepackage{hyperref} 
\nocopyright 

\usepackage{mathrsfs}
\usepackage{amsmath}
\usepackage{amsthm} 
\usepackage{amssymb}
\theoremstyle{plain}
\newtheorem{theorem}{Theorem}
\newtheorem{lemma}{Lemma}
\newtheorem{proposition}{Proposition}
\newtheorem{corollary}{Corollary}
\usepackage{booktabs}            
\newcommand{\cmark}{\checkmark}  
\newcommand{\xmark}{\(\times\)}  
\makeatletter
\newcommand{\STATEx}[1]{\STATE \hskip 16\algorithmicindent #1}
\makeatother

\newcommand{\tuple}[1]{\ensuremath{\left \langle #1 \right \rangle }}
\newcommand{\eps}{\varepsilon}
\newcommand{\powsetopname}{\mathscr{P}}
\newcommand{\powset}[1]{\powsetopname(#1)}
\newcommand{\prob}{\mathbb{P}}
\newcommand{\A}{\mathbf{A}}
\newcommand{\Rob}{\mathbf{R}}
\newcommand{\R}{\mathbb{R}}
\newcommand{\Nat}{\mathbb{N}}
\newcommand{\N}{[N]}
\newcommand{\M}{[M]}
\newcommand{\CP}{\mathbb{C}\mathbb{P}}

\newcommand{\agree}{\ensuremath{\tuple{M, N, \eps, \delta}}}
\newcommand{\E}[1]{\mathbb{E}\left[#1\right]}

\newcommand{\Eja}[1]{\mathbb{E}_{{\prob}^{\A}_j}\left[#1\right]}
\newcommand{\Ejr}[1]{\mathbb{E}_{{\prob}^{\Rob}_j}\left[#1\right]}
\newcommand{\Eji}[1]{\mathbb{E}_{{\prob}^{i}_j}\left[#1\right]}
\newcommand{\Ejk}[1]{\mathbb{E}_{{\prob}^{k}_j}\left[#1\right]}

\theoremstyle{definition}
  \newtheorem{definition}{Definition}
  
  \newtheorem{requirement}{Requirement}

\theoremstyle{definition}

\setcounter{secnumdepth}{2} 

%


\title{Intrinsic Barriers and Practical Pathways for Human-AI Alignment: An Agreement-Based Complexity Analysis}
\author {
    Aran Nayebi
}
\affiliations{
    Machine Learning Department and Neuroscience \& Robotics Institutes,\\
    School of Computer Science, Carnegie Mellon University\\
    anayebi@cs.cmu.edu
}

\begin{document}

\maketitle

\begin{abstract}
We formalize AI alignment as a multi-objective optimization problem called $\langle M,N,\eps,\delta\rangle$-agreement, in which a set of $N$ agents (including humans) must reach approximate ($\eps$) agreement across $M$ candidate objectives, with probability at least $1-\delta$.
Analyzing communication complexity, we prove an information-theoretic lower bound showing that once either $M$ or $N$ is large enough, no amount of computational power or rationality can avoid intrinsic alignment overheads.
This establishes rigorous limits to alignment \emph{itself}, not merely to particular methods, clarifying a ``No-Free-Lunch'' principle: encoding ``all human values'' is inherently intractable and must be managed through consensus-driven reduction or prioritization of objectives.
Complementing this impossibility result, we construct explicit algorithms as achievability certificates for alignment under both unbounded and bounded rationality with noisy communication.
Even in these best-case regimes, our bounded-agent and sampling analysis shows that with large task spaces ($D$) and finite samples, \emph{reward hacking is globally inevitable}: rare high-loss states are systematically under-covered, implying scalable oversight must target safety-critical slices rather than uniform coverage.
Together, these results identify fundamental complexity barriers---tasks ($M$), agents ($N$), and state-space size ($D$)---and offer principles for more scalable human-AI collaboration.
\end{abstract}


\begin{figure*}[htbp]
    \centering
\includegraphics[width=0.8\linewidth]{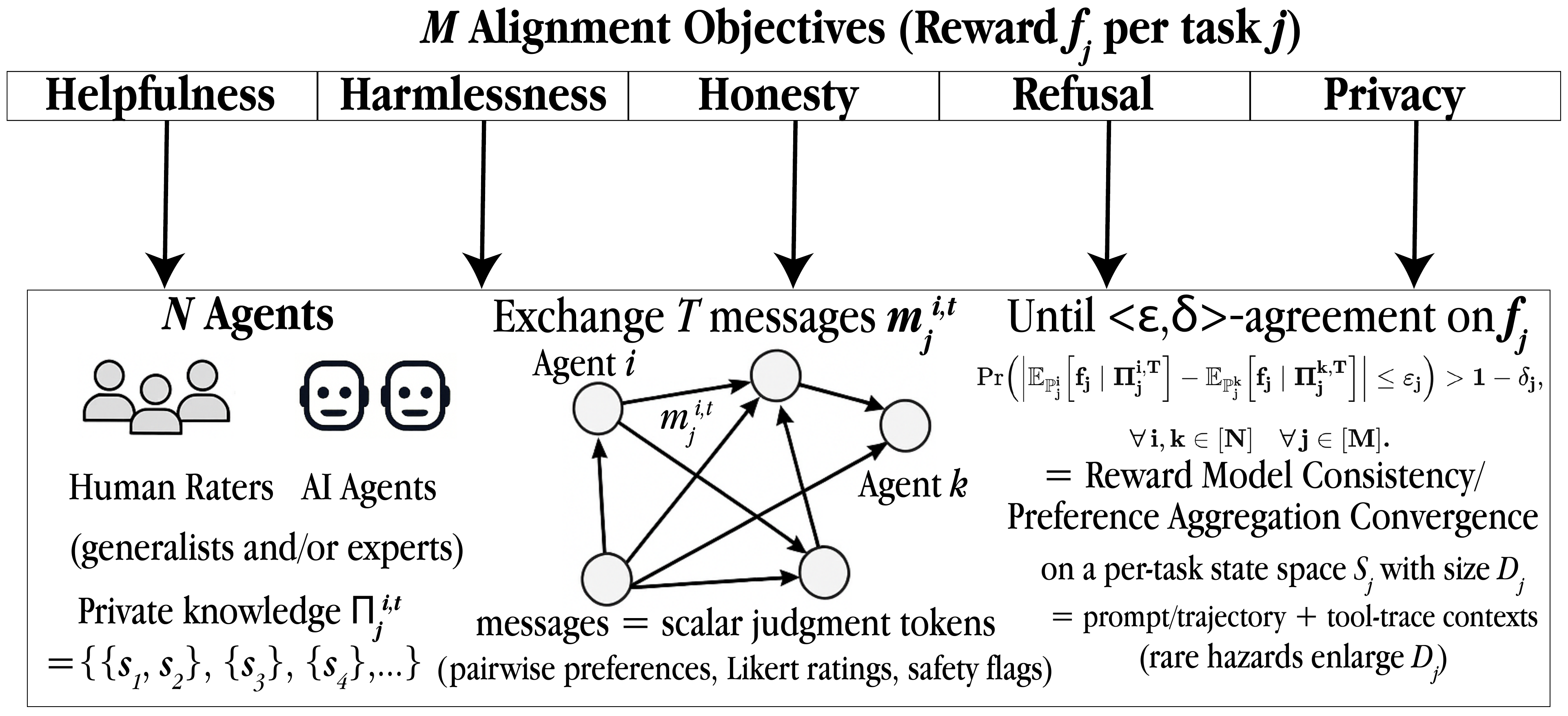}
    \caption{Mapping our $\agree$-agreement to current RLHF/DPO/Constitutional AI pipelines.}
    \label{fig:mapping}
\end{figure*}

\begin{table*}[ht]
\centering
\footnotesize
\setlength{\tabcolsep}{2pt}
\begin{tabular}{lcccccccccc}
\toprule
\textbf{Framework} &
\textbf{No‑CPA} & 
\textbf{Approx} & 
\textbf{Multi‑$M$} & 
\textbf{Multi‑$N$} & 
\textbf{Hist.} &
\textbf{Bnd.} &          
\textbf{Asym.} &        
\textbf{Noise} & 
\textbf{Upper} & 
\textbf{Lower} \\        
\midrule
Aumann (1976)                                              & \xmark & \xmark & \xmark & \xmark & \cmark & \xmark & \xmark & \xmark & \xmark & \xmark \\
Aaronson $\langle\eps,\delta\rangle$ (2005)         & \xmark & \cmark & \xmark & \cmark & \cmark & \cmark & \xmark & \cmark & \cmark & \cmark \\
Almost CP (Hellman and Samet 2012; Hellman 2013)                                   & \cmark & \xmark & \xmark & \cmark & \cmark & \xmark & \xmark & \xmark & \xmark & \xmark \\
CIRL (Hadfield‑Menell et al.\ 2016)                        & \xmark & \cmark & \xmark & \xmark & \xmark & \cmark & \xmark & \cmark & \cmark & \xmark \\
Iterated Amplification (Christiano et al.\ 2018)           & \cmark & \cmark & \xmark & \xmark & \cmark & \cmark & \xmark & \cmark & \cmark & \xmark \\
Debate (Irving et el. 2018; Cohen et al.\ 2023, 2025)     & \cmark & \xmark & \xmark & \xmark & \cmark & \cmark & \xmark & \cmark & \cmark & \xmark \\
Tractable Agreement (Collina et al.\ 2025)                 & \cmark & \cmark & \xmark & \cmark & \cmark & \cmark & \xmark & \xmark & \cmark & \xmark \\
\midrule
\textbf{\agree‑agreement (Ours)}                           &
\cmark & \cmark & \cmark & \cmark & \cmark & \cmark & \cmark & \cmark & \cmark & \cmark \\
\bottomrule
\end{tabular}
\caption{Positive capabilities (\cmark) across frameworks.  
\textbf{No‑CPA}: no common‑prior assumption (CPA);  
\textbf{Approx}: allows $\eps$‑approximate agreement;  
\textbf{Multi‑$M$} / \textbf{Multi‑$N$}: supports multiple tasks / many agents;  
\textbf{Hist.}: handles rich (non‑Markovian) histories;  
\textbf{Bnd.}: works for computationally \emph{bounded} agents; 
\textbf{Asym.}: tolerates \emph{asymmetric} evaluation or interaction costs;  
\textbf{Noise}: robust to noisy messages or judgments;  
\textbf{Upper}: provides explicit upper bounds (algorithms)---these can be useful as achievability certificates rather than prescriptions;  
\textbf{Lower}: proves lower bounds.  
Our $\langle M,N,\eps,\delta\rangle$‑agreement satisfies every criterion.}
\label{tab:related_work}
\end{table*}
\nocite{christiano2018supervising}

\section{Introduction}
\label{sec:introduction}
Rapid progress in artificial intelligence (AI) technologies, increasingly deployed across critical economic and societal domains, underscores the importance of ensuring these systems align with human intentions and values—a challenge known as the \emph{value alignment problem}~\citep{russell2015research,amodei2016concrete,soares2018value}.
Current alignment research frequently addresses immediate practical concerns, such as preventing jailbreaks in large language models~\citep{ji2023ai,guan2024deliberative,hubinger2024sleeper}.
While essential, these approaches largely focus on specific AI architectures and lack general, theoretically proven guarantees for alignment as systems approach human-level general capability.

Existing theoretical frameworks, notably AI Safety via Debate~\citep{irving2018ai,brown2023scalable,brown2025avoiding} and Cooperative Inverse Reinforcement Learning (CIRL)~\citep{hadfield2016cooperative}, have significantly advanced our understanding by providing formal guarantees of alignment in specific scenarios.
Debate effectively leverages interactive proofs to isolate misalignment through zero-sum debate games, though it relies critically on exact verification by a correct and unbiased human judge and computational tractability constraints.
CIRL successfully formulates alignment as a cooperative partial-information game reducible to a POMDP, allowing an elegant characterization of optimal joint policies under shared uncertainty~\citep{hadfield2016cooperative}.
However, CIRL implicitly assumes common priors and employs a Markovian assumption, potentially limiting agents' ability to leverage richer historical contexts for alignment.
While these methods represent important theoretical progress, their simplifying assumptions restrict broader applicability and leave open questions about alignment scenarios involving diverse knowledge states, richer agent interactions, or more complex objectives.
This underscores a crucial theoretical gap: no unified framework currently addresses alignment under minimal assumptions while rigorously identifying \emph{intrinsic} barriers independent of specific modeling choices. 
We propose that prior alignment approaches implicitly rely on underlying conceptual foundations involving iterative reasoning, mutual updating, common knowledge, and convergence under shared frameworks. 

To bridge this gap, we explicitly formalize these elements within an assumption-light framework called \agree-agreement (\S\ref{ss:results-framework}), which models alignment as a multi-objective optimization problem involving minimally capable agents and allows us to rigorously analyze alignment in highly general contexts.
In \agree-agreement, a group of agents (including humans) must achieve approximate consensus across multiple objectives with high probability.
We show in Table~\ref{tab:related_work} that our framework generalizes previous alignment approaches by relaxing their strong assumptions, thus enabling analysis under a broad set of conditions.

We then rigorously establish intrinsic, method-independent complexity-theoretic barriers to alignment, formalizing a fundamental ``No-Free-Lunch'' principle in Proposition~\ref{prop:lb}: attempting to encode all human values inevitably incurs alignment overheads, regardless of agent computational power or rationality.
Complementing this impossibility result, we also provide explicit algorithms in \S\ref{sec:ub}, not as prescriptions, but as \emph{achievability certificates} for both computationally unbounded and bounded rational agents, alongside closely matching lower bounds in \S\ref{sec:lb}.
Taken together, our results yield guidelines (\S\ref{sec:discussion}) clarifying the overall landscape of alignment and providing practical pathways for more scalable human-AI collaboration.

\section{Related Work}
\label{sec:related_work}
We summarize the key assumptions and features of previous alignment and agreement frameworks in Table~\ref{tab:related_work}, positioning our \agree‑agreement framework within the literature. 
Where earlier methods typically require common priors, exact agreement, single‑objective settings, or Markovian dynamics, our framework drops these assumptions, scales to many tasks and agents, tolerates noisy non‑Markovian exchanges, and supports bounded, cost‑asymmetric participants. 
Because it operates at the scalar‑reward level that dominates real‑world AI‑safety work, it can absorb \emph{any} previous protocol---including two‑agent, no‑common‑prior schemes such as \citet{collina2025tractable}---and lift them to our more general multi‑task, multi‑agent, asymmetric, noisy setting, obtaining universal lower bounds and closely matching upper bounds for broad classes of natural protocols.

\section{\agree-Agreement Framework}
\label{ss:results-framework}
\paragraph{Setup.}
For $M$ tasks $\M:=\{1,\dots,M\}$ and $N$ agents (humans and AIs) $\N:=\{1,\dots,N\}$, each task $j\in\M$ has finite state-space $S_j$ with $|S_j|=D_j$, bounded\footnote{Since $S_j$ is finite, note that any $f_j: S_j\to\R$ can be rescaled to $[0,1]$.} objective $f_j:S_j\to[0,1]$, and probability simplex $\Delta(S_j)\subseteq\R^{D_j}$.

Each agent $i\in\N$ begins with an \emph{individual} prior $\prob^{\,i}_j$ over $S_j$; we \textbf{do not} assume a common prior (CPA).
The \emph{prior distance}, as introduced by~\citet{hellman2013almost}, is:
\begin{equation}
\label{eq:prior-dist}
  \nu_j
  \;=\;
  \min_{\tuple{x_1,x_2}\in\prob^{i}_j\times\prob^{k}_j,\;
        p\in\Delta(S_j)}
  \|x_1-p\|_1 + \|x_2 - p\|_1
\end{equation}
measures disagreement between any pair of priors; $\nu_j=0$
iff a true common prior exists.
Note by the triangle inequality, $\nu_j \ge \|\prob^{i}_j-\prob^{k}_j\|_1$ if these priors are \emph{sets} of prior distributions per agent, and holds with equality for the typical setting of single prior distributions per agent.
This notion of prior distance captures the smallest change in beliefs needed for agents to share a common prior---a measure of how close the agents already are to agreement.

\paragraph{Information flow.}
At round $t\ge 0$ each agent $i$ holds a
\emph{knowledge partition} $\Pi^{i,t}_j=\left\{C^{\,i,t}_{j,k}\right\}_k$ of $S_j$.
Each cell $C^{\,i,t}_{j,k} \subseteq S_j$ is a set of states, so $\Pi^{i,t}_j(s_j)$ is the set of states in $S_j$ that agent $i$ finds possible at time $t$, given that the true state of the world is $s_j \in S_j$.
Agents broadcast real-valued messages $m^{i,t}_j \in [0,1]$  and \emph{refine} their partitions: $\Pi^{i,t+1}_j \subseteq \Pi^{i,t}_j$, and update their posterior belief distributions $\tau^{i,t}_j$, giving rise to the \emph{type profile} across the $N$ agents $\tau^t_j = (\tau^{1,t}_j,\dots\tau^{N,t}_j)$.
All knowledge partitions are common knowledge, ensuring the standard~\citet{aumann1976agreeing,aumann1999interactive} update semantics, but without assuming CPA.
Please see Appendix~\S\ref{app:agreement-protocol} for full details.

\paragraph{\agree‑Agreement Criterion.}
Fix tolerances $\eps_j,\delta_j \in(0,1)$.
After $T$ rounds, the agents \agree\textbf{-agree} if:
\begin{equation}
\label{eq:eps-delta}
\begin{split}
  &\Pr\!\left(
    \bigl|\,
      \Eji{f_j \mid \Pi^{\,i,T}_j}
      -\Ejk{f_j \mid \Pi^{\,k,T}_j}
    \bigr|
    \le \eps_j
  \right)
  > 1-\delta_j,\\
  &\quad\forall i,k\in\N \quad\forall j\in\M.
\end{split}
\end{equation}
Exact agreement is the special case $\eps_j=\delta_j=0$.

\paragraph{Why this ``best‑case'' model matters.}
\agree{} subsumes classical exact (and inexact) agreement
results and all prior alignment formalisms that we examine in
Table~\ref{tab:related_work} (visualized in Figure~\ref{fig:mapping}).
If alignment is \emph{hard even here}---with fully rational and \emph{computationally unbounded} agents, ideal message delivery, and no exogenous adversary---then practical settings with bounded rationality, noisy channels, or strategic misreporting can only be harder (therefore we want to avoid them).  
\S\ref{ss:results-bounded} quantifies exactly how
much harder.

\paragraph{Notation.}
Let $D:=\max_{j\in\M} D_j$ when used in the context of upper bounds (and $D:=\min_{j\in\M} D_j$ for lower bounds), let $\eps:=\min_{j\in\M}\eps_j$, and write $\mathbb{P},\mathbb{E}$ for probability and expectation.
The power‑set function is $\powset{\cdot}$.
All omitted proofs and implementation details---e.g. explicit message formats, the spanning‑tree protocol for refinement, and the LP procedure \textsc{ConstructCommonPrior} used in \S\ref{ss:results-bounded}---are provided in the Appendix.

\section{Lower Bounds}
\label{sec:lb}
Below is the best lower bound we can prove for \agree-agreement, across \emph{all} possible communication protocols:
\begin{proposition}[General Lower Bound]\label{prop:lb}
There exist functions $f_j$, input sets $S_j$, and prior distributions $\{\prob_j^i\}^{i \in \N}$ for all $j \in \M$, such that any protocol among $N$ agents needs to exchange $\Omega\left(M\,N^2\,\log\left(1/\eps\right)\right)$ bits\footnote{Note, unlike our upper bounds in Theorem~\ref{thm:ub} and Proposition~\ref{prop:disc}, we use bits in the lower bound in order to apply to \emph{all} possible protocols (continuous or discrete), regardless of how many bits are encoded per message. The upper bounds have to use messages (rounds) to describe either a continuous protocol (potentially infinitely many bits) as in Theorem~\ref{thm:ub}, or a discrete protocol as in Proposition~\ref{prop:disc}.} to achieve $\agree$-agreement on $\{f_j\}_{j \in \M}$, for $\eps$ bounded below by $\min_{j\in\M} \eps_j$.
\end{proposition}

Thus, by Proposition~\ref{prop:lb}, there does \emph{not} exist any \agree-agreement protocol (deterministic or randomized) that can exchange less than $\Omega\left(M\,N^2\,\log\left(1/\eps\right)\right)$ bits for \emph{all} $f_j, S_j$, and prior distributions $\{\prob_j^i\}^{i \in \N}$.
For if it did, then we would reach a contradiction for the particular construction in Proposition~\ref{prop:lb}.
Note that the linear dependence on $M$ can mean an exponential number of bits in the lower bound if we have that many \emph{distinct} tasks (or agents), e.g. if $M = \Theta(D)$ for a large task state space $D$.

By considering the natural subclass of \emph{smooth protocols}---where agents' posterior beliefs at \agree-agreement time must not diverge more than their initial priors, measured in total variation distance---we obtain a strictly improved lower bound:
\begin{proposition}[``Smooth'' Protocol Lower Bound]\label{prop:lb:2}
Let the number of tasks $M \ge 2$, and for each task $j \in \M$, let the task state space size $D_j > 2$, $\eps \le \eps_j$, $\delta_j < \nu/2$, and $0 < \nu \le 1$.
Furthermore, assume the protocol is smooth in that the total variation distance of the posteriors of the agents once \agree-agreement is reached is $\le c\nu$ for $c < \tfrac{1}{2} - \tfrac{\delta_j}{\nu}$. 
There exist functions $f_j$, input sets $S_j$, and prior distributions $\{\prob_j^i\}^{i \in \N}$ with prior distance $\nu_j \ge \nu$, such that any smooth protocol among $N$ agents needs to exchange:
\begin{equation*}
\Omega\left(M\,N^2\,\left(\nu + \log\left(1/\eps\right)\right)\right)
\end{equation*}
bits to achieve $\agree$-agreement on $\{f_j\}_{j \in \M}$.
\end{proposition}
Both lower bounds in Propositions~\ref{prop:lb} and~\ref{prop:lb:2} demonstrate that gaining consensus on a small list of $M$ values that we want AI systems to have, will be essential for scalable alignment.

Finally, we consider a related smoothness condition—namely, the broad class of \emph{bounded-Bayes-factor (BBF)} protocols---in which each message bit alters message likelihoods by at most a constant multiplicative factor. 
This assumption naturally captures realistic message-passing behavior, since rational and bounded agents typically update their beliefs incrementally rather than abruptly shifting posterior distributions after receiving a single message. 
Under this mild condition, we examine a natural setting: agents initially separated by prior distance $\nu$ first establish a \emph{common prior} by satisfying the canonical equalities of \citet{hellman2012common} (displayed in Algorithm~\ref{alg:construct} in Appendix~\S\ref{app:runtime}), and subsequently condition on this shared prior to achieve \agree-agreement. 
They showed that for tight and connected knowledge partitions (defined below), these canonical equalities are automatically preserved under standard Bayesian updating; hence, our construction needs no further behavioral constraints beyond standard Bayesian rationality.

Under these reasonable conditions, our lower bound strengthens to include an extra multiplicative factor of $D:=\min_j D_j$, the smallest state-space size across the $M$ tasks. 
Thus, this refined lower bound more closely matches the general upper-bound results from \S\ref{sec:ub} (cf. Algorithm~\ref{alg:agree}) for this protocol class within an additive polynomial term in $M,N,\eps$, and $\delta$.
\begin{proposition}[Canonical-Equality BBF Protocol Lower Bound]
\label{prop:lb:3}
Let $M\ge 2$ be the number of tasks and let each task $j$ have a finite state‑space $S_j$ with size $D_j>2$.
For every $j$, let the \emph{initial} knowledge profiles of the $N$ agents, $(\Pi_j^{1,0},\dots,\Pi_j^{N,0})$, be
\begin{enumerate}
\item \emph{connected:} the alternation graph on states is connected, i.e. $\bigwedge_i\Pi_j^{i,0}=\{S_j\}$, so every two states are linked by an alternating chain of states; and
\item \emph{tight:} that graph becomes disconnected if any edge is removed (unique chain property).
\end{enumerate}

Assume the message‑passing protocol is {BBF$(\beta)$} for some $\beta > 1$: every $b$‑bit message $m_j^{i,t}$ satisfies $\beta^{-b}\!\le\!\Pr[m_{j}^{i,t}\mid s_j, \Pi^{i,t-1}_j(s_j)]/\Pr[m_{j}^{i,t}\mid s'_j, \Pi^{i,t-1}_j(s'_j)]\!\le\!\beta^{\,b}$.
Then there exist payoff functions $f_j:S_j\!\to\![0,1]$ and priors $\{\mathbb P_j^{\,i}\}_{i\in[N]}$ with pairwise distance $\nu_j\ge\nu$, $0<\nu\le 1$, such that any BBF$(\beta)$ protocol attaining \agree‑agreement via the canonical equalities of~\citet{hellman2012common} must exchange at least
\begin{equation*}
   \Omega\left(
        M\,N^{2}\,[\,D\nu+\log(1/\eps)\,]
     \right),
   \qquad
   D:=\min_{j\in\M}D_j ,
\end{equation*}
bits in the worst case (implicit constant $=1/\log\beta$), where the accuracy parameter $0<\eps\le\eps_j<1$.
\end{proposition}

\section{Convergence of \agree-agreement}
\label{sec:ub}
Given these lower bounds, a natural question is whether \agree-agreement is achievable at all---especially since the agents begin without a common prior. 
In this section, we demonstrate that it is indeed achievable, providing explicit algorithms and upper bounds on convergence not only for idealized, unbounded agents but also under realistic constraints such as message discretization and computational boundedness.
Here we prove the general upper bound:
\begin{theorem}\label{thm:ub}
$N$ rational agents will \agree-agree with overall failure probability $\delta$ across $M$ tasks, as defined in \eqref{eq:eps-delta}, after $T = O\left(MN^2 D + \dfrac{M^3N^7}{\eps^2\delta^2}\right)$ messages, where $D := \max_{j \in \M} D_j$ and $\eps := \min_{j \in \M}\eps_j$.
\end{theorem}
For an explicit algorithm, see Algorithm~\ref{alg:agree}---we detail the reasoning behind this algorithm below.

\begin{algorithm}[ht]
\caption{\textsc{\agree}-Agreement}\label{alg:agree}
\begin{algorithmic}[1]          
\REQUIRE $N$ agents with initial partitions
         $\{\Pi^{i,0}_j\}_{i=1}^N$ for each task $j\in\M$;
         protocol $\mathcal{P}$; \textsc{ConstructCommonPrior} defined in Algorithm~\ref{alg:construct};
         $\tuple{\eps,\delta}$-agreement protocol $\mathcal{A}$
\ENSURE  Agents reach $\tuple{\eps_j,\delta_j}$-agreement for all $M$ tasks

\FOR{$j \gets 1$ to $M$}
  \STATE $t \gets 0$
  \REPEAT
    \STATE $t \gets t+1$
    \FORALL{agent $i \in \N$}
      \STATE send $m^{i,t}_j$ via $\mathcal{P}$
      \STATE $\Pi^{i,t}_j \gets 
             \textsc{RefinePartition}\bigl(\Pi^{i,t-1}_j, m^{\cdot,t}_j\bigr)$
    \ENDFOR
    \STATE $\CP_j \gets
            \textsc{ConstructCommonPrior}\!\bigl(
            \{\Pi^{i,t}_j\}_{i=1}^{N},$
    \STATEx $\{\tau^{i,t}_j\}_{i=1}^{N}\bigr)$
  \UNTIL{$\CP_j \neq$ \textsc{Infeasible}}
  \STATE Condition all agents on $\CP_j$
  \STATE $\textsc{RunCPAgreement}\left(\mathcal{A},\mathcal{P},\CP_j,f_j,\eps_j,\delta_j\right)$
\ENDFOR
\end{algorithmic}
\end{algorithm}

First, we need to figure out at most how many messages need to be exchanged to guarantee at least one proper refinement.
To do so, we will have the $N$ agents communicate using the ``spanning-tree'' protocol of~\citet[\S 3.3]{aaronson2005complexity}, which we generalize to the multi-task, no common prior, setting below:

\begin{lemma}[Proper Refinement Message Mapping Lemma]\label{lem:spanning-tree-refinement}
If $N$ agents communicate via a spanning‐tree protocol for task $j$, where
$g_j \in \Nat$ is the diameter of the chosen spanning trees, then as long as they have not yet reached agreement, it takes $O(g_j)= O(N)$ messages before at least one agent's knowledge partition is properly refined.
\end{lemma}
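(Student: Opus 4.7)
The plan is to follow the spanning-tree analysis of Aaronson~\citep{aaronson2005complexity} and generalize it to our multi-task setting by essentially analyzing what happens in a single ``sweep'' of the tree. Fix a spanning tree $T_j$ on the $N$ agents with diameter $g_j \le N-1$. One full communication sweep consists of $O(g_j)$ individual messages passed along tree edges in a BFS-like schedule, so that by the end of the sweep, information originating at any node has had a chance to propagate (via successive conditional-expectation messages) to any other node. I would prove the statement in the contrapositive: if $O(g_j)$ messages are exchanged without any agent's partition being properly refined, then the agents already $\tuple{\eps_j,\delta_j}$-agree.

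The first key step is the \emph{predictability} observation: a message $m^{k,t}_j\left(\Pi_j^{k,t-1}(s_j)\right)$ from agent $k$ fails to properly refine recipient $i$'s partition if and only if $m^{k,t}_j$ is constant on every cell of $\Pi_j^{i,t-1}$, i.e., agent $i$ could already have computed $k$'s posterior from its own current knowledge. Applying this inductively along the spanning tree across $O(g_j)$ steps, ``no refinement anywhere in the sweep'' means that at every edge the recipient's partition is a coarsening fine enough to predict the incoming posterior exactly. By transitivity along tree paths, each agent can therefore reconstruct, from its own partition alone, the current conditional expectation $E^{k,t}_j$ of every other agent $k$.

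Mutual predictability across the whole tree is exactly the hypothesis needed to invoke an Aumann-style argument on the common knowledge set $\mathcal{C}\!\left(\{\Pi_j^{i,t}\}^{i\in \N}\right)$: once every agent's posterior is common knowledge (in the tree-wide sense produced by the sweep), the posteriors must coincide up to the agreement tolerance. Because our framework does not assume a common prior a priori, I would apply this argument in the form used for Algorithm~\ref{alg:agree}, where either (i) \textsc{ConstructCommonPrior} has already succeeded and the tolerance $\tuple{\eps_j,\delta_j}$ is enforced via protocol $\mathcal{A}$, or (ii) the mutual-predictability conclusion combined with the $\tuple{\eps_j,\delta_j}$ target of \eqref{eq:eps-delta} directly yields the agreement inequality. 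Either way, the contrapositive closes: absence of proper refinement over $O(g_j)$ messages forces agreement.

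The main obstacle will be the bookkeeping in the second step: carefully scheduling the $O(g_j)$ messages along $T_j$ so that the ``I can predict your posterior'' property composes transitively across tree paths without subtle order-of-operations bugs (e.g., an agent's partition changing mid-sweep in a way that invalidates a neighbor's earlier prediction). Once that schedule is made explicit---essentially a leaves-to-root then root-to-leaves pass, each of length $O(g_j)$---the rest is a direct adaptation of Aaronson's two-agent refinement argument, with the $O(N)$ bound coming from $g_j \le N-1$.
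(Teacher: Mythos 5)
Your contrapositive ("no refinement over $O(g_j)$ messages $\Rightarrow$ agreement") is a legitimate restatement of the lemma, and your predictability observation---a message fails to refine $\Pi_j^{i,t-1}$ iff $i$ could already predict the sender's posterior from its own cell---is exactly the right atomic ingredient; it is the same mechanism the paper invokes when it says the receiving agent is ``surprised.'' The paper's construction is slightly different at the routing level: it uses \emph{two} oppositely-oriented minimum-diameter spanning trees rooted at agent 1 (outward ${SP}^1_j$ and inward ${SP}^2_j$) and cycles through an edge ordering of both, rather than your single tree with a leaves-to-root / root-to-leaves sweep; both achieve the goal of routing every agent's most recent expectation to every other agent within $O(g_j)$ transmissions, and the scheduling bookkeeping you flag is precisely what the ordering $\mathcal{O}^1_j,\mathcal{O}^2_j,\dots$ handles.

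Where the proposal has a real gap is in the closing step. You invoke ``an Aumann-style argument on the common knowledge set'' to go from tree-wide mutual predictability to agreement, but that step is Aumann's agreement theorem, whose validity hinges on the Common Prior Assumption---the very assumption this framework drops, and which Lemma~\ref{lem:common-prior} is supposed to \emph{establish} by repeated application of the present lemma. Your hedge does not repair this: case (i), that \textsc{ConstructCommonPrior} has already succeeded, cannot be assumed here, because Lemma~\ref{lem:spanning-tree-refinement} is invoked inside the proof of Lemma~\ref{lem:common-prior} to bound the number of messages needed \emph{before} any common prior exists; case (ii), that predictability ``directly yields the agreement inequality,'' is an assertion, not an argument, and it is exactly the claim that would need a common prior (or an equivalent Aumann-style hypothesis) to justify. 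The paper's direct formulation avoids explicitly invoking Aumann: it argues that a persistent disagreement means one agent will find the other's forwarded expectation inconsistent with its own knowledge cell, which---given the standing assumption that agents know each other's partitions---is a combinatorial refinement event rather than a statement about prior coherence. To fix your version, you would need to replace the Aumann appeal with the same partition-level argument: show that if every incoming message is constant on every cell of every recipient's partition (i.e., a fixed point of refinement), then the conditional expectations of $f_j$ already satisfy the $\tuple{\eps_j,\delta_j}$ criterion of \eqref{eq:eps-delta}, \emph{without} assuming a common prior---and that is precisely the nontrivial content the paper packages into the ``surprised'' step.
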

\begin{proof}

Let $G_j$ be a strongly connected directed graph with vertices $v\in\N$ (one per agent), enabling communication of expectations $E^{i,t}_j$ along edges. 
(We need the strongly connected requirement on $G_j$, since otherwise the agents may not reach agreement for trivial reasons if they cannot reach one another.)
Without loss of generality, let ${SP}^1_j$ and ${SP}^2_j$ be minimum-diameter spanning trees of $G_j$, each rooted at agent~1, with ${SP}^1_j$ pointing outward from agent~1 and ${SP}^2_j$ inward toward agent~1, each of diameter at most~$g_j$.

Define orderings $\mathcal{O}^1_j$ (resp.\ $\mathcal{O}^2_j$) of edges in ${SP}^1_j$ (resp.\ ${SP}^2_j$) so each edge $(i\to k)$ appears only after edges $(\ell\to i)$, except when $i$ is the root (or leaf, in inward trees). Construct $\mathrm{AgentOrdering}_j$ by cycling through $\mathcal{O}^1_j,\mathcal{O}^2_j,\dots$, where in each round~$t$ the tail agent of $\mathrm{AgentOrdering}_j(t)$ sends its current expectation. Thus, every block of $O(g_j)$ transmissions forwards each agent's updated message along both trees, reaching all others.

Consequently, disagreement between any agents $i$ and $k$ leads to at least one agent receiving a ``surprising'' message within these $O(g_j)$ transmissions (worst-case occurs when $i,k$ are on opposite ends of $G_j$), causing a partition refinement. Thus, without agreement, at least one refinement occurs every $O(g_j)$ messages.

Note $g_j=O(N)$ if $G_j$ is a worst-case ring topology; more favorable topologies yield $g_j\ll N$, but we assume worst-case generality to subsume any specific cases.
\end{proof}

Next, we prove an important (for our purposes) lemma, which is an extension of~\citet[Theorem 2]{hellman2013almost}'s result on almost common priors to our $M$-function message setting:
\begin{lemma}[Common Prior Lemma]\label{lem:common-prior}
If $N$ agents have prior distance $\nu_j$, as defined in \eqref{eq:prior-dist}, for a task $j\in \M$ with task state space $S_j$, then after $O\left(N^2 D_j \right)$ messages, they will have a common prior $\CP_j$ with probability 1 over their type profiles.
\end{lemma}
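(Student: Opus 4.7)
The plan is to combine a combinatorial bound on partition refinements with an adaptation of Hellman's common-prior construction~\citep{hellman2013almost}, running the spanning-tree protocol from Lemma~\ref{lem:spanning-tree-refinement} until the system either reaches agreement or stabilizes at a fixed point from which a common prior can be extracted. Since this lemma is about manufacturing $\CP_j$ rather than achieving $\tuple{\eps_j,\delta_j}$-agreement, the key observation is that \emph{any} proper refinement strictly reduces the residual uncertainty available, so refinements can only happen so many times.

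First, I would establish a counting bound on the number of proper refinements. Since each agent's knowledge partition of $S_j$ contains at most $D_j = |S_j|$ cells, and every proper refinement strictly increases the cell count for at least one agent by at least one, the total number of proper refinements across all $N$ agents before the partitions stabilize is at most $N(D_j - 1) = O(N D_j)$. By Lemma~\ref{lem:spanning-tree-refinement}, each such refinement occurs within $O(N)$ messages of the spanning-tree protocol, giving a total message count of $O(N)\cdot O(N D_j) = O(N^2 D_j)$ before reaching either agreement or a refinement fixed point, which matches the claimed bound.

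The main obstacle is the second step: showing that at such a fixed point, a common prior $\CP_j \in \mathcal{D}_j$ can actually be constructed. Here I would invoke Hellman's Theorem~2~\citep{hellman2013almost} adapted to our multi-task, $N$-agent setting. Hellman's construction reconstructs a common prior by solving the linear consistency equations imposed by the agents' posteriors on the common-knowledge coarsening $\mathcal{C}\left(\{\Pi_j^{i,T}\}^{i \in \N}\right)$. The challenge in adapting it is that Hellman's original argument is phrased for a fixed pair of priors with a single belief function and $N=2$, whereas our protocol uses $f_j$-parameterized messages across $N$ agents and $M$ tasks. However, because partition refinement depends only on the \emph{equality structure} of messages (through the definition of $\Pi_j^{i,t}$) and not on the numerical values of $f_j$, the combinatorial reconstruction carries through task by task and pair by pair, yielding a valid $\CP_j$ for each $j \in \M$. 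The ``probability 1 over type profiles'' qualifier absorbs a measure-zero exceptional set on which Hellman's construction would encounter ill-defined conditional probabilities (e.g., cells where some agent assigns zero prior mass while another assigns positive mass); on the complement, which has full measure under every $\prob^i_j$, the linear system is well-posed and $\CP_j$ is uniquely determined.
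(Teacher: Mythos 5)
Your counting bound and its conversion to messages via Lemma~\ref{lem:spanning-tree-refinement} are essentially what the paper does: each proper refinement raises some agent's cell count, so there can be at most $O(ND_j)$ of them, and each costs $O(N)$ messages, giving $O(N^2 D_j)$ total. That part is sound and matches the paper.

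The gap is in why a common prior must exist once refinements stop. You frame this as: ``at a fixed point, invoke Hellman's Theorem~2~\citep{hellman2013almost} to reconstruct $\CP_j$ from linear consistency equations.'' Two problems. First, being at a ``refinement fixed point'' does not, by itself, imply the linear system admits a nonnegative solution: that system is exactly the feasibility test in \textsc{ConstructCommonPrior}, and it can return \textsc{Infeasible} at a stable partition configuration if the posteriors are mutually inconsistent. You never argue that the protocol drives the partitions to a state where feasibility is guaranteed; you only argue that it drives them to a state where no further refinement occurs, and these are different things. Second, the result that actually closes this gap in the paper is not Hellman's Theorem~2 on almost common priors but Hellman and Samit's Proposition~2~\citep{hellman2012common}, which gives an explicit \emph{partition-size} criterion: when $\sum_{i=1}^N \bigl|\Pi_j^{i,t}\bigr| = (N-1)D_j + \bigl|\mathcal{C}(\{\Pi_j^{i,t}\})\bigr|$, \emph{every} type profile over those partitions admits a common prior, with probability~1. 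Since $\bigl|\mathcal{C}\bigr| \le D_j$, this threshold is at most $ND_j$, and since each proper refinement increases $\sum_i \bigl|\Pi_j^{i,t}\bigr|$ by at least one, the counting bound directly delivers the threshold---no appeal to a ``fixed point'' is needed. Your treatment of the ``probability~1 over type profiles'' clause as merely excusing a measure-zero set of degenerate conditionals also misreads the claim: that phrase is the precise probabilistic content of the Hellman--Samit criterion, not an ad hoc safety valve. In short, your message bound is right for the right reasons, but the existence step needs to be replaced by the Hellman--Samit partition-size condition (and the fact that refinement monotonically drives the total size toward it), not by an invocation of the almost-common-priors theorem at an undefended fixed point.
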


Once the agents reach a common prior $\CP_j$, they can then condition on that for the rest of their conversation to reach the desired $1-\delta_j$ $\eps_j$-agreement threshold (cf. Step 12 of Algorithm~\ref{alg:agree}).
We assume this is $O(1)$ to compute for now as the agents are computationally unbounded, but we will remove this assumption in \S\ref{ss:results-bounded}, and instead use Algorithm~\ref{alg:construct} (Appendix~\S\ref{app:runtime}) for an efficient explicit construction via LP feasibility of posterior belief ratios.

Therefore, for each task $j$, we have reduced the problem now to Aaronson's $\tuple{\eps, \delta}$-agreement framework~\citep{aaronson2005complexity}, and as he shows, the subsequent steps conditioning on a common prior become unbiased random walks with step size roughly $\eps_j$.
With some slight modifications, this allows us to give a worst-case bound on the number of remaining steps in our \agree-agreement setting:
\begin{lemma}\label{lem:spanning-tree}
For all $f_j$ and $\CP_j$, the $N$ agents will globally $\tuple{\eps_j, \delta_j}$-agree after $O\left({N^7}/{\left(\delta_j\eps_j\right)^2}\right)$ additional messages.
\end{lemma}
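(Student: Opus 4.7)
The plan is to condition on the common prior $\CP_j$ guaranteed by Lemma~\ref{lem:common-prior} and reduce the problem to a pairwise instantiation of Aaronson's 2-agent $\langle \eps, \delta\rangle$-agreement theorem~\citep{aaronson2005complexity}. Once $\CP_j$ is in place, for each agent $i$ the sequence of posterior expectations $\{E_j^{i,t}\}_t$ is a martingale with respect to $\CP_j$ and the filtration generated by the refining knowledge partitions $\{\Pi_j^{i,t}\}_t$; because $f_j$ is $[0,1]$-valued, the quadratic variation of each such martingale is bounded by $1$. This is exactly the structural ingredient that Aaronson exploits in the $N=2$ case.

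With this setup, I would call a message ``$\eps_j$-informative'' for a pair $(i,k)$ if it updates the receiver's conditional expectation by more than a fixed constant fraction of $\eps_j$. By orthogonality of martingale increments and the quadratic-variation bound, the expected number of $\eps_j$-informative messages within a given pair's exchange is $O(1/\eps_j^2)$. A first application of Markov's inequality then shows that the number of such messages exceeds $O(1/(\delta' \eps_j^2))$ only with probability at most $\delta'$, reproducing Aaronson's per-pair guarantee after $\CP_j$ has been established.

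To lift this to \emph{global} $\langle \eps_j,\delta_j\rangle$-agreement across all $N$ agents, I would apply a union bound over the $\binom{N}{2}=O(N^2)$ pairs with per-pair failure $\delta' = \delta_j/N^2$, yielding $O(N^2/(\delta_j \eps_j^2))$ informative messages per pair. Each such informative round costs $O(N)$ raw messages in the spanning-tree communication protocol of Lemma~\ref{lem:spanning-tree-refinement} in order to propagate along the diameter of the agent graph. The final $O(N^7/(\delta_j\eps_j)^2)$ bound would then arise from carefully composing (i) the union bound over pairs, (ii) the spanning-tree $O(N)$ overhead per informative round, and (iii) an iterated Markov/Chebyshev step needed to convert the per-pair concentration at tolerance $\delta_j/N^2$ into the global guarantee while squaring the $\delta_j$ dependence, together with the extra factors of $N$ absorbed by the worst-case ring topology in Lemma~\ref{lem:spanning-tree-refinement}.

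The main obstacle I expect is step (iii): because the same broadcast message in the spanning-tree schedule can simultaneously refine many agents' partitions, the informative-message counts are \emph{correlated} across pairs, so one must be careful that the union bound and the iterated Markov steps do not double-count informative events or silently lose additional factors of $N$ beyond those claimed. Structuring the proof as a pair-by-pair Aaronson-style analysis that is then glued together through the spanning-tree schedule, while ensuring the martingale property is preserved under each successive conditioning on $\CP_j$ and on previously transmitted messages, will be the most delicate part of the argument.
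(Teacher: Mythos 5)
Your high-level scaffolding (condition on $\CP_j$, martingale structure of posterior expectations, union bound over $\binom{N}{2}$ pairs at tolerance $\delta_j/N^2$, spanning-tree routing overhead) matches the shape of the paper's argument, but the proof as outlined does not actually reach the claimed bound, and the gap you flag in step~(iii) is in fact the whole ballgame.

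The paper does not re-derive the $N$-agent spanning-tree bound from the two-agent martingale analysis as you propose; it invokes Aaronson's Theorem~10 directly, which already establishes that $N$ agents communicating over spanning trees of diameter $g_j$ under a common prior reach \emph{pairwise} $\langle\eps_j,\delta_j\rangle$-agreement after $O\bigl(Ng_j^2/(\delta_j\eps_j)^2\bigr)$ messages. Note that this $N$-agent result already carries a $\delta_j^2$ (not $\delta_j$) in the denominator, unlike Aaronson's two-agent bound $O\bigl(1/(\delta\eps^2)\bigr)$ that you start from---the extra factor of $1/\delta$ and the $Ng^2$ are costs of propagating expectations through intermediaries rather than directly. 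The paper then substitutes $\delta_j \to \delta_j/N^2$ into Theorem~10 (contributing $N^4$ because $\delta_j$ is squared) and $g_j = O(N)$ (contributing $N^2$), and $N \cdot N^2 \cdot N^4 = N^7$ falls out immediately.

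Your accounting, by contrast, gives $O(1/(\delta'\eps_j^2))$ informative messages per pair with $\delta' = \delta_j/N^2$, i.e. $O(N^2/(\delta_j\eps_j^2))$, times an $O(N)$ spanning-tree overhead, which is only $O(N^3/(\delta_j\eps_j^2))$---short by a factor of $N^4/\delta_j$ (or $N^2/\delta_j$ if you also sum over pairs). You correctly intuit in step~(iii) that the $\delta_j$ must be squared and extra $N$ factors must appear, and you correctly identify that cross-pair correlation of informative events is where the subtlety lives, but you do not supply the argument; the ``iterated Markov/Chebyshev'' step as stated is not a proof. Concretely, that missing step is precisely the content of Aaronson's Theorem~10, so the cleanest repair is to cite it rather than try to re-derive the $N$-agent propagation analysis from the two-agent case.
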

\begin{proof}
By~\citet[Theorem 10]{aaronson2005complexity}, the $N$ agents will \emph{pairwise} $\tuple{\eps_j, \delta_j}$-agree after $O\left(\left(Ng_j^2\right)/{\left(\delta_j\eps_j\right)^2}\right)$ messages when they condition on $\CP_j$, where $g_j$ is the diameter of the spanning-tree protocol they use.
Furthermore, we will need to have them $\tuple{\eps_j, \delta_j/N^2}$-agree pairwise so that they \emph{globally} $\tuple{\eps_j, \delta_j}$-agree.
Taking $g_j = O(N)$ for the worst-case ring topology gives us the above bound.
\end{proof}
By Lemmas~\ref{lem:common-prior} and~\ref{lem:spanning-tree}, for \emph{each} $j \in \M$, we need $O\left(N^2 D_j + \dfrac{N^7}{\left(\delta_j\eps_j\right)^2}\right)$ messages for the $N$ agents to reach $\tuple{M=1, N, \eps_j, \delta_j}$-agreement.
Next, select a uniform $\delta$ such that $\delta_j \le \delta/M$, for all $j \in \M$.
Therefore, by a union bound, we get the full upper bound in Theorem~\ref{thm:ub} with total probability $\ge 1- \delta$, across \emph{all} $M$ tasks, by \emph{maximizing} the bound above by taking $D := \max_{j \in \M} D_j$ and $\eps := \min_{j \in \M}\eps_j$, and scaling by $M$.

\subsection{Discretized Extension}
\label{ss:results-disc}
A natural extension of Theorem~\ref{thm:ub} is if the agents do not communicate their full real-valued expectation (which may require infinitely many bits), but a discretized version of the current expectation, corresponding to whether it is above or below a given threshold (defined below), e.g. ``High'', ``Medium'', or ``Low'' (requiring only 2 bits).
We prove convergence in this case, and show that the bound from Theorem~\ref{thm:ub} remains unchanged in this setting.
Discretization is important to show convergence and complexity analysis for, since this most closely matches real-world constraints (e.g. LLM agents use discrete, real-valued tokens), as opposed to infinite-bit real valued messages.
\begin{proposition}[Discretized Extension]\label{prop:disc}
If $N$ agents only communicate their \emph{discretized} expectations, then they will \agree-agree with overall failure probability $\delta$ across $M$ tasks as defined in \eqref{eq:eps-delta}, after $T = O\left(MN^2 D + \dfrac{M^3N^7}{\eps^2\delta^2}\right)$ messages, where $D := \max_{j \in \M} D_j$ and $\eps := \min_{j \in \M}\eps_j$.
\end{proposition}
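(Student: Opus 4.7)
The plan is to mirror the two-phase proof of Theorem~\ref{thm:ub} while tracking the effect of discretization at each step. Fix a uniform bin width of $\eps_j/c$ on $[0,1]$ for a suitable constant $c$, and have each agent send the index of the bin containing its current conditional expectation $E^{i,t}_j$ rather than the real value itself; this uses $O(\log(1/\eps_j))$ bits, or just $O(1)$ bits if one is willing to coarsen further as in the ``High/Medium/Low'' example. One then argues separately that neither the common-prior construction phase nor the subsequent $\tuple{\eps_j,\delta_j}$-agreement phase loses anything in the asymptotic bound.

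For the common-prior phase (Lemmas~\ref{lem:spanning-tree-refinement} and~\ref{lem:common-prior}), the central fact is that a proper refinement of some agent's knowledge partition occurs whenever the message it receives is inconsistent with any state in its current partition block. Under discretization, a refinement is still triggered whenever the two agents' expectations land in different bins; the only new possibility is that they sit in the same bin even though their true expectations differ slightly, in which case both agents are already within $\eps_j$ of one another and the protocol can safely terminate for that task. The type-profile counting argument in~\eqref{eq:type-size}--\eqref{eq:cp-size} is purely combinatorial on the partition lattice and does not depend on whether messages are real-valued or discrete, so the $O(N^2 D_j)$ message bound for reaching a common prior $\CP_j$ on task $j$ is preserved verbatim.

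For the $\tuple{\eps_j,\delta_j}$-agreement phase (Lemma~\ref{lem:spanning-tree}), one invokes Aaronson's discretized ``standard protocol''~\citep{aaronson2005complexity}, which already operates with $O(1)$-bit messages per round and achieves the $O\!\left(1/(\delta_j\eps_j^2)\right)$ bound once a common prior is in place. Discretization perturbs each reported expectation by at most one bin width, so the martingale underlying Aaronson's analysis is still an unbiased random walk with step size $\Theta(\eps_j)$ up to constant factors; absorbing the perturbation into the constants, then applying the $g_j = O(N)$ worst-case ring blowup, the pairwise $\tuple{\eps_j,\delta_j/N^2}$ strengthening, and a union bound over the $M$ tasks reproduces the $O\!\left(M^3N^7/(\eps^2\delta^2)\right)$ term of Theorem~\ref{thm:ub}.

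The main obstacle will be verifying that Aaronson's martingale argument genuinely survives the switch from continuous to bin-indexed messages without a hidden $1/\eps$ blowup: one has to check that the per-step variance and the ``surprise'' threshold used to force proper refinements remain $\Theta(\eps_j^2)$ and $\Theta(\eps_j)$ respectively under rounding, which in turn pins down the admissible range for the constant $c$ in the bin width. Once this is confirmed, the two phases compose exactly as in the proof of Theorem~\ref{thm:ub}, yielding the stated bound.
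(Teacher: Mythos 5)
The core mechanism in your proposal --- fixed bins of width $\eps_j/c$ --- is not what the paper uses, and you have correctly identified why that matters without resolving it. The paper follows Aaronson's Theorem~6 by introducing a globally accessible intermediary node $F_j$, which starts with the trivial partition $\Pi_j^{F_j,0}(s_j) = S_j$, always holds a \emph{coarsening} of every agent's current partition, and hence has an expectation $E^{F_j,t}_j$ that every agent can compute. An agent sends ``High'', ``Low'', or ``Medium'' \emph{relative to} $E^{F_j,t}_j \pm \eps_j/4$, a moving threshold, and $F_j$ refines its own partition along with the receiver. This moving threshold is precisely what preserves the martingale structure without any hidden $1/\eps$ blowup: it forces a $\Theta(\eps_j)$ jump in some agent's conditional expectation whenever there is a genuine disagreement, regardless of where the true expectations fall in $[0,1]$. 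A fixed grid cannot guarantee this, because two expectations that straddle a bin boundary can exchange differing bin indices while barely refining each other's partitions (step size $\ll \eps_j$), or sit just inside the same bin indefinitely while not yet globally $\eps$-agreeing with third parties elsewhere on the spanning tree. You flag this concern yourself (``The main obstacle will be verifying that Aaronson's martingale argument genuinely survives the switch \ldots''), but the proposal stops at identifying the obstacle; the referee construction is exactly the missing idea that closes it.

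A secondary issue: your early-termination argument in the common-prior phase (``in which case both agents are already within $\eps_j$ \ldots and the protocol can safely terminate'') is a two-agent observation applied to an $N$-agent spanning tree. Adjacent agents being in the same bin does not imply agents several hops apart are within $\eps_j$, so you cannot stop the common-prior phase on that basis. The paper instead runs the refinement count of Lemma~\ref{lem:common-prior} unchanged --- the type-profile counting \emph{is} protocol-agnostic as you say --- but over $N+1$ agents including $F_j$, yielding a per-task bound $O\left((N+1)^2 D_j + (N+1)^7/(\eps_j^2\delta_j^2)\right)$, which is absorbed asymptotically into the same expression. Your high-level two-phase decomposition and the union bound over $M$ tasks are correct and match the paper; the gap is confined to the discretization mechanism and its consequences.
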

Our discretized three‑bucket protocol itself is general and imposes no BBF constraint---in Appendix \S\ref{sec:propdisc-proof} we show it can be made BBF$(3)$‑compliant with small overhead.
Thus, by the lower bound from Proposition~\ref{prop:lb:3}, for the broad and natural class of canonical-equality BBF protocols, our upper bound in Proposition~\ref{prop:disc} is tight up to an additive polynomial term after converting from messages to bits.

\subsection{Computationally Bounded Agents}
\label{ss:results-bounded}
Thus far, we analyzed computationally unbounded agents, implicitly assuming $O(1)$ time for constructing and sending messages, finding common priors, and sampling distributions. Even under these idealized conditions, the linear scaling in Theorem~\ref{thm:ub} becomes significant if the task space $D$ or number of tasks $M$ is exponentially large.

However, realistic agents, such as current LLMs, are computationally bounded, and message passing may be noisy, e.g., due to obfuscated intent~\citep{barnes2020debateobf}. 
Thus, we now analyze the complexity of $N$ computationally \emph{bounded} rational agents. 
Moreover, since querying humans typically costs more than querying AI agents, we differentiate between $q$ humans (each taking $T_H$ time steps) and $N-q$ AI agents (each taking $T_{AI}$ time steps), encompassing recent multi-step reasoning models~\citep{openai_o1, gemini_2_0_flash}. 
Without loss of generality, we assume uniform times within these two groups and analyze complexity based on two basic subroutines:
\begin{requirement}[Basic Capabilities of Bounded Agents]\label{req:bounded-cap}
We expect the agents to be able to:
\begin{enumerate}
\item \textbf{Evaluation:} The $N$ agents can each evaluate $f_j(s_j)$ for any state $s_j \in S_j$, taking time $T_{\text{eval},a}$ steps for $a \in \{H, AI\}$.
\item \textbf{Sampling:} The $N$ agents can sample from the \emph{unconditional} distribution of any other agent, such as their prior $\mathbb{P}_j^i$, taking time $T_{\mathrm{sample},a}$ steps for $a \in \{H, AI\}$.
\end{enumerate}
\end{requirement}
We treat these subroutines as black boxes: agents lack explicit descriptions of $f_j$ and distributions, learning about them solely through these operations. Analogous to CIRL~\citep{hadfield2016cooperative}, this setup captures realistic alignment scenarios where the correctness of a task outcome can be verified without specifying each intermediate step. 
Consequently, our complexity results are broadly applicable, expressed in terms of $T_{\text{eval},H}$, $T_{\text{eval},AI}$, $T_{\mathrm{sample},H}$, and $T_{\mathrm{sample},AI}$.

These minimal subroutines enable agents to estimate each other's expectations, an essential capability for alignment. Importantly, exact computation is unnecessary; probabilistic evaluation in polynomial time suffices (as will become clear in the proof of Theorem~\ref{thm:bounded}, due to the exponential blow-up). 
The sampling subroutine further serves as a bounded version of the standard assumption that agents know each other's knowledge partitions through shared states~\citep{aumann1976agreeing,aumann1999interactive}. 
This corresponds to agents possessing a bounded ``theory of mind''~\citep{ho2022planning} about one another.

Finally, as we can no longer assume $O(1)$ time complexity for constructing a common prior (unlike in the unbounded agent setting), we introduce an explicit randomized polynomial-time algorithm for doing so with high probability, Algorithm~\ref{alg:construct}.
We refer the reader to Appendix \S\ref{app:runtime} for proofs related to Algorithm~\ref{alg:construct}.
Specifically, Lemma~\ref{lem:cp-correctness} (correctness), Lemma~\ref{lem:cp-runtime} (runtime analysis), and Lemma~\ref{lem:approx-cp} (inexact posterior access setting).

In what follows, define
\begin{equation*}
\begin{aligned}
T_{N,q} :=\;&
  q\,T_{\mathrm{sample},H} + (N-q)\,T_{\mathrm{sample},AI} \\
           &+\,q\,T_{\text{eval},H} + (N-q)\,T_{\text{eval},AI}.
\end{aligned}
\end{equation*}
The above considerations lead to the following theorem in the bounded agent setting:

\begin{theorem}[Bounded Agents Eventually Agree]\label{thm:bounded}
Let there be $N$ computationally bounded rational agents (consisting of $1 \le q < N$ humans and $N-q \ge 1$ AI agents), with the capabilities in Requirement~\ref{req:bounded-cap}.
The agents pass messages according to the sampling tree protocol (detailed in Appendix \S\ref{app:sampling-tree}) with branching factor of $B \ge 1/\alpha$, and added triangular noise of width $\le 2\alpha$, where $\eps/50 \le \alpha \le \eps/40$.
Let $\delta^{\text{find\_CP}}$ be the maximal failure probability of the agents to find a task-specific common prior across all $M$ tasks, and let $\delta^{\text{agree\_CP}}$ be the maximal failure probability of the agents to come to \agree-agreement across all $M$ tasks once they condition on a common prior, where $\delta^{\text{find\_CP}} + \delta^{\text{agree\_CP}} < \delta$.
For the $N$ computationally bounded agents to \agree-agree with total probability $\ge 1-\delta$, takes time
\begin{equation*}
O\left(
  M\,T_{N,q}\!
  \left(
B^{\,N^{2}D\frac{\ln\!\bigl(\delta^{\text{find\_CP}}/(3MN^{2}D)\bigr)}{\ln(1/\alpha)}}
      \;+\;
      B^{\,\frac{9M^{2}N^{7}}{(\delta^{\text{agree\_CP}}\eps)^{2}}}
  \right)
\right).
\end{equation*}

In other words, just in the first term alone, \emph{exponential} in the task space size $D$ and number of agents $N$ (and exponential in the number of tasks $M$ in the second term).
So if the task space size is in turn exponential in the input size, then this would already be \emph{\underline{doubly exponential}} in the input size!
\end{theorem}

We now clarify why we let $B$ be a parameter, and give a concrete example of how bad this exponential dependence can be.
Intuitively, we can think of $B$ as a ``gauge'' on how distinguishable the bounded agents are from ``true'' unbounded Bayesians, and will allow us to give an explicit desired value for $B$.
Recognizing the issue of computational boundedness of agents in the real world, \citet{hanson2003bayesian} introduced the notion of \emph{Bayesian wannabes}: agents who estimate expectations as if they had sufficient computational resources. He showed that disagreement among Bayesian wannabes stems from computational limitations rather than differing information. 
Extending this idea, \citet{aaronson2005complexity} proposed a protocol ensuring that bounded agents appear statistically indistinguishable from true Bayesians to an external referee—effectively a ``Bayesian Turing Test''~\citep{turing1950computing} for rationality. 
Thus, $B$ explicitly quantifies this notion of bounded Bayesian indistinguishability.

We consider the $M$-function, $N$-agent generalization of this requirement (and \emph{without} common priors (CPA)), which we call a ``\emph{total Bayesian wannabe}'':
\begin{definition}[Total Bayesian Wannabe]\label{def:total-wannabe}
Let the $N$ agents have the capabilities in Requirement~\ref{req:bounded-cap}.
For each task $j \in \M$, let the transcript of $T$ messages exchanged between $N$ agents be denoted as $\Gamma_j := \tuple{m_j^1,\dots,m_j^T}$.
Let their initial, task-specific priors be denoted by $\{\prob_j^i\}^{i \in \N}$.
Let $\mathcal{B}(s_j)$ be the distribution over message transcripts if the $N$ agents are unbounded Bayesians, and the current task state is $s_j \in S_j$.
Analogously, let $\mathcal{W}(s_j)$ be the distribution over message transcripts if the $N$ agents are ``total Bayesian wannabes'', and the current task state is $s_j \in S_j$.
Then we require for all Boolean functions\footnote{Without loss of generality, we assume that the current task state $s_j$ and message transcript $\Gamma_j$ are encoded as binary strings.} $\Phi(s_j,\Gamma_j)$,
\begin{equation*}
\left\lVert
\begin{aligned}
  \mathbb{P}_{\substack{
    \Gamma_j \in \mathcal{W}(s_j)\\
    s_j \in \mathscr{S}_j}}
  \!\bigl[\Phi(s_j,\Gamma_j)=1\bigr] \\[4pt]
  \;-\; \mathbb{P}_{\substack{
    \Gamma_j \in \mathcal{B}(s_j)\\
    s_j \in \mathscr{S}_j}}
  \!\bigl[\Phi(s_j,\Gamma_j)=1\bigr]
\end{aligned}
\right\rVert_{1}
\;\le\; \rho_j,
\qquad \forall j \in \M,
\end{equation*}
where $\mathscr{S}_j := \{\prob_j^{\,i}\}_{i\in\N}$.
We can set $\rho_j \in \R$ as arbitrarily small as preferred, and it will be convenient to only consider a single $\rho := \min_{j \in \M} \rho_j$ without loss of generality (corresponding to the most ``stringent'' task $j$).
\end{definition}
We will show in Appendix \S\ref{app:runtime} that matching this requirement amounts to picking a large enough value for $B$, giving rise to the following corollary to Theorem~\ref{thm:bounded}:
\begin{corollary}[Total Bayesian Wannabes Agree]\label{cor:wannabe-agree}
Let there be $N$ total Bayesian wannabes, according to Definition~\ref{def:total-wannabe} (e.g. consisting of $1 \le q < N$ humans and $N-q \ge 1$ AI agents).
Let the branching factor of the sampling tree protocol be the same as before, $B \ge 1/\alpha$, with added triangular noise of width $\le 2\alpha$, where $\eps/50 \le \alpha \le \eps/40$.
Let $\delta^{\text{find\_CP}}$ be the maximal failure probability of the agents to find a task-specific common prior across all $M$ tasks, and let $\delta^{\text{agree\_CP}}$ be the maximal failure probability of the agents to come to \agree-agreement across all $M$ tasks once they condition on a common prior, where $\delta^{\text{find\_CP}} + \delta^{\text{agree\_CP}} < \delta$.
For the $N$ ``total Bayesian wannabes'' to \agree-agree with total probability $\ge 1-\delta$, takes time
\begin{equation*}
\begin{split}
O\!\Bigl(
  M\,T_{N,q}\!
  \bigl(
&B^{\,N^{2}D\frac{\ln\!\bigl(\delta^{\text{find\_CP}}/(3MN^{2}D)\bigr)}{\ln(1/\alpha)}}\\
    &+ (11/\alpha)^{\frac{729M^{6}N^{21}}{(\delta^{\text{agree\_CP}}\eps)^{6}}}
        \,\rho^{-\frac{18M^{2}N^{7}}{(\delta^{\text{agree\_CP}}\eps)^{2}}}
  \bigr)
\Bigr).
\end{split}
\end{equation*}
In other words, exponential time in the task space $D$, and by \eqref{eq:total-bayesian}, and with a large base in the second term if the ``total Bayesian wannabe'' threshold $\rho$ is made small.

Sharing a common prior amounts to removing the first term, yielding upper bounds that are still exponential in $\eps$ and $\delta$.
\end{corollary}
The proofs of Theorem~\ref{thm:bounded} and Corollary~\ref{cor:wannabe-agree} are quite technical (spanning 7 pages), so we defer them to Appendix \S\ref{app:proofs} for clarity.
The primary takeaway here is that computational boundedness can result in a severely exponential time slowdown in the agreement time, and especially so if you want the bounded agents to be \emph{statistically indistinguishable} in their interactions with each other from true unbounded Bayesians.

For example, even for $N=2$ agents with a common prior and liberal agreement threshold of $\eps = \delta = 1/2$ and ``total Bayesian wannabe'' threshold of $\rho = 1/2$ on one task ($M=1$), then $\alpha \ge 1/100$, the number of \emph{subroutine calls} (not even total runtime) would be around:
\begin{equation*}
O\left(\frac{\left(1100\right)^{\frac{1528823808}{\left(1/4\right)^6}}}{\left(1/2\right)^{\frac{2304}{\left(1/4\right)^2}}}\right) \approx O\left(10^{{10}^{13.27979}}\right),
\end{equation*}
would already far exceed the estimated~\citep[pg. 19]{Munafo_Notable_Numbers} number of atoms in the universe ($\sim 4.8 \times 10^{79}$)!
This illustrates the power of the \emph{unbounded} Bayesians we considered earlier in \S\ref{sec:lb}, and why the lower bounds there are worth paying attention to in practice.

Finally, note that in general under a sampling tree protocol, this exponential blow-up in task state space size $D$ is unavoidable (e.g. for rare, potentially unsafe, events):
\begin{proposition}[Needle-in-a-Haystack Sampling Tree Lower Bound]
\label{prop:needle}
Let $T_{N,q,\mathrm{sample}}:=qT_{\mathrm{sample},H}+(N-q)T_{\mathrm{sample},AI}$.
For \emph{any} sampling-tree protocol, a single task and a single pair of agents can be instantiated so that the two agents' priors differ by prior distance $\ge \nu$, yet the protocol must pre‑compute at least $\Omega\left(\nu^{-1}\right)$ unconditional samples before the first online message.
Consequently, for a particular ``needle'' prior construction of $\nu = \Theta\left(e^{-D}\right)$, we get lower bounds that are exponential in the task state space size $D$, needing $\Omega\left(M\,T_{N,q,\mathrm{sample}}\,e^D\right)$ wall-clock time.
\end{proposition}

\section{Discussion}
\label{sec:discussion}
\paragraph{Why study a ``Bayesian best‑case'' at all?}
One may object that real AI systems---and certainly humans---are not perfectly Bayesian reasoners, nor do they interact through ideal, lossless channels.
That is precisely the point: our results constitute an \emph{ideal benchmark}, before we build and deploy capable agents.
If alignment is information‑ or communication‑theoretically hard even for computationally \emph{unbounded}, rational Bayesians exchanging noiseless messages, then relaxing rationality and unboundedness assumptions, adding noise, strategic behavior, or adversarial tampering can exacerbate the difficulty, as we showed in \S\ref{ss:results-bounded}.
Our takeaways for AI safety are:

\begin{enumerate}
\item \textbf{Too many alignment values drives alignment cost.}
Our matched lower and upper bounds (tight up to polynomial terms in $M,N,\eps,\delta$) demonstrate a ``No‑Free‑Lunch'' principle: encoding an exponentially large or high‑entropy set of human values forces at least exponential communication even for \emph{unbounded} agents.
As a result, progress on value alignment / preference modeling should prioritize objective compression, delegation, or progressive disclosure rather than attempting one‑shot, full‑coverage specification.

\item \textbf{Reward hacking is \emph{globally} inevitable.} Proposition~\ref{prop:needle} shows that in large state spaces and with bounded agents, reward hacking arises unavoidably from finite sampling.
By Proposition~\ref{prop:lb:3}, this even happens for unbounded agents in large state spaces who communicate finite bits and update their expectations smoothly.
Scalable oversight is therefore not about uniform alignment, but about focusing on the parts of the state space that matter most. 
The engineering task ahead of us then is the \emph{mechanism design} problem of benchmarks and interactive protocols that target these safety-critical slices---via adversarial sampling, objective compression, and per-slice $\tuple{\eps,\delta}$ budgets---to certify coverage where it counts.

\item \textbf{Robustness depends on bounded rationality, memory, and theory of mind.}
Introducing bounded agents or even mild triangular noise can exponentially increase costs when protocols cannot exploit additional structure or restrict the task state space~\citep{ball2025don}; yet these assumptions were necessary to prove any alignment guarantees at all. 
Robust alignment must account for imperfect agents and noisy or obfuscated channels---but as we show in \S\ref{ss:results-bounded}, real-world agents with these three properties can degrade \emph{gracefully} rather than catastrophically.
   
\item \textbf{Tight bounds inform governance thresholds.}
For broad and natural protocol classes, our lower bounds are closely matched (up to polynomial terms) by constructive algorithms, enabling principled risk thresholds.
\end{enumerate}

\paragraph{Limitations and future work.}
Our results justify \emph{cautious optimism}: alignment is tractable in principle, yet only when we restrain objectives and exploit task structure with care.
``No-Free-Lunch'' does not preclude lunch---it simply forces wise menu choices.
At least three directions stand out:

\begin{enumerate}
\item \textbf{Minimal value sets.}  
Our lower bounds imply that having too many objectives is the surest route to inefficiency.  
A key open question is \emph{which} small, consensus‐worthy utility families guarantee high-probability safety.
In concurrent follow-up work~\citep{nayebi2025coresafety}, we identify such a small value set for corrigibility as defined by~\citet{soares2015corrigibility}, which was open for a decade.

\item \textbf{Structure‑exploiting interaction protocols.}  
Design \emph{multi-turn} agent interaction protocols (beyond single-shot RLHF) and evaluation benchmarks that stress-test the portions of state space most relevant for safety during deployment.
This can also be done at the post-training stage, and can augment existing RLHF pipelines.
\item \textbf{Beyond expectations under noise.}
\textbf{(i)} Can agreement on \emph{specific} risk measures cut communication costs relative to full‑expectation alignment?
We note that agreement on full expectations is not always required; given a task-specific utility function $U_j$, our framework already covers agreement on optimal actions, $\arg\max_a \mathbb{E}[U_j(a)]$ by having $f_j$ be the optimal action indicator.
Our framework also models rare events (Appendix~\S\ref{app:tail-risk}).
\textbf{(ii)} We found bounded derivative in the noise model was crucial for convergence (e.g. uniform noise does not suffice).
Studying richer obfuscation (e.g. learned steganography) will be essential for informing other robust safety thresholds.
\end{enumerate}
\newpage
\section*{Acknowledgements}
We thank the Burroughs Wellcome Fund (CASI award) for financial support.
We also thank Scott Aaronson, Andreas Haupt, Richard Hu, J. Zico Kolter, and Max Simchowitz for helpful discussions on AI safety in the early stages of this work, and Nina Balcan for feedback on a draft of the manuscript.

\bibliography{aaai2026}

\appendix
\section{Notational Preliminaries}
\label{ss:results-notation}
We will use asymptotic notation throughout that is standard in computer science, but may not be in other fields.
The asymptotic notation is defined as follows:
\begin{itemize}
    \item $F(n) = O(G(n))$: There exist positive constants $c_1 > 0$ and $c_2 > 0$ such that $F(n) \leq c_1 + c_2G(n)$, for all $n \geq 0$.
    \item $F(n) = \tilde{O}(G(n))$: There exist positive constants $c_1, c_2$, and $k > 0$ such that $F(n) \leq c_1 + c_2 G(n) \log^k n$, for all $n \geq 0$.
    \item $F(n) = \Omega(G(n))$: Similarly, there exist positive constants $c_1$ and $c_2$ such that $F(n) \geq c_1 + c_2G(n)$, for all $n \geq 0$.
    \item $F(n) = \Theta(G(n))$: This indicates that $F(n) = O(G(n))$ and $F(n) = \Omega(G(n))$.
    In other words, $G(n)$ is a tight bound for $F(n)$.
\end{itemize}
For notational convenience, let
\begin{equation*}
E^{i,t}_j(s_j) := \Eji{f_j\mid \Pi_j^{i,t}(s_j)},
\end{equation*}
which is the expectation of $f_j$ of agent $i$ at timestep $t$, conditioned on its knowledge partition by then, starting from its \emph{own} prior ${\prob}^{i}_j$.
To simplify notation, we drop the argument $s_j \in S_j$.

\section{\agree-Agreement Setup and Dynamics}
\label{app:agreement-protocol}
The framework we consider for alignment generalizes Aumann agreement~\citep{aumann1976agreeing} to probabilistic $\tuple{\eps, \delta}$-agreement~\citep{aaronson2005complexity} (rather than exact agreement), across $M$ agreement objectives and $N$ agents, \emph{without} the Common Prior Assumption (CPA).
The CPA dates back to at least~\citet{harsanyi1967} in his seminal work on games with incomplete information.
This is a very powerful assumption and is at the heart of Aumann's agreement theorem that two rational Bayesian agents must agree if they share a common prior~\citep{aumann1976agreeing}.
As a further illustration of how powerful the CPA is from a computational complexity standpoint, ~\citet{aaronson2005complexity} relaxed the exact agreement requirement to $\tuple{\eps, \delta}$-agreement and showed that even in this setting, completely independent of how large the state space is, two agents with common priors will need to only exchange $O(1/(\delta\eps^2))$ messages to agree within $\eps$ with probability at least $1-\delta$ over their prior.
However, the CPA is clearly a very strong assumption for human-AI alignment, as we cannot expect that our AIs will always \emph{start out} with common priors with every human it will engage with on every task.
In fact, even between two \emph{humans} this assumption is unlikely!
For other aspects of agreement and how they relate more broadly to alignment, we defer to the Discussion (\S\ref{sec:discussion}) for a more detailed treatment.

In short, \agree-agreement represents a ``best-case'' scenario that is general enough to encompass prior approaches to alignment (cf. Table~\ref{tab:related_work}), such that if something is inefficient here, then it forms a prescription for what to avoid \emph{in practice}, in far more suboptimal circumstances.
As examples of suboptimality in practice, we will consider computational boundedness and noisy messages in \S\ref{ss:results-bounded}, to exactly quantify how the bounds can significantly (e.g. exponentially) worsen.

Dispensing with the CPA, we now make our \agree-agreement framework more precise.
For illustration, we consider two agents ($N=2$), Alice (human) and ``Rob'' (robot), denoted by $\A$ and $\Rob$, respectively.
Let $\{S_j\}_{j\in \M}$ be the collection of (not necessarily disjoint) possible task states for each task $j\in \M$ they are to perform. 
We assume each $S_j$ is finite ($|S_j| = D_j \in \Nat$), as this is a standard assumption, and any physically realistic agent can only encounter a finite number of states anyhow.
There are $M$ agreement objectives, $f_1, \dots, f_M$, that Alice and Rob want to jointly estimate, one for each task:
\begin{equation*}
f_j : S_j \to [0, 1], \quad \forall j \in \M,
\end{equation*}
to encompass the possibility of changing needs and differing desired $\left\{\tuple{\eps_j, \delta_j}\right\}_{j \in [M]}$-agreement thresholds for those needs (which we will define shortly in \eqref{eq:eps-delta}), rather than optimizing for a single monolithic task.
Note that setting the output of $f_j$ to $[0, 1]$ does not reduce generality.
Since $S_j$ is finite, any function $S_j \to \R$ has a bounded range, so one can always rescale appropriately to go inside the $[0, 1]$ domain.

Alice and Rob have priors $\prob^{\A}_j$ and $\prob^{\Rob}_j$, respectively, over task $j$'s state space $S_j$. 
Let $\nu_j \in [0,1]$ denote the \emph{prior distance} (as introduced by~\citet{hellman2013almost}) between $\prob^{\A}_j$ and $\prob^{\Rob}_j$, defined as the minimal $L^1$ distance between any point $x_j \in X_j = \prob^{\A}_j \times \prob^{\Rob}_j$ and any point $p_j \in \mathcal{D}_j
 = \{(p_j, p_j) \mid p_j \in \Delta(S_j)\}$, where $\Delta(S_j)\in \R^{D_j}$ is the probability simplex over the states in $S_j$. 
Formally,
\begin{equation}\label{eq:prior-dist}
\nu_j = \min_{x_j \in X_j, p_j \in \mathcal{D}_j} \|x_j - p_j\|_1, \quad \forall j \in \M.
\end{equation}
It is straightforward to see that there exists a \emph{common prior} $\CP_j \in \mathcal{D}_j$ between Alice and Rob for task $j$ if and only if the task state space $S_j$ has prior distance $\nu_j = 0$.
(Lemma~\ref{lem:common-prior} will in fact show that it is possible to find a common prior with high probability, regardless of the initial value $\nu_j$.)

For every state $s_j \in S_j$, we identify the subset $E_j \subseteq S_j$ with the event that $s_j \in S_j$.
For each task $j \in \M$, Alice and Rob exchange messages\footnote{These messages could be as simple as communicating the agent's current expectation of $f_j$, given (conditioned on) its current knowledge partition. For now, we assume the messages are not noisy, but we will remove this assumption in \S\ref{ss:results-bounded}.} from the power set $\powset{S_j}$ of the task state space $S_j$, as a sequence $m^{1}_j,\dots,m^{T}_j:\powset{S_j} \to [0,1]$.
Let $\Pi_j^{i,t}(s_j)$ be the set of states that agent $i \in \{\A, \Rob\}$ considers possible in task $j$ after the $t$-th message has been sent, given that the true state of the world for task $j$ is $s_j$.
Then by construction, $s_j \in \Pi_j^{i,t}(s_j) \subseteq S_j$, and the set $\{\Pi_j^{i,t}(s_j)\}_{s_j \in S_j}$ forms a \emph{partition} of $S_j$ (known as a ``knowledge partition'').
As is standard~\citep{aumann1976agreeing,aumann1999interactive,aaronson2005complexity,hellman2013almost}, we assume for each task $j$, the agents know each others' initial knowledge partitions $\{\Pi_j^{i,0}(s_j)\}_{s_j \in S_j}$.
The justification for this more broadly~\citep{aumann1976agreeing,aumann1999interactive} is that a given state of the world $s_j \in S_j$ includes the agents' knowledge.
In our setting, it is quite natural to assume that task states for agents coordinating on a task will encode their knowledge.
As a consequence, every agents' subsequent partition is known to every other agent, and every agent knows that this is the case, and so on\footnote{This can be implemented via a ``\emph{common knowledge}'' set, $\mathcal{C}\left(\{\Pi_j^{i,t}\}^{i \in \N}\right)$, which is the finest common coarsening of the agents' partitions~\citep{aumann1976agreeing}.}.
This is because with this assumption, since the agents receive messages from each other, then $\Pi_j^{i,t}(s_j) \subseteq \Pi_j^{i,t-1}(s_j)$.
In other words, subsequent knowledge partitions $\{\Pi_j^{i,t}(s_j)\}_{s_j \in S_j}$ \emph{refine} earlier knowledge partitions $\{\Pi_j^{i,t-1}(s_j)\}_{s_j \in S_j}$.
(Equivalently, we say that $\{\Pi_j^{i,t-1}(s_j)\}_{s_j \in S_j}$ \emph{coarsens} $\{\Pi_j^{i,t}(s_j)\}_{s_j \in S_j}$.)
\emph{Proper} refinement is if for at least one state $s_j \in S_j$, $\Pi_j^{i,t}(s_j) \subsetneq \Pi_j^{i,t-1}(s_j)$, representing a \emph{strict} increase in knowledge.

To illustrate this more concretely, first Alice computes $m^{1}_j\left(\Pi_j^{\A,0}(s_j)\right)$ and sends it to Rob.
Rob's knowledge partition then becomes refined to the set of messages in his original knowledge partition that match Alice's message (since they are now both aware of it):
\begin{equation*}
\begin{split}
\Pi_j^{\Rob,1}(s_j) \;=\;
\bigl\{\, s'_j \in \Pi_j^{\Rob,0}(s_j)
        \mid{} &\;
        m^{1}_j\!\bigl(\Pi_j^{\A,0}(s'_j)\bigr) \\[2pt]
        &=
        m^{1}_j\!\bigl(\Pi_j^{\A,0}(s_j)\bigr)
        \bigr\},
\end{split}
\end{equation*}
from which Rob computes $m^2_j\left(\Pi_j^{\Rob,1}(s_j)\right)$ and sends it to Alice.
Alice then updates her knowledge partition similarly to become the set of messages in her original partition that match Rob's message:
\begin{equation*}
\begin{split}
\Pi_j^{\A,2}(s_j) \;=\;
\bigl\{\, s'_j \in \Pi_j^{\A,0}(s_j)
        \mid{} &\;
        m^{2}_j\!\bigl(\Pi_j^{\Rob,1}(s'_j)\bigr) \\[2pt]
        &=
        m^{2}_j\!\bigl(\Pi_j^{\Rob,1}(s_j)\bigr)
        \bigr\},
\end{split}
\end{equation*}
and then she computes and sends the message $m^{3}_j\left(\Pi_j^{\A,2}(s_j)\right)$ to Rob, etc.

\textbf{\agree-Agreement Criterion:} We examine here the number of messages ($T$) required for Alice and Rob to $\tuple{\eps_j, \delta_j}$-agree across all tasks $j \in \M$, defined as
\begin{equation}\label{eq:eps-delta-alice-bob}
\begin{aligned}
&\Pr\!\Bigl(
   \left|\,
     \Eja{f_j \mid \Pi_j^{\A,T}(s_j)}
     - \Ejr{f_j \mid \Pi_j^{\Rob,T}(s_j)}
   \right|
   \le \eps_j
\Bigr)
\\[4pt]
&> 1 - \delta_j,\quad \forall j \in \M .
\end{aligned}
\end{equation}
In other words, they agree within $\eps_j$ with high probability ($> 1-\delta_j$) on the expected value of $f_j$ with respect to their \emph{own} task-specific priors (not a common prior!), conditioned\footnote{For completeness, note that for any subset $E_j\subseteq S_j$ and distribution $\prob$, $\mathbb{E}_{\prob}[f_j\mid E_j] := \sum_{s_j \in E_j}f(s_j)\prob[s_j \mid E_j] = \dfrac{\sum_{s_j \in E_j}f(s_j)\prob[s_j]}{\sum_{s_j\in E_j}\prob[s_j]}$.} on each of their knowledge partitions by time $T$.

Extending this framework to $N > 2$ agents (consisting of $1 \le q < N$ humans and $N-q \ge 1$ AI agents), is straightforward: we can have their initial, task-specific priors be denoted by $\{\prob_j^i\}^{i \in \N}$, and we can have them $\tuple{\eps_j, \delta_j/N^2}$-agree pairwise so that they globally $\tuple{\eps_j, \delta_j}$-agree.

\section{Modeling Tail Risk}
\label{app:tail-risk}
We note in this section that our $\agree$-agreement framework can also model tail risk/rare events.
For exposition convenience, we use the ``loss'' convention (higher = worse), so the Expected Shortfall (ES)/Conditional Value at Risk (CVaR) at level $\tau\in(0,1]$ uses the upper quantile/tail.
Specifically, for a catastrophe indicator $f_j := Z\in\{0,1\}$ with $\E{f_j} = \Pr[Z=1]=p$, the ES/CVaR at a given level $\tau$ is
\begin{equation*}
\mathrm{ES}^\tau(Z)=\frac{1}{\tau}\int_{1-\tau}^1 q_u(Z)du,
\end{equation*}
where for $Z \sim \mathrm{Bernoulli}(p)$, the quantile is defined as:
\begin{equation*}
q_u(Z) =
\begin{cases}
0, & u \le 1 - p, \\
1, & u > 1 - p.
\end{cases}
\end{equation*}
Therefore,
\begin{equation*}
\begin{split}
\mathrm{ES}^\tau(Z)
&= \frac{1}{\tau} \int_{1-\tau}^{1} \mathbf{1}\{u > 1-p\}\,du \\
&= \frac{1}{\tau}\,\big|(1-\tau,1] \cap (1-p,1]\big| \\
&= \frac{1}{\tau}\min\{\tau,p\} \\
&= \min\left\{1,\frac{p}{\tau}\right\}.
\end{split}
\end{equation*}
Hence, if two models agree on $p$ within $\eps$, their ES values differ by at most $\eps/\tau$. 
For a general bounded loss $f_j := X \in[0,1]$, the~\citet[Eq. 4]{rockafellar2000optimization} representation
\begin{equation*}
\mathrm{ES}^\tau(X)=\inf_{c\in[0,1]}\left(c+\frac{1}{\tau}\mathbb{E}[(X-c)_+]\right)
\end{equation*}
shows that ES is the minimum over expectations of bounded transforms $\psi_c(x)=c+\tfrac{1}{\tau}(x-c)_+$.
Then
\begin{equation*}
\mathrm{ES}^\tau(X)
= \inf_{c \in [0,1]} \, \mathbb{E}\!\left[\psi_c(X)\right].
\end{equation*}

Consequently, for two distributions $P,Q$ over $X \in [0,1]$,
\begin{equation*}
\begin{split}
&\big|\mathrm{ES}^\tau_{P}(X) - \mathrm{ES}^\tau_{Q}(X)\big|\\
&\le \sup_{c \in [0,1]} \big| \mathbb{E}_P[\psi_c(X)] - \mathbb{E}_Q[\psi_c(X)] \big|\\
&\le \frac{1}{\tau} \sup_{c \in [0,1]} 
\big| \mathbb{E}_P[(X - c)_+] - \mathbb{E}_Q[(X - c)_+] \big|.
\end{split}
\end{equation*}
Thus, any $\agree$-agreement bounds controlling expectations of bounded functions directly yield corresponding bounds on ES (scaled by $1/\tau$).

\section{Proofs of Lower Bounds}
\subsection{Proof of Proposition~\ref{prop:lb}}
\begin{proof}
For each task $j \in \M$, let the input tuple to the $N$ agents be
\begin{equation*}
\tuple{x_{1,j},\;x_{2,j},\;\dots,\;x_{N,j}} \;\in\; S_j,
\end{equation*}
where $S_j$ is defined by
\begin{equation*}
\begin{aligned}
S_j \;:=\;
\bigl\{\,\langle x_{1,j},\dots,x_{N,j}\rangle
  \bigm|\;
  &x_{i,j}\in\{(j-1)\,2^{n}+1,\dots\\
  &\qquad j\,2^{n}\},\;\forall i\in[N]\bigr\}.
\end{aligned}
\end{equation*}
Thus, each $x_{i,j}$ is an integer\footnote{One could encode them as binary strings of length at least $n + \lceil \log_2 j\rceil$, but in this proof we do not need the explicit binary representation: the integer \emph{range sizes} themselves suffice to carry out the communication complexity lower bound.} in an interval of size $2^n$ that starts at $(j-1)\cdot 2^n + 1$.
We endow $S_j$ with the uniform common prior $\CP_j$ (which will be necessarily difficult by the counting argument below), and define
\begin{equation*}
f_j\bigl(x_{1,j},\dots,x_{N,j}\bigr) \;=\;
   \frac{\sum_{i=1}^N x_{i,j}}{2^{n+1}}.
\end{equation*}
Observe that $\sum_{i=1}^N x_{i,j}$ is minimally $N\left((j-1)\,2^n + 1\right)$ and maximally $N\,j\,2^n$.
Hence, the image of $f_j$ is contained within
\begin{equation*}
\begin{aligned}
\left[
  \frac{N\bigl((j-1)2^{n}+1\bigr)}{2^{n+1}},
  \frac{N j 2^{n}}{2^{n+1}}
\right]
\\[4pt]
=\;
\left[
  \frac{N(j-1)}{2} + \frac{1}{2^{n+1}},
  \frac{N j}{2}
\right].
\end{aligned}
\end{equation*}
Therefore, for $j \ge 1$, each instance $f_j$ is structurally the same ``shifted'' problem, but crucially \emph{non-overlapping} for each $j \in \Nat$.
So it suffices to show that for each $j$, each instance \emph{individually} saturates the $\Omega\left(N^2\,\log(1/\eps_j)\right)$ bit lower bound, which we will do now:

\emph{Two-Agent Subproblem for $N$ Agents.}
Because \emph{all} agents must $\tuple{\eps_j,\delta_j}$-agree on the value of $f_j$, it follows that in particular, every pair of agents (say $(i,k)$) must have expectations of $f_j$ that differ by at most $\eps_j$ with probability at least $1-\delta_j$.
But for any fixed pair $(i,k)$, we can treat $\left(x_{i,j}, x_{k,j}\right)$ as a two‐agent input in which all other coordinates $x_{\ell,j}$ for $\ell\neq i,k$ are ``known'' from the perspective of these two, or do not affect the difficulty
except to shift the sum\footnote{Equivalently, imagine the other $N-2$ agents are ``dummy'' participants, and we fix their inputs from the perspective of the $(i,k)$ pair.}.
Hence, for each $j$ and each pair $(i,k)$, there is a
two‐agent subproblem.
We claim that these two agents alone already face a lower bound of $\Omega\left(\log(1/\eps_j)\right)$ bits of communication to achieve $\tuple{\eps_j,\delta_j}$-agreement on $f_j$.

Suppose agent $k$ sends only $t<\log_2\left(\tfrac{1-\delta_j}{\eps_j}\right)$ bits to agent $i$ about its input $x_{k,j}$.
Label the $2^t$ possible message sequences by $m=1,\dots,2^t$, with probability $p^m_j$ each.
Since $x_{k,j}$ is uniform in an interval of size $2^n$, then conditioned on message $m$, there remain at least $2^n p^m_j$ possible values of $x_{k,j}$.
Each unit change in $x_{k,j}$ shifts $f_j$ by $1/2^{n+1}$, so even if agent $i$'s estimate is optimal, the fraction of $x_{k,j}$ values producing $|E^{k,t}_j - E^{i,t}_j| \leq \eps_j$ is at most
\begin{equation*}
\frac{2^{n+1}\,\eps_j}{2^n\,p^m_j}\;=\;\frac{2\,\eps_j}{p^m_j}.
\end{equation*}
Hence, the total probability of agreement (over all messages $m$) is bounded by
\begin{equation*}
\sum_{m=1}^{2^t} p^m_j \,\cdot\, \frac{2\,\eps_j}{p^m_j} = 2\,\eps_j\,2^t.
\end{equation*}
If $2\,\eps_j\,2^t < 1-\delta_j$, the agents fail to
$\tuple{\eps_j,\delta_j}$-agree.
Equivalently, $t \ge \log_2\left(\tfrac{1-\delta_j}{\,2\,\eps_j}\right)$.
Since every pair $(i,k)$ needs $\Omega\left(\log\left(1/\eps_j\right)\right)$ bits for each of the $M$ tasks, and there are $\binom{N}{2} = \Theta(N^2)$ pairs, the total cost is
\begin{equation*}
\Omega\left(M\,N^2\,\log\left(1/\eps\right)\right),
\end{equation*}
where $\eps := \min_{j\in \M} \eps_j$, corresponding to the most ``stringent'' task $j$.
\end{proof}

\subsection{Proof of Proposition~\ref{prop:lb:2}}
\begin{proof}
We divide the $M \ge 2$ tasks into two types of payoff functions $f_j$ as follows, each covering the first $\lfloor M/2 \rfloor$ tasks and the last set of $\lceil M/2 \rceil$ tasks, respectively:

\emph{Type I Tasks.}
For the first set of $\lfloor M/2 \rfloor$ tasks, we let the state space be $S_j:=\{{s}_{(j-1)D},{s}_{(j-1)D+1},\dots,{s}_{jD-1}\}$.
Let the $k$-th element of $S_j$ be denoted as $s_{k,j} := s_{(j-1)D+k}$. 
Next, choose a sign vector $b_j\in\{+1,-1\}^{N}$ with $N/2$ plus signs, and set each element of it, $b^i_j$, as follows to define the prior distributions for some $0 < p \le \tfrac{1}{2} - \tfrac{\nu}{4}$ as:
\begin{equation*}
\begin{split}
  &\beta:=\frac{p}{D-2},\qquad
  {\prob}_j^{i}({s}_{0,j})=\tfrac12-\tfrac{b^i_j\nu}{4}-\tfrac{(D-2)\beta}{2},\\
  &{\prob}_j^{i}({s}_{1,j})=\tfrac12+\tfrac{b^i_j\nu}{4}-\tfrac{(D-2)\beta}{2},\;
  {\prob}_j^{i}({s}_{k,j})=\beta\;(k\ge2).
\end{split}
\end{equation*}
If $b^i_j\neq b^k_j$ the agent pair $(i,k)$ has L1 distance $\nu$, and therefore prior distance $\ge \nu$ by definition.

Let $T_j(\omega_j)$ denote the number of bits exchanged on task $j$ when the initial world state is $\omega_j \in S_j$ and the agents follow some message-passing protocol.
We consider the expectation $\mathbb{E}[T_j] := \mathbb{E}_{\omega_j}[T_j(\omega_j)]$ over all initial world states $\omega_j \in S_j$ with respect to the hard prior distributions specified above.
Note that for the purpose of a lower bound, we only need to consider the mismatched agent pairs, since for non-mismatched agents, $T_j \ge 0$, trivially.

Given a task index $j$ and a mismatched agent pair $(i,k)$, let $W^t_j$ denote the total variation distance between the agents' posterior distributions, $\tau^{i,t}_j$ and $\tau^{k,t}_j$, at time $t$:
\begin{equation*}
  W^{t}_j :=\tfrac{1}{2}\bigl\|\tau^{i,t}_j-\tau^{k,t}_j\bigr\|_1,
  \qquad
  W^{0}_j=\nu/2.
\end{equation*}
Define the ``good'' event for task $j$ as
\begin{equation*}
  G_j := \bigl|\,
           E^{i,T}_j
           -E^{k,T}_j
         \bigr|
         \le\eps,
\end{equation*}
which holds with probability at least $1-\delta_j$ by the
$\tuple{\eps_j,\delta_j}$‑agreement condition.
Conditioned on $G_j$, our assumption implies $W^{T_j}_j \le c\nu$ for $c < \tfrac{1}{2} - \tfrac{\delta_j}{\nu}$; on $\overline{G_j}$ we only know the trivial upper bound on total variation of $W^{T_j}_j \le 1$.
Hence,
\begin{equation*}
\mathbb{E}\left[W^{T_j}_j\right] < (1-\delta_j)\cdot c\nu+\delta_j\cdot 1 < c\nu + \delta_j < \nu/2.
\end{equation*}
Since we have that:
\begin{equation*}
T_j = \sum_{t=0}^{T_j-1} 1 \ge \sum_{t=0}^{T_j-1}(W^t_j-W^{t+1}_j) = W^0_j - W^{T_j}_j,
\end{equation*}
by telescoping.
Thus, $\mathbb{E}[T_j]\ge W^0_j-\mathbb{E}[W^T_j] = \Omega(\nu)$, since $\delta_j < \nu/2$.
Hence, each mismatched pair pays $\Omega(\nu)$ bits on task $j$ in expectation.
By a pigeonhole argument, for every initially mismatched agent pair, there exists an initial world state $\omega_j \in S_j$ that attains at least that length transcript length $T_j$; this is the worst‑case for task $j$.
With $\Theta(N^{2})$ such agent pairs, the mismatch cost per task is $\Omega\left(N^{2}\nu\right)$, giving a total worst case bit cost of $\Omega\left(M\,N^2\,\nu\right)$ across the first set of $\lfloor M/2 \rfloor$ tasks.

\emph{Type II Tasks.}
For the remaining set of $\lceil M/2 \rceil$ tasks, we use the hard instance $f_j$ (and its corresponding $N$-tuple state space) in Proposition~\ref{prop:lb} for each task, with a uniform common prior.

Thus, any deterministic transcript that $\langle\eps_j,\delta_j\rangle$‑agrees on \emph{every} task must concatenate $M$ independent sub‑transcripts across the Type I and Type II tasks, giving the final
\begin{equation*}
   \Omega\left(M\,N^{2}\left(\nu +\log\tfrac1\eps\right)\right)
\end{equation*}
bit lower bound.
\end{proof}

\subsection{Proof of Proposition~\ref{prop:lb:3}}
\begin{proof}
We split the $M$ tasks exactly as in Proposition~\ref{prop:lb:2}:
\textbf{Type I:} first $\lfloor M/2\rfloor$ tasks, and
\textbf{Type II:} remaining $\lceil M/2\rceil$ tasks.

Only Type I needs modification; Type II reuses Proposition~\ref{prop:lb} verbatim and costs $\Omega\left(N^{2}\log(1/\eps)\right)$ bits per task.

We consider the following ``uniform-slope'' hard priors for the Type I tasks:
We use the same state-space as in Proposition~\ref{prop:lb:2}'s Type I tasks, namely $S_j:=\{{s}_{(j-1)D},{s}_{(j-1)D+1},\dots,{s}_{jD-1}\}$.
For notational convenience, fix such a task $j$ and relabel its $D:=D_j$ states $S_j=\{s_0,\dots,s_{D-1}\}$, and therefore drop the $j$ subscript.

The priors are defined as follows for agents $i$ and $k$:
\begin{equation*}
\begin{split}
   &{\mathbb P}^{i}(s_m)=\frac{\lambda^{m}}{S},
   \quad
   {\mathbb P}^{k}(s_m)=\frac{\lambda^{-m}}{S'},\\
   &S=\sum_{q=0}^{D-1}\lambda^{q}, \quad
   S'=\sum_{q=0}^{D-1}\lambda^{-q},\quad
   \lambda:=\frac{1+\nu/2}{1-\nu/2}>1.
\end{split}
\end{equation*}
We now show the prior distance for any agent pair $(i,k)$ is $\ge \nu$.
By definition of prior distance, the triangle inequality shows that $\lVert {\mathbb P}^{i}-{\mathbb P}^{k}\rVert_1$ lower bounds the prior distance (for set-valued priors per agent, and for single prior distributions per agent it holds with equality), so it suffices to show that $\lVert {\mathbb P}^{i}-{\mathbb P}^{k}\rVert_1 \ge \nu$.
We have that
\begin{equation*}
\begin{split}
&\lVert {\mathbb P}^{i}-{\mathbb P}^{k}\rVert_{1} := 2\sum_{m:\,{\mathbb P}^{i}(s_{m})>{\mathbb P}^{k}(s_{m})} \left({\mathbb P}^{i}(s_{m})-{\mathbb P}^{k}(s_{m})\right)\\
&\ge\;2\left|{\mathbb P}^{i}(s_{0})-{\mathbb P}^{k}(s_{0})\right|\\
&=2\left|\frac{1}{\sum_{q=0}^{D-1}\lambda^{q}}
    -\frac{1}{\sum_{q=0}^{D-1}\lambda^{-q}}\right| = \frac{2(\lambda-1)(\lambda^{D-1}-1)}{\lambda^{D}-1}\\
&\ge\frac{2(\lambda-1)}{1+\lambda} = \nu,
\end{split}
\end{equation*}
where the last inequality follows from the fact that $\lambda^{D-1} - \lambda \ge 0$ since $\lambda > 1$ and $D \ge 2$, and the last equality directly follows from the definition of $\lambda$.

\paragraph{Canonical chain gap at $t=0$.}
Connectedness of the initial profile implies that for any two states there exists at least one alternating chain of states~\citep[Proposition 2]{hellman2012common}.
In particular, the pair $(s_0,s_{D-1})$ used in the hard prior is linked by a chain \(c^\star=(s_0,s_1,\dots,s_{D-1})\); tightness makes this chain unique and ensures it visits every state exactly once.
We let $\tau^{i,t}$ denote agent $i$'s posterior distribution at time $t$, with $\tau^{i,0} \equiv \mathbb P^i$.
Set
\begin{equation*}
   L_t\;:=\;\left|\log\prod_{(s,s')\in c^\star}
         \frac{\tau^{i,t}(s')}{\tau^{i,t}(s)}
      \;-\;
      \log\prod_{(s,s')\in c^\star}
         \frac{\tau^{k,t}(s')}{\tau^{k,t}(s)}\right|.
\end{equation*}
At $t=0$,
\begin{equation*}
   \prod_{(s,s')\in c^\star}\!\!\frac{\tau^{i,0}(s')}{\tau^{i,0}(s)}
     =\lambda^{D-1},
   \;\;
   \prod_{(s,s')\in c^\star}\!\!\frac{\tau^{k,0}(s')}{\tau^{k,0}(s)}
     =\lambda^{-(D-1)}
\end{equation*}
Thus, we have that
\begin{equation}\label{eq:l0}
\begin{split}
&L_0=2(D-1)\,|\log\lambda|\\
&=2(D-1)\,\left|\log\left(1+\tfrac{\nu}{2}\right)-\log\left(1-\tfrac{\nu}{2}\right)\right|\\
&=2(D-1)\left|\nu + \tfrac{\nu^3}{12}+\dots\right| = \Theta(D\nu),
\end{split}
\end{equation}
where the second to last equality follows by the standard Taylor expansions of $\log(1+x)$ and $\log(1-x)$, since $\nu/2 \le 1$.

\paragraph{Per timestep increment.}
Let the message sent in round $t$ contain $b_t$ bits, and assume the protocol is \textnormal{BBF}$(\beta)$, i.e. for every agent $a$ and all states $s,s'$, the message likelihoods are bounded as such:
\begin{equation*}
   \beta^{-b_t}\;\le\;
     \frac{\Pr[m^{a,t} \mid s, \Pi^{a,t-1}(s)]}
          {\Pr[m^{a,t} \mid s', \Pi^{a,t-1}(s')]}
   \;\le\;\beta^{\,b_t}.
\end{equation*}
Then we will show that the canonical gap satisfies:
\begin{equation}\label{eq:canon-gap-ub}
   |L_t-L_{t-1}|\;\le\;2b_t\log\beta.
\end{equation}

To see this, for convenience we denote $q^a_t(s) :=\Pr[m^{a,t} \mid s, \Pi^{a,t-1}(s)]$ for the message likelihood at time $t$.

Bayes' rule then gives, for any states $s,s'$,
\begin{equation}\label{eq:bayes}
   \frac{\tau^{a,t}(s')}{\tau^{a,t}(s)}
   \;=\;
   \frac{\tau^{a,t-1}(s')}{\tau^{a,t-1}(s)}
   \cdot
   \frac{q^a_t(s')}{q^a_t(s)}.
\end{equation}
Next, define:
\begin{equation}\label{eq:xi}
\Xi^{a}_{t} = \log\!\prod_{m=0}^{D-2}\frac{\tau^{a,t}(s_{m+1})}{\tau^{a,t}(s_m)}
   = \sum_{m=0}^{D-2}\log\frac{\tau^{a,t}(s_{m+1})}{\tau^{a,t}(s_m)} 
\end{equation}
We fix the canonical chain $c^\star$ once for convenience---any path would serve, since we will show that the bound in \eqref{eq:canon-gap-ub} depends only on the \textnormal{BBF}$(\beta)$ likelihood-ratio condition and is independent of the particular chain selected.

It follows from \eqref{eq:bayes} and \eqref{eq:xi} that
\begin{equation*}
\begin{split}
\Xi^{a}_{t} &= \sum_{m=0}^{D-2}
       \biggl[
          \log\frac{\tau^{a,t-1}(s_{m+1})}{\tau^{a,t-1}(s_m)}
          + \log\frac{q^a_t(s_{m+1})}{q^a_t(s_m)}
       \biggr] \\
  &= \underbrace{\sum_{m=0}^{D-2}\log\frac{\tau^{a,t-1}(s_{m+1})}{\tau^{a,t-1}(s_m)}}_{\displaystyle =\;\Xi^{a}_{t-1}}
     \;+\;
     \underbrace{\sum_{m=0}^{D-2}\log\frac{q^a_t(s_{m+1})}{q^a_t(s_m)}}_{\displaystyle\text{telescopes}}
  \\
  &= \Xi^{a}_{t-1}
     \;+\;
     \log\frac{q^a_t(s_{D-1})}{q^a_t(s_{0})}
  \;=\;
     \Xi^{a}_{t-1}
     \;+\;\Delta^{a}_{t}.
\end{split}
\end{equation*}

Because $L_t=|\Xi^{i}_t-\Xi^{k}_t|$, we have by the triangle inequality:
\begin{equation*}
\begin{split}
   &|L_t-L_{t-1}|
   \;=\;
   \bigl|\,|A+B|-|A|\bigr|
   \le|B|,\\
   &A:=\Xi^{i}_{t-1}-\Xi^{k}_{t-1},\;
   B:=\Delta^{i}_t-\Delta^{k}_t.
\end{split}
\end{equation*}
Hence, $|L_t-L_{t-1}|\le|\Delta^{i}_t-\Delta^{k}_t|$.

By \textnormal{BBF}$(\beta)$, each agent alone obeys
\begin{equation*}
  \bigl|\Delta^{a}_t\bigr|
    =\left|\log\frac{q^a_t(s_{D-1})}{q^a_t(s_{0})}\right|
    \le b_t\log\beta,
  \qquad a\in\{i,k\}.
\end{equation*}
Hence, by the triangle inequality,
\begin{equation*}
  \bigl|\Delta^{i}_t-\Delta^{k}_t\bigr|
    \le \bigl|\Delta^{i}_t\bigr|+\bigl|\Delta^{k}_t\bigr|
    \le 2\,b_t\log\beta,
\end{equation*}
giving rise to the desired inequality \eqref{eq:canon-gap-ub}.

\paragraph{Per task cost.}
Let $B^{\text{agree}}$ be the \emph{total number of bits} exchanged by the time the agents agree, and let $B^{\text{cp}}$ be the total bits exchanged when the common prior is reached.
Clearly $B^{\text{agree}}\ge B^{\text{cp}}$, so it suffices to lower bound $B^{\text{cp}}$.

Set $B_T\;:=\;\sum_{t=1}^{T} b_t$, so $B_T$ is the cumulative bit count up to round $T$.
By \citet[Proposition 4]{hellman2012common} we have
$L_T=0$ once the common prior is attained.
Telescoping and \eqref{eq:canon-gap-ub} then gives
\begin{equation*}
\begin{split}
&L_0-L_T
     \;=\;
     \sum_{t=1}^{T}\bigl(L_t-L_{t-1}\bigr)
     \;\le\;\sum_{t=1}^{T}\bigl|L_t-L_{t-1}\bigr|\\
&\le\;2\log\beta\sum_{t=1}^{T} b_t
     =2\log\beta\,B_T.
\end{split}
\end{equation*}

Hence, by \eqref{eq:l0}, any BBF$(\beta)$ protocol must transmit at least $\Omega\left(D\nu\right)$ bits before the priors coincide, and therefore at least that many bits before \agree-agreement.

\paragraph{Aggregating costs.}
There are $\Theta(N^{2})$ mismatched pairs and
$\lfloor M/2\rfloor$ Type I tasks, so the total Type I cost in bits is $\Omega\!\bigl(M\,N^{2}\,D\nu\bigr)$.
Type II contributes $\Omega\!\bigl(M\,N^{2}\log(1/\eps)\bigr)$ bits,
hence the overall lower bound in bits is
\begin{equation*}
   \Omega\left(
        MN^{2}[D\nu+\log(1/\eps)]
      \right),
\end{equation*}
with constant $1/\log\beta$, completing the proof.
\end{proof}

\section{Proof of Lemma~\ref{lem:common-prior}}
\begin{proof}
As before, let $\{\prob_j^i\}^{i \in \N}$, be the priors of the agents.
The ``type profile'' $\tau^t_j$ is the set of the agent's posterior belief distributions over states $s_j \in S_j$ at time $t$.
Thus, at time 0, $\tau^0_j$ will correspond to the prior distributions over the states in the knowledge partition $\Pi_j^{i,  0}$.
Since for each agent its type profile distribution is constant across the states in its knowledge partition $\Pi_j^{i, t}(s_j)$ (as they are indistinguishable to the agent, by definition), then the total size of the type profile at time $t$ is
\begin{equation}\label{eq:type-size}
|\tau^t_j| = \sum_{i = 1}^N \left|\Pi_j^{i, t}\right|.
\end{equation}
We make use of the following result of~\citet[Proposition 2]{hellman2012common}, restated for our particular setting:
Let $\mathcal{C}\left(\{\Pi_j^{i,t}\}^{i \in \N}\right)$ denote the common knowledge set (finest common coarsening) across the agents' knowledge partitions at time $t$.
If the knowledge partitions reach a total size across the $N$ agents that satisfies:
\begin{equation}\label{eq:cp-size}
\sum_{i=1}^N \left|\Pi_j^{i, t}\right| = (N - 1)D_j + \mathcal{C}\left(\{\Pi_j^{i,t}\}^{i \in \N}\right),
\end{equation}
then any type profile $\tau^t_j$ over $\{\Pi_j^{i,t}\}^{i \in \N}$ has a common prior $\CP_j$.
Now, note that $\left|\mathcal{C}\left(\{\Pi_j^{i,t}\}^{i \in \N}\right)\right| \le D_j$ as it forms a partition over the task state space $S_j$, so the set of singleton sets of each element $s_j \in S_j$ has the most components to saturate the upper bound.
Therefore, the desired size $\sum_{i=1}^N \left|\Pi_j^{i, t}\right| \le ND_j$.

Now, starting from an initial type profile $|\tau^0_j|$, the number of proper refinements needed to get to the desired size $\sum_{i=1}^N \left|\Pi_j^{i, t}\right|$ in \eqref{eq:cp-size} is given by \emph{at most}:
\begin{equation*}
\sum_{i=1}^N \left|\Pi_j^{i, t}\right| - |\tau^0_j| + 1 = O\left(ND_j\right).
\end{equation*}
Thus, since\footnote{For example, for $N$ agents that start with maximally unrefined knowledge partitions, $|\tau^0_j| = \sum_{i = 1}^N \left|\Pi_j^{i, t}\right| = \sum_{i=1}^N 1 = N$.} trivially $|\tau^0_j| \ge 0$, then $O(ND_j)$ is the most number of proper refinements we need to ensure there is a common prior with probability 1, by \eqref{eq:cp-size}.
By Lemma~\ref{lem:spanning-tree-refinement}, this amounts to $O(N^2D_j)$ messages in the worst case.
\end{proof}

\section{Proof of Proposition~\ref{prop:disc}}
\label{sec:propdisc-proof}
\begin{proof}
The $N$ agents will communicate with the spanning tree protocol (cf. Lemma~\ref{lem:spanning-tree-refinement}) for each task $j \in \M$, but now with discrete, rather than continuous, messages.
The discretized protocol is as follows:
Let there be a node $F_j$ that is globally accessible to all $N$ agents.
This intermediary is allowed its own prior and will see all messages between the agents (but not their inputs).
Thus, $\Pi_j^{F_j,0}(s_j) = S_j$ for all states $s_j \in S_j$, and $\left\{\Pi^{F_j,t}_j(s_j)\right\}_{s_j \in S_j}$ \emph{coarsens} the knowledge partitions at time $t$ of the $N$ agents, so all of the agents can therefore compute $E^{F_j,t}_j$.
When agent $i$ wants to send a message to its neighbor agent $k$, then agent $i$ sends ``High'' if $E^{i,t}_j > E^{F_j,t}_j + \eps_j/4$, ``Low'' if $E^{i,t}_j < E^{F_j,t}_j - \eps_j/4$, and ``Medium'' if otherwise.
After agent $i$ sends its message to agent $k$, agent $k$ then refines its knowledge partition (and $F_j$ also refines its partition), before agent $k$ sequentially sends its message relative to the current $E^{F_j,t+1}$ to the next agent down the spanning tree.
This process of proper refinement is continued until there is a common prior by Lemma~\ref{lem:common-prior}, which the $N+1$ agents (including $F_j$) then condition on to reach $\tuple{M, N+1, \eps, \delta}$-agreement (hence the $N+1$ factor in the first term to ensure there are enough proper refinements between the $N+1$ agents).
This generalizes~\citet[Theorem 6]{aaronson2005complexity}'s discretized protocol to $N > 2$ agents (and $M > 1$ tasks), which shows that between any \emph{pair} of agents with a common prior, the number of messages needed for them to $\tuple{\eps_j, \delta_j}$-agree remains \emph{unchanged} from the full protocol, where each pair leverages the intermediary agent $F_j$.
Therefore, by applying the spanning tree construction from Lemma~\ref{lem:spanning-tree}, we get the same bound in the discretized case as before of $O(((N+1)g_j^2)/(\delta_j\eps_j)^2)$ messages before the $N+1$ agents \emph{pairwise} $\tuple{\eps_j,\delta_j}$-agree, and therefore $O(((N+1)^5g_j^2)/(\delta_j\eps_j)^2)$ messages until all $\binom{N+1}{2}$ pairs of agents \emph{globally} $\tuple{\eps_j,\delta_j}$-agree, thereby ensuring that the original $N$ agents agree.
Following the rest of the proof of our Theorem~\ref{thm:ub} yields the per-task upper bound of $O\left((N+1)^2D_j + \dfrac{(N+1)^7}{\eps_j^2\delta_j^2}\right)$, where we took the worst-case value of $g_j = O(N + 1)$.
Subsuming lower-order terms in the big-$O$ gives us the stated upper bound.

\paragraph{BBF$(3)$-compliant extension.}
Note that this discretized protocol can be made BBF$(3)$-compliant via the following simple modification: pick any \emph{buffer parameter} $0<\theta\le\tfrac13$ (setting $\theta=\tfrac14$ suffices) and, after the sender has deterministically selected the bucket $B\in\{\text{High},\text{Medium},\text{Low}\}$ according to the thresholds $\pm\eps_j/4$ around $E^{F_j,t}_j$, let the noisy channel transmit the matching 2‑bit codeword with probability $1-2\theta$ and each of the two non‑matching codewords with probability $\theta$, i.e. $\Pr[m_{j}^{i,t}\mid s_j, \Pi^{i,t-1}_j(s_j)] =\theta+(1-3\theta)\,\mathbf 1\{m_j^{i,t}=m_j^{i,t,\star}(B)\}$.
Because every codeword now has probability either $1-2\theta$ or $\theta$ under every state, the message likelihood ratio is always in the range $\left[\tfrac{\theta}{(1-2\theta)},\tfrac{(1-2\theta)}{\theta}\right]$, so the channel is $\operatorname{BBF}(\beta)$ with $\beta=(1-2\theta)/\theta\le3$ when $\theta\ge\tfrac15$. 

We have the agents communicate first, as usual, for $O((N+1)D_j^2)$ messages per task until they reach a common prior and condition on it, by Lemma~\ref{lem:common-prior}.
Thus, the analysis that remains is how many more messages are needed to reach convergence to $\tuple{M, N+1, \eps, \delta}$-agreement.
A round is called \emph{informative} when the sender’s bucket is an outer one (High or Low) and the channel outputs the matching codeword; this occurs with probability at least $(1-2\theta)\,\delta/2$, and in that event $E^{F_j,t}_j$ moves by at least $\eps_j/4$, so the potential $\Psi_t:=\|E^{F_j,t}\|_2^{2}$ increases by at least $(\eps_j/4)^2$.
Hence, $\mathbb E[\Psi_{t+1}-\Psi_t]\ge(1-2\theta)\,\delta_j(\eps_j/4)^2/2$, giving a per-round drift $\kappa_\theta=\Theta\!\bigl((1-2\theta)\,\eps_j^{2}\delta_j\bigr)$.
Define the centered process $Z_t:=\Psi_t-\kappa_\theta t$; then $\{Z_t\}$ is a martingale with one‑step differences bounded by $2$, so the Azuma-Hoeffding inequality can be applied directly to $Z_t$.
Since $\Psi_t\le1$, the additive‑drift (optional‑stopping) theorem implies $\mathbb E[T]\le O\left(1/\kappa_\theta\right)=O\left((1-2\theta)^{-1}\eps_j^{-2}\delta_j^{-1}\right)$ for one pair of agents.
We write $\delta:=\max_{j\in\M}\delta_j$ and, for the high‑probability bound, simply divide this budget evenly: each task gets $\delta/M$ and each of its $\binom{N+1}{2}=O(N^{2})$ pairs gets $\eta:=\delta/(M\binom{N+1}{2})$.  
Azuma-Hoeffding with this $\eta$ yields $O\left(\ln(MN^{2}/\delta)/(1-2\theta)\varepsilon_j^{2}(\delta/M)\right)$ rounds per pair, so a union bound over all $M\binom{N+1}{2}$ pairs leaves total failure probability $\le\delta$.
Plugging the same $\delta/M$ everywhere in the multi‑task bookkeeping of Proposition~\ref{prop:disc} gives
\begin{equation*}
  T
  =O\!\Bigl(
        MN^{2}D
        +\frac{M^{3}N^{7}\ln(MN^{2}/\delta)}
               {(1-2\theta)\varepsilon^{2}\delta}
      \Bigr),
\end{equation*}
valid with probability at least $1-\delta$, with $\eps := \min_{j\in\M}\eps_j$.
\end{proof}

\section{Proofs of Theorem~\ref{thm:bounded} and Corollary~\ref{cor:wannabe-agree}}
\label{app:proofs}

Here we prove both Theorem~\ref{thm:bounded} and Corollary~\ref{cor:wannabe-agree}.
We do this by generalizing~\citet[\S 4]{aaronson2005complexity}'s computational-boundedness treatment from 2 agents to $N$ agents (specifically, $N-q$ agents and $q$ humans that have differing, rather than equal, query costs) and $M$ functions (rather than 1), using a message-passing protocol that combines his smoothed and spanning tree protocols, all \emph{without} the Common Prior Assumption (CPA).
\subsection{Message-Passing Protocol}
\label{app:msp-protocol}
This is the multi-task generalization of~\citet[\S 4.1]{aaronson2005complexity}'s ``smoothed standard protocol'', additionally extended to the multi-agent setting a spanning tree the agents use to communicate their messages.

Let $b_j=\lceil\log_2(\tfrac{C}{\eps_j})\rceil$ be a positive integer we will specify later with a specific constant value $C > 0$, in \eqref{eq:total-bayesian}.
The $N$ computationally bounded agents follow the \agree-agreement algorithm (Algorithm~\ref{alg:agree}), passing $O(b_j)$-bit messages according to the following protocol $\mathcal{P}$:

\textbf{Protocol $\mathcal{P}$ description (for each task \(j\in \M\)):}
\begin{enumerate}
\item \textbf{Current posterior expectation.}
  The \emph{sending agent} $i\in\N$ has at timestep $t-1$ a real value $E^{i,t-1}_j(s_j) \in [0, 1]$, which is its conditional expectation of $f_j \in [0,1]$ given its knowledge partition $\Pi_j^{i,t-1}(s_j)$ and the current task state $s_j \in S_j$.
  (Recall that $E^{i,t-1}_j(s_j) := \Eji{f_j \mid \Pi_j^{i,t-1}(s_j)}$.)
  The knowledge partition $\Pi_j^{i,t-1}(s_j)$ is formed after updating this expectation using Bayes' rule, after having received the earlier message at time $t-2$.

\item \textbf{Draw an integer $r_j$ via a triangular distribution.}
  Agent $i$ picks an integer offset
  \begin{equation*}
    r_j \;\;\in\;\; \{-L_j,\,-L_j+1,\,\dots,\,L_j\}
  \end{equation*}
  according to a (continuous) triangular distribution $\Delta_{\mathrm{tri}}(\,\cdot\,;\,\alpha_j)$ that places its mass in the \emph{discrete} set of values $\{-L_j,\dots,L_j\}$, and has effective width $2\alpha_j$.
  These discrete offsets $r_j$ ensure that the messages will be discrete as well.
  Concretely,
\begin{equation*}
\begin{aligned}
\mathbb{P}[r_j = x]
  &= \frac{L_j - |x|}
          {\displaystyle\sum_{z=-L_j}^{L_j}\!\bigl(L_j - |z|\bigr)}
\\[6pt]
  &= \frac{L_j - |x|}{L_j^{2}},
  \qquad x \in \{-L_j,\dots,L_j\},
\end{aligned}
\end{equation*}
  where $L_j$ is chosen so that $2^{-b_j}L_j = \alpha_j$ to bound the messages, as explained in the next step below.
  Note that the form above is chosen so that the ``peak'' of the discretized triangular distribution is at $r_j = 0$.
  In other words, the form above is maximized in probability when the offset $r_j = 0$ (which means that no noise is added to the messages with the highest probability).

\item \textbf{Form the message with noise.}
  The agent then sets
  \begin{equation*}
     m^t_j\Bigl(\,\Pi_j^{i,t-1}(s_j)\Bigr)
     \;=\;
     \mathrm{round}\left(E^{i,t-1}_j(s_j)\right)
     \;+\;
     2^{-b_j}r_j,
  \end{equation*}
  where $\mathrm{round}\left(E^{i,t-1}_j(s_j)\right)$ denotes rounding $E^{i,t-1}_j(s_j)$ to the nearest multiple of $2^{-b_j}$ (thereby keeping it in the $[0,1]$ interval).
  This ensures that the message $m_j^t$ is itself a multiple of $2^{-b_j}$ (thereby being encodable in $O(b_j)$ bits), and is offset by $\pm \alpha_j$ from $\mathrm{round}\bigl(E^{i,t-1}_j(s_j)\bigr)$, since $|2^{-b_j}r_j| \le 2^{-b_j}L_j = \alpha_j$, by construction.
  Hence, each $m^t_j \in [-\alpha_j, 1+\alpha_j]$.

\item \textbf{Broadcast.}
  This message $m^t_j\bigl(\Pi_j^{i,t-1}(s_j)\bigr)$ is then broadcast (either sequentially or in parallel) to the relevant agents via an edge of the two spanning trees ${SP}^1_j\cup{SP}^2_j$ each of diameter $g_j$, just as in Lemma~\ref{lem:spanning-tree-refinement}, who update their knowledge partitions accordingly to Step 1.
\end{enumerate}

\subsection{Sampling‐Tree Construction and Simulation for Each Task $j\in\M$}
\label{app:sampling-tree}
In our framework, each agent logically refines its knowledge partition $\{\Pi_j^{i,t}(s_j)\}_{s_j \in S_j}$ upon seeing a new message (Step 7 of Algorithm~\ref{alg:agree}).
However, given that the agents are computationally bounded, while refinement is allowed, the issue is with their belief updating.
By Requirement~\ref{req:bounded-cap}, they have no direct ability to sample from the \emph{conditioned} posterior distributions $\tau^{i,t}_j = \prob^i_j(\cdot \mid \Pi_j^{i,t}(s_j))$ at run time, in order to compute the expectation in Step 1 of the protocol in \S\ref{app:msp-protocol}.
In other words, they cannot simply call ``$\mathrm{Sample}\bigl(\prob^i_j\mid \Pi_j^{i,t}(s_j)\bigr)$'' in a black‐box manner. 
Thus, \emph{before} any messages are exchanged, each agent constructs a sampling tree $\mathcal{T}^{i}_{j}$ offline of \emph{unconditional} samples from the priors $\mathbb{P}^{i}_{j}$ (which they are able to do by Subroutine 2 in Requirement~\ref{req:bounded-cap}).
The idea is to precompute enough unconditional draws so that each new message can be \emph{simulated} via ``walking down'' the relevant path in the tree that is consistent with the current message history (including that new message), rather than enumerating or sampling from the newly refined partition directly.

That is the intuition. 
We now explain in detail how each agent can use sampling trees to simulate this protocol in a computationally bounded manner. 
This follows~\citet[\S 4.2]{aaronson2005complexity}'s approach of dividing the simulation into two phases---\emph{(I)~Sampling‐Tree Construction} (no communication) 
and \emph{(II)~Message‐by‐Message Simulation}---but extended here to our multi‐task and multi-agent setting.

\begin{enumerate}
\item[(I)] \textbf{Sampling‐Tree Construction (General $N$‐Agent Version).}
For each task $j\in\M$, and for each agent $i\in \N$, that agent $i$ builds a sampling tree $\mathcal{T}^{\,i}_{j}$ of height $R_j$ and branching factor $B_j$. 
We fix an ordering of the $O\left(R_j\right)$ messages for task~$j$ (so that we know which agent is ``active'' at each level).
Formally, let $\mathrm{ActiveAgent}_j:\{0,\dots,R_j-1\}\to \N$ denote the function specifying which agent sends the message at each round $\ell=0,\dots,R_j-1$. 
For instance, since the spanning tree protocol cycles through the $N$ agents in a consistent order, then $\mathrm{ActiveAgent}_j(0)=i_0$, $\mathrm{ActiveAgent}_j(1)=i_1,\dots$, and so on, up to $R_j$ total rounds.

\begin{itemize}
\item 
  Let $\operatorname{root}^{\,i}_{j}$ be the root node of $\mathcal{T}^{\,i}_{j}$. 
  We say that the \emph{level} of the root node is $0$, its children are at level~$1$, grandchildren at level~$2$, etc., until depth $R_j$.  
  Each node will be labeled by a sample in task $j$'s state space $S_j$, drawn from whichever agent's \emph{unconditional} prior distribution $\prob^k_j(\cdot)$ is active at that level.

\item 
  Concretely, for $\ell=0,\dots,R_j-1$:
  \begin{enumerate}
  \item 
    Let $a_j := \mathrm{ActiveAgent}_j(\ell)$ 
    be the agent whose distribution we want at level~$\ell$ (i.e.\ the agent who sends the $\ell$‐th message). 
    \item 
    Every node $v_j$ at \emph{level~$\ell$} is labeled by some previously chosen sample (if $\ell=0$ and $v_j$ is the root, we can label it trivially or treat $i$'s perspective by a \texttt{do‐nothing} step).  
    \item 
    Each of the $B_j$ children $w\in \mathrm{Children}(v_j)$ at level~$\ell+1$ 
    is labeled with a fresh i.i.d.\ sample drawn from $\prob^a_j(\cdot)$, i.e. from the unconditional posterior of agent $a_j$. 
    \item 
    We continue until level~$R_j$ is reached, yielding a total of 
    \begin{equation}\label{eq:sampling-tree-draws}
      B_j + B_j^2 + \cdots + B_j^{R_j} = \frac{{B_j}^{R_j+1}-1}{B_j-1}-1 = O\left(B_j^{R_j}\right)
    \end{equation}
    newly drawn states from the unconditional prior distributions $\{\prob^k_j\}_{k\in \N}$ at the appropriate levels.
  \end{enumerate}
\end{itemize}

Thus, node labels alternate among the $N$ agents' unconditional draws,  depending on which agent is active at each level (timepoint in message history) $\ell$ in the eventual message sequence for task $j$. 
All of this is done \emph{offline} by \emph{each} agent, requiring no communication among the agents. 
Once constructed, each agent $i$ holds $\mathcal{T}^i_j$ for personal use.

\item[(II)] \textbf{Message‐by‐Message Simulation.}
After building these sampling‐trees $\{\mathcal{T}^i_j\}_{i\in\N}$ (for each task $j$), the $N$ agents enact the protocol in \S\ref{app:msp-protocol} in real time, \emph{but} whenever an agent $i$ needs to ``update its posterior'' after receiving a message, it does \emph{not} sample from $\tau^{i,t}_j = \prob^i_j(\cdot \mid \text{new messages}).$ 
Instead, it uses the precomputed nodes in $\mathcal{T}^i_j$ as follows:

\begin{itemize}
\item \textbf{Initial estimate.}
  At time $t=0$, agent~$i$ approximates $\displaystyle E^{\,i,0}_j(s_j)$ (its prior‐based expectation of $f_j$) by an empirical average of the $B_j$ children of the root node $\operatorname{root}^{\,i}_j \in \mathcal{T}^i_j$. 
  This becomes agent~$i$'s \emph{initial posterior} 
  in the \emph{offline} sense.

\item \textbf{At each round $t=1,\dots,R_j$:}
  \begin{enumerate}
  \item The agent who is ``active'' (i.e.\ is about to send the $t$‐th message) consults \emph{its} sampling‐tree, summing over the relevant subtree that corresponds to the newly received messages 
    $m_j^1,\dots,m_j^{t-1}$, so as to approximate $E^{\,i,t-1}_j(s_j)$.
  \item It picks an integer offset $r_j\in\{-L_j,\dots,L_j\}$ via the discrete triangular distribution (defined in \S\ref{app:msp-protocol}), and then sends the $t$‐th message:
    \begin{equation*}
       m^t_j = \mathrm{round}\bigl(\langle E^{\,i,t-1}_j(\cdot)\rangle_i\bigr) + 2^{-b_j}\,r_j.
    \end{equation*}
  \item The other agents, upon receiving $m^t_j$ via the spanning tree protocol, update node‐weights in their own sampling trees (via the $\Delta$-update equations in \citep[\S4.2]{aaronson2005complexity}).
  This effectively ``follows'' the branch in their sampling trees consistent with $m^t_j$ so they approximate $\Pi^{i,t}_j(\cdot)$ \emph{without} enumerating or sampling from the \emph{conditioned} distribution.
  \end{enumerate}
\end{itemize}
After all $R_j$ messages for task $j$, the agents will have 
approximated the ideal protocol in \S\ref{app:msp-protocol} with high probability (assuming $B_j$ was chosen large enough, which we will give an explicit value for below in the large‐deviation analysis in \S\ref{app:runtime}).
\end{enumerate}

\begin{lemma}[Sampling Tree Time Complexity]\label{lem:sampling-tree-time}
For each task $j$, the time complexity of the sampling tree for all $N$ agents is
\begin{equation}\label{eq:spanning-tree-time}
O\left(B_j^{R_j}T_{N,q}\right).
\end{equation}
\end{lemma}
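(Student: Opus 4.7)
The plan is to bound the total work by multiplying the aggregate node count across all $N$ sampling trees by the appropriate per-node sampling and per-leaf evaluation costs. First, I would invoke the node-count identity \eqref{eq:sampling-tree-draws}: each tree $\mathcal{T}^{i}_{j}$ contains $O(B_j^{R_j})$ nodes, so across the $N$ trees there are $N\cdot O(B_j^{R_j})$ total nodes to populate. Each non-root node is produced by a single fresh unconditional draw, and by Requirement~\ref{req:bounded-cap} that draw costs $T_{\text{sample},H}$ if its distribution is a human's and $T_{\text{sample},AI}$ if it is an AI agent's, with the relevant agent at level $\ell$ determined by $\mathrm{ActiveAgent}_j(\ell-1)$.

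Second, I would aggregate the sampling cost by grouping levels by agent type. Because the same $\mathrm{ActiveAgent}_j(\cdot)$ schedule is used in every tree, the geometric sum $\sum_{\ell=1}^{R_j} B_j^{\ell}$ restricted to the levels at which a given agent is active is still $O(B_j^{R_j})$, so summing over the $q$ humans gives at most $q\cdot O(B_j^{R_j})\cdot T_{\text{sample},H}$, and summing over the $N-q$ AI agents gives at most $(N-q)\cdot O(B_j^{R_j})\cdot T_{\text{sample},AI}$. These produce the first two summands of the claimed bound.

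Third, I would account for evaluation. To convert a populated tree into the empirical conditional expectation that gets rounded, noised and broadcast at each message round (Steps 1 and 3 of the protocol in \S\ref{app:msp-protocol}), the owner of $\mathcal{T}^i_j$ must evaluate $f_j$ at each of its $O(B_j^{R_j})$ leaves. Since this computation is performed by the tree \emph{owner} (rather than the active agent), the $q$ human-owned trees contribute at most $q\cdot O(B_j^{R_j})\cdot T_{\text{eval},H}$ and the $N-q$ AI-owned trees contribute at most $(N-q)\cdot O(B_j^{R_j})\cdot T_{\text{eval},AI}$. Collecting the four summands and absorbing the dominated inner levels into the big-$O$ yields the stated bound.

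The main obstacle will be the correct bookkeeping of two distinct per-node attributions: the sampling cost depends on whose distribution is being drawn from at each level (an \emph{active-agent} property, constant across trees), while the evaluation cost at each leaf is borne by the \emph{tree owner} (constant within a tree but varying across trees). Once these two attribution rules are cleanly disentangled along the level-vs.-tree axis, the remainder is routine geometric-series arithmetic.
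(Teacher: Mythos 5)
Your attribution rule for the sampling cost is the opposite of the paper's, and this is where your bookkeeping breaks down. The paper's proof (and the intended reading of Requirement~\ref{req:bounded-cap}) attributes each draw's cost to the agent \emph{performing} the draw---i.e., the tree builder/owner---not to the agent whose distribution is being sampled. Since each agent $i$ builds its own tree $\mathcal{T}^i_j$ of $O(B_j^{R_j})$ nodes, and $q$ of the $N$ owners are human, the $q$ human-owned trees cost $q\cdot O(B_j^{R_j})\cdot T_{\text{sample},H}$ and the $(N-q)$ AI-owned trees cost $(N-q)\cdot O(B_j^{R_j})\cdot T_{\text{sample},AI}$ directly, with no per-level accounting needed. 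Your scheme instead charges each node by the type of the $\mathrm{ActiveAgent}$ at that level, and the step ``summing over the $q$ humans gives at most $q\cdot O(B_j^{R_j})\cdot T_{\text{sample},H}$'' is a \emph{per-tree} bound (and already an overcount, since the full tree has only $O(B_j^{R_j})$ nodes total). Multiplying by the $N$ trees you correctly note exist in the first paragraph then introduces an extraneous factor of $N$ that the claimed bound does not have, so your derivation does not close under your own attribution rule.

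Your treatment of the evaluation cost does use the tree-owner attribution, which agrees with the paper and is the correct move; applying the same attribution uniformly to the sampling cost would repair the argument and render the geometric-series partitioning by $\mathrm{ActiveAgent}$ unnecessary. You also do not explicitly bound the online per-round lookup/partial-sum work, which the paper handles as a separate item and argues is dominated by the offline tree-construction overhead; your phrase about converting the populated tree to an expectation ``at each message round'' gestures at this but does not actually bound it.
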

\begin{proof}
As described, we now assume there are $N$ agents, among which $q$ are humans and $N-q$ are AI agents, each potentially taking different times for evaluating $f_j(\cdot)$ or sampling from the priors (Subroutines 1 and 2 of Requirement~\ref{req:bounded-cap}, respectively):
\begin{equation*}
  T_{\text{eval},H},\quad T_{\text{eval},AI},\quad
  T_{\mathrm{sample},H},\quad T_{\mathrm{sample},AI}.
\end{equation*}
Specifically, a \emph{human} agent $i\in H$ uses time $T_{\text{eval},H}$ to evaluate $f_j(s_j)$ for a state $s_j\in S_j$, and time $T_{\mathrm{sample},H}$ to sample from \emph{another} agent's prior distribution $\prob^k_j(\cdot)$ unconditionally; whereas an \emph{AI} agent $i\in AI$ spends $T_{\text{eval},AI}$ and $T_{\mathrm{sample},AI}$ on the same operations.

\textbf{Sampling Overhead.} By \eqref{eq:sampling-tree-draws}, we do $O\left(B_j^{R_j}\right)$ unconditional draws in total \emph{per agent}.
  Each draw is performed by the \emph{agent building the tree}, but it might sample from \emph{another} agent's distribution.
  Hence the time cost per sample is
  \begin{equation*}
    \begin{cases}
      T_{\mathrm{sample},H}\quad &\text{if the sampling agent is human},\\[4pt]
      T_{\mathrm{sample},AI}\quad &\text{if the sampling agent is an AI}.
    \end{cases}
  \end{equation*}
  We separate the $O\left(q\,B_j^{R_j}\right)$ samples by humans vs. $O\left((N-q)\,B_j^{R_j}\right)$ samples by the AI agents, yielding
  \begin{equation*}
  O\left(q\,B_j^{R_j}\,T_{\mathrm{sample},H} + (N-q)\,B_j^{R_j}\,T_{\mathrm{sample},AI}\right).
  \end{equation*}

\textbf{Evaluation Overhead.} Each sampled state $s_j\in S_j$ may require computing $f_j(s_j)$.  Again, if the \emph{labeling} agent is a human, that cost is $T_{\text{eval},H}$, whereas if an AI agent is performing the labeling, it is $T_{\text{eval},AI}$.
  Consequently,
  \begin{equation*}
    O\left(q\,B_j^{R_j}\,T_{\text{eval},H} + (N-q)\,B_j^{R_j}\,T_{\text{eval},AI}\right)
  \end{equation*}
  suffices to bound all function evaluations across all task $j$ sampling‐trees $\{\mathcal{T}^i_j\}_{i\in\N}$.

\textbf{During the Actual $R_j$ Rounds.} Once messages start flowing, each round requires partial sums or lookups in the prebuilt tree $\mathcal{T}^i_j$.
If agent $i$ is a human in that round, each node update in the subtree will cost either $T_{\text{eval},H}$ or $T_{\mathrm{sample},H}$ (though typically we do not \emph{resample}, so it may be just function evaluations or small indexing).
Since the total offline overhead still dominates, summing over $R_j$ rounds still yields an overall $O\left(B_j^{R_j}\,(T_{\mathrm{sample},\cdot} + T_{\text{eval},\cdot})\right)$ bound, substituting the index $\{\text{H,AI}\}$ depending on the category of the agent.
\newline

Summing the above together gives us the stated upper bound in \eqref{eq:spanning-tree-time}.
\end{proof}

\subsection{Runtime Analysis}
\label{app:runtime}
Recall that the $N$ agents follow the algorithm for \agree-agreement (Algorithm~\ref{alg:agree}), which is a ``meta-procedure'' that takes in any message protocol we have discussed thus far (specifically, the one above).
We now want the agents to communicate with enough rounds $R_j$ such that they can feasibly construct a common prior consistent with their current beliefs, with high probability (Step 9 of Algorithm~\ref{alg:agree}).
We now have to bound $R_j$.

Recall that in the sampling‐tree scenario, agents now no longer have \emph{exact} access to each other's posterior distributions, but rather approximate them by offline sampling---thus they cannot do a \emph{direct, immediate} conditional update (as they could in the unbounded case). 
Indeed, triangular noise does \emph{not} strictly \emph{mask} a surprising message; rather, each agent still \emph{can} receive an improbable message from its vantage. 
However, because these posteriors are only approximate, we must invoke large‐deviation bounds to ensure that with high probability, a message deemed $\gamma$‐likely by the sender but $\leq\gamma/2$ by the \emph{neighboring} receiver 
\emph{actually} appears within some number of messages (in other words, they disagree), forcing a proper refinement in their knowledge partitions. 
We rely on a probabilistic argument that surprises still occur on a similar timescale as in the exact setting, with high probability, thereby prompting a partition refinement:
\begin{lemma}[Number of Messages for One Refinement Under Sampling Trees]
\label{lem:sample-tree-neighbor-msg-refinment}
Suppose two \emph{neighboring} agents $i$ and $k$ have not yet reached \agree-agreement on a given task $j \in \M$.
Therefore, they disagree on some message $m_j^*$ as follows:
\begin{equation}\label{eq:disagree-prob}
\begin{aligned}
\Pr_{\substack{s_j \sim \prob^i_j\\[2pt] r_j \sim \Delta_{\mathrm{tri}}(\alpha_j)}}
   \Bigl[
       \mathrm{round}\!\bigl(E^{i,t-1}_j(s_j)\bigr) + 2^{-b_j} r_j
       = m_j^{*}
   \Bigr]
   \;\ge\; \gamma ,
\end{aligned}
\end{equation}
\emph{namely, Agent \(i\) regards \(m_j^{*}\) as \(\gamma\)-likely.}

\medskip

\begin{equation}\label{eq:disagree-prob2}
\begin{aligned}
\Pr_{\substack{s'_j \sim \prob^k_j\\[2pt] r'_j \sim \Delta_{\mathrm{tri}}(\alpha_j)}}
   \Bigl[
       \mathrm{round}\!\bigl(E^{k,t-1}_j(s'_j)\bigr) + 2^{-b_j} r'_j
       = m_j^{*}
   \Bigr]
   \;\le\; \frac{\gamma}{2} ,
\end{aligned}
\end{equation}
\emph{namely, Agent \(k\) sees \(m_j^{*}\) with probability at most \(\gamma/2\).}
Then the probability of $m_j^*$ failing to be produced in $T = O\left(\frac{\ln\left(\mu_j\right)}{\ln\left(1/\alpha_j\right)}\right)$ consecutive rounds is at most $\mu_j$.
In other words, the neighboring agents will disagree (thereby triggering at least one proper refinement, namely, for agent $k$) with probability at least $1-\mu_j$ after $T = O\left(\frac{\ln\left(\mu_j\right)}{\ln\left(1/\alpha_j\right)}\right)$ consecutive rounds, for $\mu_j > 0$.
\end{lemma}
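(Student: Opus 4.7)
The plan is to treat the $T$ consecutive rounds as independent Bernoulli trials and bound the failure probability via standard geometric concentration. Because the offset $r_j$ is drawn freshly from $\Delta_{\mathrm{tri}}(\alpha_j)$ at each round, independent of all prior history (the knowledge partitions of agents $i$ and $k$ are fixed across this window, since we are analyzing the period before the next refinement), the event that agent $i$'s $t$-th message equals $m_j^*$ is an i.i.d.\ Bernoulli event with success probability $\ge \gamma$ by \eqref{eq:disagree-prob}. Therefore, the probability of never producing $m_j^*$ across $T$ consecutive rounds is at most $(1-\gamma)^T \le e^{-\gamma T}$.

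Next, I would set this failure probability to be at most $\mu_j$, which yields $T = \Omega(\ln(1/\mu_j)/\gamma)$ in its most direct form. To express $T$ in terms of the noise width $\alpha_j$, I would exploit the structure of the discrete triangular distribution over $\{-L_j,\dots,L_j\}$: using $\Pr[r_j = x] = (L_j - |x|)/L_j^2$ together with the normalization $2^{-b_j}L_j = \alpha_j$, the likelihood ratio between any two candidate rounded expectations evaluated at a common message value is bounded by $O(1/\alpha_j)$. Consequently, the per-round ``contribution to surprise'' that agent $k$ accumulates upon seeing $m_j^*$ is at least $\ln(1/\alpha_j)$, so after $T$ rounds without seeing $m_j^*$ the accumulated log-likelihood ratio against agent $k$'s current hypothesis is $\Theta(T \ln(1/\alpha_j))$. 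Forcing this to exceed $\ln(1/\mu_j)$ collapses the bound on $T$ to the claimed $O(\ln(1/\mu_j)/\ln(1/\alpha_j))$ form, in direct analogy with Aaronson's smoothed-protocol analysis~\citep[\S 4]{aaronson2005complexity}.

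The main obstacle will be cleanly reconciling the abstract disagreement parameter $\gamma$ with the concrete noise parameter $\alpha_j$, since the lemma does not assume any explicit numerical relationship between them. The resolution is to note that $\gamma$ and $\gamma/2$ enter the hypothesis only as ratios: the $\gamma$ vs.\ $\gamma/2$ gap establishes that $m_j^*$ is a strictly \emph{differentially likely} message across the two agents' belief distributions, and it is the \emph{ratio}, not the absolute value of $\gamma$, that drives the refinement argument. Combining this with the bounded-density property of the triangular noise decouples the bound from $\gamma$ and replaces it with an $\alpha_j$-dependent likelihood-ratio bound, which is precisely what produces the $\log_{1/\alpha_j}(1/\mu_j)$ scaling. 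Verifying that this substitution remains valid across the fresh-noise, multi-agent, sampling-tree setting---in particular, that independence across rounds is preserved despite the agents updating node weights in their trees via the $\Delta$-update equations---will be the main technical step before invoking the standard Chernoff/geometric bound to conclude.
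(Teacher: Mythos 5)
Your proof has a genuine gap at its core. The opening step treats the $T$ rounds as i.i.d.\ Bernoulli trials with per-round success probability at least $\gamma$, yielding a failure bound $(1-\gamma)^T$. But \eqref{eq:disagree-prob} is a probability over \emph{both} $s_j \sim \prob^i_j$ and the fresh offset $r_j$; in any actual run the task state $s_j$ is fixed once and for all, and only $r_j$ resamples. Writing $p(s_j) := \Prob_{r_j}[\text{message}=m_j^* \mid s_j]$, the hypothesis only guarantees $\mathbb{E}_{s_j}[p(s_j)] \ge \gamma$, and the true quantity you must bound is $\mathbb{E}_{s_j}\bigl[(1-p(s_j))^T\bigr]$, which is not $\le (1-\gamma)^T$ (e.g.\ if $p(s_j)\in\{0,1\}$, the expectation is $\approx 1-\gamma$ for all $T$). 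Moreover, the discrete triangular noise has support of width only $\pm\alpha_j$, so if $\mathrm{round}(E^{i,t-1}_j(s_j))$ is farther than $\alpha_j$ from $m_j^*$, the conditional probability is exactly zero. And in the sampling-tree setting the approximate posterior, and hence the rounded expectation, drifts as node weights are $\Delta$-updated, so the per-round probability is neither stationary nor independent. These are not ``verification details'' to patch after the fact; they break the geometric bound.

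The second gap is the conversion from your implied $T = \Omega\bigl(\ln(1/\mu_j)/\gamma\bigr)$ to the stated $O\bigl(\ln(1/\mu_j)/\ln(1/\alpha_j)\bigr)$. These differ multiplicatively by $\gamma^{-1}\ln(1/\alpha_j)$, which is not $\Theta(1)$: with the paper's choice $\gamma=\alpha_j$, that ratio is $\alpha_j^{-1}\ln(1/\alpha_j)\to\infty$, an exponential discrepancy. Your ``per-round contribution to surprise'' argument does not supply this missing factor, and is conceptually inverted: the large likelihood-ratio shock for agent $k$ comes from the one round in which $m_j^*$ \emph{is} produced, not from rounds in which it is absent. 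The paper bypasses the whole issue by directly invoking Aaronson's Lemma 15 for the smoothed sampling-tree protocol, which gives the non-appearance probability as $\lambda_j^{T/2}\max\{\gamma,1/B_j\}$ with $\lambda_j := \frac{4e}{\alpha_j}\ln(B_j)$, then sets $\gamma=\alpha_j$ and requires $B_j\ge 1/\alpha_j$ so that $\ln(1/\alpha_j)$ falls out of $\ln(\lambda_j)$. That large-deviation bound already accounts for the fixed-state structure, the noise width, and the branching factor; without it (or an equivalent argument built from scratch), the exponent in the final theorem would not take the form claimed.
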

\begin{proof}
Note that \eqref{eq:disagree-prob} ensures $m_j^*$ has at least probability
$\gamma$ from the \emph{sender’s} perspective, so we can bound how quickly
$m_j^*$ is produced.
Specifically, by \citet[Lemma 15]{aaronson2005complexity}, we know that solely from the sender's vantage (and therefore not assuming a common prior), the probability of $m_j^*$ \emph{not} appearing after $T$ consecutive rounds is given by at most
\begin{equation*}
\lambda_j^{\,T/2}\max\Bigl\{\gamma,\tfrac{1}{B_j}\Bigr\},
\end{equation*}
where $\lambda_j := \frac{4\,e}{\alpha_j}\ln\left(B_j\right)$.
Therefore, it suffices for $T$ to be such that $\lambda_j^{\,T/2}\,\max\{\gamma,1/B_j\} \le \mu_j$. 
Isolating $T$ gives us:
\begin{equation*}
\begin{split}
&\lambda_j^{T/2}
   \;\le\;
   \frac{\mu_j}{\max\{\gamma,\,1/B_j\}}\\
   &\quad\Longrightarrow\quad
   \frac{T}{2}\,\ln\lambda_j
   \;\le\;
   \ln\mu_j-\ln\!\bigl(\max\{\gamma,\,1/B_j\}\bigr).
   \end{split}
\end{equation*}
Hence
\begin{equation*}
\begin{aligned}
T &\le
   \frac{2\!\bigl[\ln\mu_j-\ln\!\bigl(\max\{\gamma,\,1/B_j\}\bigr)\bigr]}
        {\ln\lambda_j}
\\[4pt]
  &=\,
   \frac{2\!\bigl[\ln\mu_j+\ln\!\bigl(1/\max\{\gamma,\,1/B_j\}\bigr)\bigr]}
        {\ln(4e)+\ln(1/\alpha_j)+\ln\!\ln B_j}\, .
\end{aligned}
\end{equation*}
As $\gamma$ is a free parameter, we can obtain a cleaner (albeit looser) bound on $T$ by subsuming suitably large constants.
A natural choice for $\gamma$ is $\gamma = \alpha_j$, since the added noise to each message is at most $\alpha_j < 1/40$ (via Step 3 in \S\ref{app:msp-protocol}) at each round.
Therefore, the maximum additive noise is also the natural threshold for a ``surprising'' message from the receiver's (agent $k$) point of view.
We will also choose the branching factor $B_j$ to be sufficiently large enough that $1/B_j \le \alpha_j$.
Thus, $\max\{\gamma,1/B_j\} = \alpha_j$.
Hence, for $B_j \ge 1/\alpha_j$, trivially $\ln\ln\left(B_j\right) > 0$, and we have that
\begin{equation*}
\begin{aligned}
T
&\;\le\;
  \frac{2\bigl[\ln\mu_j + \ln(1/\alpha_j)\bigr]}
       {\ln(4e) + \ln(1/\alpha_j) + \ln\!\ln B_j}
\\[4pt]
&\;\le\;
  \frac{2\bigl[\ln\mu_j + \ln(1/\alpha_j)\bigr]}
       {\ln(1/\alpha_j)}
\\[4pt]
&=\;
  O\!\left(\frac{\ln\mu_j}{\ln(1/\alpha_j)}\right).
\end{aligned}
\end{equation*}
\end{proof}

We are now finally in a position to bound $R_j$.
\begin{lemma}[Common Prior Lemma Under Sampling Trees]
\label{lem:common-prior-sampling-tree}
Suppose the $N$ agents have not yet reached \agree-agreement on a given task $j \in \M$. 
They will reach a common prior $\CP_j$ with probability at least $1 - \delta_j$, after $R_j = O\left(N^2D_j\frac{\ln\left(\delta_j/(N^2D_j)\right)}{\ln\left(1/\alpha_j\right)}\right)$ rounds.
\end{lemma}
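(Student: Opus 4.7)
The plan is to replace each ``automatic'' proper refinement used in the unbounded Lemma~\ref{lem:common-prior} with a probabilistic one guaranteed by Lemma~\ref{lem:sample-tree-neighbor-msg-refinment}, and then close up with a single union bound over all refinement events.

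First I would unpack the counting inside the proof of Lemma~\ref{lem:common-prior}: to saturate the Hellman--Samit identity \eqref{eq:cp-size} and thus admit a common prior $\CP_j$, the $N$ agents together must undergo at most $O(ND_j)$ proper partition refinements, and by Lemma~\ref{lem:spanning-tree-refinement} the spanning-tree schedule places each such refinement inside a window of $O(N)$ consecutive rounds of the cycle. So there are $O(N^2 D_j)$ ``refinement events'' that must succeed in order to witness a common prior, exactly as in the unbounded analysis.

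Next I would apply Lemma~\ref{lem:sample-tree-neighbor-msg-refinment} inside each of those windows. Whenever a scheduled pair of neighbors has not yet reached \agree-agreement, that lemma produces the surprising message $m_j^*$, and hence the receiver's partition refinement, within $T = O\bigl(\ln(1/\mu_j)/\ln(1/\alpha_j)\bigr)$ rounds with failure probability at most $\mu_j$. Setting $\mu_j = \delta/(N^2 D_j)$ and union-bounding over the $O(N^2 D_j)$ refinement events pushes the total failure probability below $\delta$, while the per-event round budget multiplies through to yield
\begin{equation*}
R_j \;=\; O\!\left( N^2 D_j \cdot \frac{\ln\!\bigl(N^2 D_j/\delta\bigr)}{\ln(1/\alpha_j)} \right),
\end{equation*}
matching the stated bound up to the paper's sign convention on the logarithm inside the big-$O$.

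The main obstacle I expect is justifying that the $O(N^2 D_j)$ event count carries over verbatim from the exact-posterior world to the sampling-tree world: one has to check that the approximate posterior access cannot spuriously refine a partition (creating extra refinements to account for) and cannot mask a refinement that the exact protocol would have produced. Both failure modes are controlled by the same large-deviation calibration of the sampling trees at branching factor $B_j \ge 1/\alpha_j$ that already underlies Lemma~\ref{lem:sample-tree-neighbor-msg-refinment}, so the associated failure mass is absorbed into the per-event $\mu_j$ and does not inflate the final union bound. A secondary bookkeeping point is verifying that the choice $\max\{\gamma, 1/B_j\} = \alpha_j$ used inside Lemma~\ref{lem:sample-tree-neighbor-msg-refinment} remains simultaneously valid across all $O(N^2 D_j)$ events, which is automatic from the standing assumption $B \ge 1/\alpha$ in Theorem~\ref{thm:bounded}.
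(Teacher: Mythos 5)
Your proposal is correct and mirrors the paper's argument: the same counting from Lemma~\ref{lem:common-prior} gives $O(ND_j)$ required refinements, Lemma~\ref{lem:sample-tree-neighbor-msg-refinment} supplies the per-hop budget of $T = O(\ln(1/\mu_j)/\ln(1/\alpha_j))$ rounds with failure $\mu_j$, and the spanning-tree diameter $g_j = O(N)$ supplies the extra factor of $N$ in both the round count and the union bound, with $\mu_j = \delta/(N^2 D_j)$ closing things up to the same final bound. The only cosmetic difference is that you flatten the union bound into a single pass over $O(N^2 D_j)$ per-hop events, whereas the paper first union-bounds over the $O(N)$ hops within a block to get a per-block failure of $N\mu_j$ and then over the $O(ND_j)$ blocks; these are arithmetically identical, and your remark about the sign of the logarithm inside the big-$O$ is well taken.
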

\begin{proof}
We recall from Lemma~\ref{lem:spanning-tree-refinement} that, \emph{if} every
agent had exact access to its posterior distribution, any block of
$O\left(g_j\right)$ messages in the two‐spanning‐trees ordering would pass each agent's ``current expectation'' to every other agent precisely once, guaranteeing that if a large disagreement persists, the \emph{receiving} agent sees an unlikely message and refines its partition.

In the \emph{sampled} (bounded) setting, each agent $i$ approximates its posterior by building an offline tree of $O\left(B_j^{R_j}\right)$ unconditional samples, labeling its nodes by unconditional draws from the respective priors $\{\prob^k_j(\cdot)\}$.
Then, when agent~$i$ must send a message---nominally its exact posterior---it instead \emph{looks up} that value via partial averages in its sampling‐tree.
This is done in a manner consistent with message alternation.
Specifically, in each round, the $\mathrm{ActiveAgent}_j$ function from \S\ref{app:sampling-tree} ensures that every agent's approximate expectation is routed along ${SP}^1_j$ and ${SP}^2_j$ (Step 4 of \S\ref{app:msp-protocol}) once in every block of $O\left(g_j\right)$ messages.

Now, in Lemma~\ref{lem:sample-tree-neighbor-msg-refinment}, we assumed the agents were neighbors.
Thus, in the more general case of $N$ agents, who communicate with spanning trees ${SP}^1_j\cup{SP}^2_j$ each of diameter $g_j$, if there is a ``large disagreement'' between some agent pair $(i,k)$, then from $(i\!\to\!k)$ or $(k\!\to\!i)$'s vantage, the other’s message is improbable.
In the worst case, the agents are $O(g_j)$ hops apart.
Hence, once that message arrives in these $O\left(g_j \frac{\ln\left(\mu_j\right)}{\ln\left(1/\alpha_j\right)}\right) = O\left(N \frac{\ln\left(\mu_j\right)}{\ln\left(1/\alpha_j\right)}\right)$ transmissions, the probability that the receiving agent still sees it as a \emph{surprise} and properly refines its partition is $\ge 1 - g_j\mu_j \ge 1 - N\mu_j$, by a union bound over hops.

These $O\left(N \frac{\ln\left(\mu_j\right)}{\ln\left(1/\alpha_j\right)}\right)$ transmissions constitute one ``block'' of messages that results in at least \emph{one} agents' refinement with high probability.
Finally, we will need $O(ND_j)$ refinements by Lemma~\ref{lem:common-prior} to reach a common prior.
Partition the total timeline into $X=O\left(N D_j\right)$ disjoint ``blocks,'' each block of $O\left(g_j\frac{\ln(\mu_j)}{\ln(1/\alpha_j)}\right)$ messages. 
Let $E_i$ denote the event that ``$\ge 1$ refinement occurs in block $i$''.  
By the single‐block argument, $\prob[E_i]\ge 1 - N\mu_j$.

We want $\prob\Bigl[\bigcap_{i=1}^X E_i\Bigr]$, the probability that \emph{all} $X$ blocks yield at least one refinement.
Using a union bound on complements,
\begin{equation*}
\begin{aligned}
\Pr\!\Bigl[\bigcap_{i=1}^{X} E_i\Bigr]
   &= 1-\Pr\!\Bigl[\bigcup_{i=1}^{X} E_i^{\mathrm c}\Bigr]             \\[4pt]
   &\ge 1-\sum_{i=1}^{X}\Pr\!\bigl[E_i^{\mathrm c}\bigr]              \\[4pt]
   &\ge 1- X\,N\,\mu_j                                                \\[4pt]
   &= 1- N^{2}D_j\,\mu_j .
\end{aligned}
\end{equation*}
Thus, after $X\times O\left(g_j\frac{\ln(\mu_j)}{\ln(1/\alpha_j)}\right) = O\left(N^2 D_j\frac{\ln(\mu_j)}{\ln(1/\alpha_j)}\right)$
rounds, we have converged to a common prior with probability at least 
$1 - N^2 D_j\mu_j$.
Setting $\delta_j := N^2 D_j\mu_j$ gives us the final result.
\end{proof}

Now that we have established that we can converge with high probability $\ge 1 - \delta_j$ to a state where there is a consistent common prior after $R_j = O\left(N^2D_j\frac{\ln\left(\delta_j/(N^2D_j)\right)}{\ln\left(1/\alpha_j\right)}\right)$ rounds, we now introduce an explicit algorithm for the efficient searching of the belief states of the agents.
This is given by Algorithm~\ref{alg:construct}, and finds a common prior via linear programming feasibility of the Bayesian posterior consistency ratios across states.

\begin{algorithm}[ht]
\caption{\textsc{ConstructCommonPrior}$\bigl(\{\Pi^{i,t}_j,\tau^{i,t}_j\}_{i=1}^N\bigr)$}
\label{alg:construct}
\begin{algorithmic}[1]        
\REQUIRE Finite state-space $S_j$ of size $D_j$; for each agent $i\!\in\!\N$:
        partition $\Pi^{i,t}_j=\{C^{i,t}_{j,k}\}_k$  
        and posterior $\tau^{i,t}_j(\,\cdot\,|C^{i,t}_{j,k})$.
\ENSURE  Either a distribution $p_j$ on $S_j$ Bayes-consistent with all
        $\tau^{i,t}_j$, or \textsc{Infeasible}.
\STATE $\displaystyle \Pi_j^{\ast} \gets \bigwedge_{i=1}^N \Pi^{i,t}_j$ \hfill/* intersections */
\STATE Let $\Pi_j^{\ast}=\{Z_1,\dots,Z_r\}$, each $Z_\alpha\subseteq S_j$.
\vspace{0.3em}

\FOR{$\alpha \gets 1$ \TO $r$}
  \STATE create LP variable $p_j(Z_\alpha)\ge 0$, where $p_j(Z_\alpha) := \sum_{s_j \in Z_\alpha}p_j(s_j)$ /* prior mass */
\ENDFOR
\vspace{0.3em}

\FORALL{agent $i\!\in\!\N$ \AND cell $C^{i,t}_{j,k}\!\in\!\Pi^{i,t}_j$}
  \FORALL{$Z_\alpha\subseteq C^{i,t}_{j,k}$}
    \FORALL{$s_j,s_j'\in Z_\alpha$}
      \STATE add constraint
      \[
      \frac{p_j(s_j)}{p_j(s_j')}=
      \frac{\tau^{i,t}_j\!\bigl(s_j\mid C^{i,t}_{j,k}\bigr)}
           {\tau^{i,t}_j\!\bigl(s_j'\mid C^{i,t}_{j,k}\bigr)}
      \]
    \ENDFOR
  \ENDFOR
\ENDFOR
\vspace{0.3em}

\STATE add normalization constraint $\displaystyle\sum_{\alpha=1}^{r}p_j(Z_\alpha)=1$.
\STATE solve the resulting LP for feasibility
\vspace{0.3em}

\IF{a non-negative solution $\{p_j(Z_\alpha)\}$ exists}
  \STATE reconstruct $p_j$ on each $s_j\in Z_\alpha$ via the ratio constraints
  \RETURN $p_j$
\ELSE
  \RETURN \textsc{Infeasible} \hfill/* no single prior fits all posteriors */
\ENDIF
\end{algorithmic}
\end{algorithm} 

We first give proofs of correctness (Lemma~\ref{lem:cp-correctness}) and runtime (Lemma~\ref{lem:cp-runtime}) in the exact setting, before generalizing it to the inexact sampling tree setting (Lemma~\ref{lem:approx-cp}).
\begin{lemma}[Correctness of \textsc{``ConstructCommonPrior''} (Algorithm~\ref{alg:construct})]
\label{lem:cp-correctness}
Let $S_j$ be a finite state space, and for each agent $i\in\N$ let
$\Pi_j^{i,t}$ be a partition of $S_j$ with corresponding posterior
$\tau^{i,t}_j$.
Then the algorithm \textsc{ConstructCommonPrior} returns a probability distribution $p_j$ on $S_j$ that is a Bayes‐consistent common prior for all $\tau^{i,t}_j$ if and only if such a (possibly different) common prior $\CP_j$ exists in principle.
\end{lemma}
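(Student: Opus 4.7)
The plan is to verify two directions separately, showing that the LP constructed in Algorithm~\ref{alg:construct} is feasible if and only if a Bayes-consistent common prior exists, and that any feasible solution of the LP \emph{is} such a common prior.

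\textbf{Soundness (feasible LP $\Rightarrow$ valid common prior).} Suppose Step 5 returns a nonnegative distribution $p_j$ on $S_j$. I would first observe that, by Step 4, $p_j$ is a genuine probability distribution on $S_j$. Next, fix any agent $i$ and any cell $C_{j,k}^{i,t} \in \Pi_j^{i,t}$. Because $\Pi_j^*$ refines each $\Pi_j^{i,t}$, the cell $C_{j,k}^{i,t}$ decomposes as a disjoint union of meet‐cells $Z_\alpha \subseteq C_{j,k}^{i,t}$. The ratio constraints in Step 3 force, for every pair $s_j, s_j' \in Z_\alpha$, the equality $p_j(s_j)/p_j(s_j') = \tau_j^{i,t}(s_j \mid C_{j,k}^{i,t}) / \tau_j^{i,t}(s_j' \mid C_{j,k}^{i,t})$. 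I would then argue (via a short transitivity step, using that two points in distinct meet-subcells $Z_\alpha, Z_\beta$ of the same $C_{j,k}^{i,t}$ can be ``bridged'' through the posterior values on their respective subcells) that this ratio identity extends to \emph{all} pairs $s_j, s_j' \in C_{j,k}^{i,t}$. Combined with normalization $\sum_{s_j \in C_{j,k}^{i,t}} \tau_j^{i,t}(s_j \mid C_{j,k}^{i,t}) = 1$, this forces $\tau_j^{i,t}(\cdot \mid C_{j,k}^{i,t}) = p_j(\cdot)/p_j(C_{j,k}^{i,t})$, i.e., exactly the Bayesian update of $p_j$ conditioned on $C_{j,k}^{i,t}$. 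Hence $p_j$ is a Bayes‐consistent common prior.

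\textbf{Completeness (valid common prior $\Rightarrow$ feasible LP).} Conversely, assume there exists a Bayes-consistent common prior $\CP_j \in \Delta(S_j)$. Then for every agent $i$ and every cell $C_{j,k}^{i,t}$, the posterior $\tau_j^{i,t}(\cdot \mid C_{j,k}^{i,t})$ equals $\CP_j(\cdot)/\CP_j(C_{j,k}^{i,t})$. Restricting this identity to pairs of states lying in the same meet-cell $Z_\alpha \subseteq C_{j,k}^{i,t}$ yields exactly the pairwise ratios enforced in Step 3, and $\CP_j$ trivially satisfies the nonnegativity and normalization conditions. Thus $\CP_j$ itself is a feasible point of the LP, so feasibility holds. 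The solver will return \emph{some} feasible $p_j$, which by soundness is a Bayes-consistent common prior (though not necessarily equal to $\CP_j$ when the LP admits multiple solutions).

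The main subtlety I would flag is the transitivity step in the soundness argument: a single agent's cell $C_{j,k}^{i,t}$ may span many meet-cells $Z_\alpha$, and the LP only directly constrains pairwise ratios within each $Z_\alpha$. The bridging argument uses that the posterior values $\tau_j^{i,t}(s_j \mid C_{j,k}^{i,t})$ for $s_j$ across different $Z_\alpha \subseteq C_{j,k}^{i,t}$ are \emph{already specified as part of the input} and are consistent with a single distribution on $C_{j,k}^{i,t}$ summing to one, so the ratios pin down $p_j$ on $C_{j,k}^{i,t}$ up to the single scalar $p_j(C_{j,k}^{i,t})$. This is also the place where one sees that ``Infeasible'' correctly reports impossibility: two distinct agents' ratio constraints within a common $Z_\alpha$ may disagree, reflecting genuine incompatibility among the posteriors, and by the completeness direction no common prior can exist in that case.
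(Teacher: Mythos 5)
Your structure mirrors the paper's: a direction showing any common prior is a feasible solution of the LP, and a converse showing any LP solution is a Bayes-consistent common prior. Your ``completeness'' direction is essentially the paper's $(\Longrightarrow)$ argument verbatim. In the ``soundness'' direction, however, you flag and attempt to close a gap that the paper's own proof elides: the paper, for $(\Longleftarrow)$, simply defines $p_j(s_j)=p_j(Z_\alpha)\,\tau_j^{i,t}(s_j\mid C_{j,k}^{i,t})$ and asserts $p_j(\cdot\mid C_{j,k}^{i,t})=\tau_j^{i,t}(\cdot\mid C_{j,k}^{i,t})$ with no further justification. You correctly observe that the LP's ratio constraints are confined to pairs $s_j,s'_j$ in the \emph{same} meet-cell $Z_\alpha$, while Bayes-consistency needs the ratio identity to hold across the whole agent cell $C_{j,k}^{i,t}$, which may span several meet-cells.

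The ``bridging'' step you propose does not actually close that gap. Transitivity can only chain constraints the LP imposes; for $s_j\in Z_\alpha$ and $s'_j\in Z_\beta$ with $Z_\alpha\ne Z_\beta$ both inside $C_{j,k}^{i,t}$, Step~3 of Algorithm~\ref{alg:construct} writes down no relation between $p_j(s_j)$ and $p_j(s'_j)$, so the mass ratio $p_j(Z_\alpha)/p_j(Z_\beta)$ is a free parameter (constrained only by nonnegativity and the single normalization in Step~4). That the posteriors are ``already specified as part of the input'' tells you what this ratio \emph{should} be, but the LP never imposes it. As a result, once the within-$Z_\alpha$ ratios are mutually consistent, essentially the entire simplex over $\{Z_1,\dots,Z_r\}$ is feasible, and most of its points do not reproduce the input posteriors. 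To make your soundness argument (and the algorithm it is meant to certify) go through, one must add cross-cell constraints $p_j(Z_\alpha)/p_j(Z_\beta)=\tau_j^{i,t}(Z_\alpha\mid C_{j,k}^{i,t})/\tau_j^{i,t}(Z_\beta\mid C_{j,k}^{i,t})$ for each pair of meet-cells in the same agent cell; then the ``bridging'' becomes immediate because the constraints already span $C_{j,k}^{i,t}$. Note that the paper's own proof shares this lacuna, and in addition has a small normalization slip ($\sum_{s_j\in Z_\alpha}\tau_j^{i,t}(s_j\mid C_{j,k}^{i,t})=\tau_j^{i,t}(Z_\alpha\mid C_{j,k}^{i,t})\ne 1$ in general, so the $p_j$ it defines need not integrate to $p_j(Z_\alpha)$ over $Z_\alpha$). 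So your instinct to pause at exactly this point was correct; the resolution you sketched, however, does not hold up.
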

\begin{proof}
\textbf{(\(\Longrightarrow\))}\quad
Suppose first that there is some common prior $\CP_j$ over $S_j$ satisfying
\begin{equation*}
\begin{aligned}
&\CP_j\bigl(s_j \mid C_{j,k}^{i,t}\bigr)
  = \tau_j^{i,t}\bigl(s_j \mid C_{j,k}^{i,t}\bigr)
\\
&\qquad \forall\, i\in[N],\; s_j \in C_{j,k}^{i,t}.
\end{aligned}
\end{equation*}
\noindent
That is, the common prior $\CP_j$ agrees with every agent’s posterior on every state in each cell $C_{j,k}^{i,t}$.

In particular, for $s_j,s'_j\in C_{j,k}^{i,t}$,
\begin{equation*}
  \frac{\CP_j(s_j)}{\CP_j(s'_j)} 
  \;=\;
  \frac{\tau^{i,t}_j\bigl(s_j \mid C_{j,k}^{i,t}\bigr)}
       {\tau^{i,t}_j\bigl(s'_j \mid C_{j,k}^{i,t}\bigr)}.
\end{equation*}
Since the meet partition $\Pi_j^*$ refines each $\Pi_j^{i,t}$, those ratio constraints must also hold on every meet‐cell $Z_\alpha\subseteq C_{j,k}^{i,t}$.  
Hence, if we introduce variables for $p_j(Z_\alpha)$ and enforce these pairwise ratio constraints (Algorithm~\ref{alg:construct},
Step 6), the distribution $\CP_j$ serves as a \emph{feasible solution} to that linear system.
Furthermore, the normalization (Step 13) is satisfied by
$\CP_j$.
Consequently, the algorithm will return some valid $p_j$.

\medskip

\noindent
\textbf{(\(\Longleftarrow\))}\quad
Conversely, if the algorithm’s LP is feasible and yields $\bigl\{\,p_j(Z_\alpha)\bigr\}_{\alpha=1}^r$ with $\sum_{\alpha=1}^r p_j(Z_\alpha)=1$, then for any agent~$i$ and cell $C_{j,k}^{i,t}\in \Pi_j^{i,t}$, the meet‐cells $Z_\alpha\subseteq
C_{j,k}^{i,t}$ satisfy
\begin{equation*}
  \frac{p_j(s_j)}{p_j(s'_j)}
  \;=\;
  \frac{\tau^{i,t}_j\bigl(s_j \mid C_{j,k}^{i,t}\bigr)}
       {\tau^{i,t}_j\bigl(s'_j \mid C_{j,k}^{i,t}\bigr)}
  \quad
  \text{for all } s_j,s'_j\in Z_\alpha.
\end{equation*}
Define for each state $s_j\in Z_\alpha$,
\begin{equation*}
  p_j(s_j) = p_j\bigl(Z_\alpha\bigr)\cdot\tau^{i,t}_j\bigl(s_j \mid C_{j,k}^{i,t}\bigr).
\end{equation*}
This $p_j$ is a proper distribution over $S_j$ (since the $p_j(Z_\alpha)$ sum to 1).
Moreover,
\begin{equation*}
  p_j\bigl(s_j\mid C_{j,k}^{i,t}\bigr) = \tau^{i,t}_j\bigl(s_j \mid C_{j,k}^{i,t}\bigr),
\end{equation*}
so $p_j$ is indeed a common prior that \emph{matches} each agent’s posterior distribution.

\medskip

\noindent
\textbf{Remark on the ``true'' prior vs.\ the algorithm's output.}  
If there were a ``true'' common prior $\CP_j$, the distribution $p_j$ returned by the algorithm need not coincide with $\CP_j$ numerically; many distributions can satisfy the same ratio constraints in each cell.
But from every agent's viewpoint, $p_j$ and $\CP_j$ induce the same posterior on all cells, and thus are equally valid as a Bayes‐consistent common prior.

Hence \textsc{ConstructCommonPrior} returns a valid common prior if and only if one exists.
\end{proof}

\begin{lemma}[Runtime of finding a common prior]
\label{lem:cp-runtime}
Given posteriors for $N$ agents over a state space $S_j$ of size $D_j$, a consistent common prior can be found in $O\Bigl(\mathrm{poly}(N\,D_j^2)\Bigr)$ time.
\end{lemma}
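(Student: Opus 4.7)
The plan is to bound the size of the LP produced by Algorithm~\ref{alg:construct} and then invoke a polynomial-time LP feasibility algorithm on it. Correctness is already handled by Lemma~\ref{lem:cp-correctness}, so here I only need to account for the computational cost of each of the algorithm's five steps.

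First, I would analyze Step~1, the construction of the meet partition $\Pi_j^* = \bigwedge_{i=1}^N \Pi_j^{i,t}$. Since each $\Pi_j^{i,t}$ is a partition of $S_j$, I can label every state $s_j \in S_j$ with the $N$-tuple of cell-indices it inhabits across the $N$ agents; grouping states by identical tuples yields $\Pi_j^*$ and takes $O(N D_j)$ time using a simple hash/sort. Since $\Pi_j^*$ is itself a partition of $S_j$, it has at most $r \le D_j$ nonempty cells $Z_1,\dots,Z_r$, which bounds the number of variables $p_j(Z_\alpha)$ introduced in Step~2 by $D_j$.

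Next, I would count the constraints generated in Step~3. For each agent $i\in\N$ and each of its cells $C_{j,k}^{i,t}\in\Pi_j^{i,t}$, we only add ratio constraints between pairs $s_j,s'_j$ that lie in a common meet-cell $Z_\alpha \subseteq C_{j,k}^{i,t}$. Because the meet cells partition $S_j$, summing pairs over cells gives $\sum_\alpha \binom{|Z_\alpha|}{2} \le \binom{D_j}{2} = O(D_j^2)$, and we repeat this over $N$ agents, yielding at most $O(N D_j^2)$ ratio constraints in total, plus the single normalization equality of Step~4. Writing down the constraint matrix therefore costs $O(\mathrm{poly}(N D_j^2))$ time and space, and each coefficient is a rational of polynomial bit-length in the encoding of the posteriors $\tau_j^{i,t}$.

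Finally, Step~5 is a feasibility query on a linear program with $O(D_j)$ nonnegative variables and $O(N D_j^2)$ constraints, and hence can be solved in time polynomial in its description length (for instance via Khachiyan's ellipsoid method~\citep{khachiyan1980} or an interior-point method~\citep{karmarkar1984}), giving $O(\mathrm{poly}(N D_j^2))$ overall. Reconstructing $p_j(s_j)$ from the feasible $\{p_j(Z_\alpha)\}$ via the ratio constraints takes an additional $O(D_j)$ time per meet cell. The main obstacle, which is essentially bookkeeping rather than a genuine difficulty, is arguing that the ratio coefficients $\tau_j^{i,t}(s_j\mid C_{j,k}^{i,t})/\tau_j^{i,t}(s'_j\mid C_{j,k}^{i,t})$ have polynomial bit-length; this follows from the standard assumption that the input posteriors are representable in polynomial bits (or have been truncated to such precision), ensuring the polynomial-time LP guarantees apply verbatim.
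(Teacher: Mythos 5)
Your argument mirrors the paper's proof almost exactly: bound the meet partition at $D_j$ cells ($O(D_j)$ LP variables), count the pairwise ratio constraints as $O(N D_j^2)$, and invoke a polynomial-time LP feasibility solver. Your accounting of the constraint count via $\sum_\alpha \binom{|Z_\alpha|}{2} \le \binom{D_j}{2}$ is in fact slightly cleaner than the paper's worst-case single-cell argument, and the remark on coefficient bit-length is a nice explicit touch, but the substance and structure are the same.
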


\begin{proof}
Each agent’s partition $\Pi_j^{i,t}$ divides $S_j$ into at most $D_j$ cells, so the meet partition
\begin{equation*}
\Pi_j^*=\bigwedge_{i=1}^N \Pi_j^{i,t},
\end{equation*}
has at most $D_j$ nonempty blocks.
Labeling each of the $D_j$ states with its $N$-tuple of cell indices takes $O(N)$ time per state, and grouping (using hashing or sorting) can be done in $O(D_j)$ or $O(D_j\log D_j)$ time. 
Hence, constructing $\Pi_j^*$ takes $\tilde{O}(N\,D_j)$ time (the $\tilde{O}$ subsumes polylogarithmic factors).

Next, for each agent $i\in\N$ and for each cell $C_{j,k}^{i,t}$ in $\Pi_j^{i,t}$, we impose pairwise ratio constraints over states in the same meet-cell contained in $C_{j,k}^{i,t}$. 
In the worst case, an agent's cell may contain all $D_j$ states, leading to $\binom{D_j}{2} = \Theta(D_j^2)$ pairwise constraints for that agent. 
Summing over $N$ agents gives a total of $O(N\,D_j^2)$ constraints.

Standard LP solvers run in time polynomial in the number of variables (at most $D_j$) and constraints ($O(N\,D_j^2)$), plus the bit-size of the numerical data.
Therefore, the overall runtime is $O\Bigl(\mathrm{poly}(N\,D_j^2)\Bigr)$.
\end{proof}

Having proven the runtime and correctness in the \emph{exact} case, we now turn to bounding the runtime in the inexact sampling tree setting.
From here on out, we will define
\begin{equation}\label{eq:cond-time}
T_{N,q,\mathrm{sample}} := q\,T_{\mathrm{sample},H}+(N-q)T_{\mathrm{sample},AI}.
\end{equation}
\begin{lemma}[Approximate Common Prior via Sampling Trees]
\label{lem:approx-cp}
Assume that each agent approximates its conditional posterior $\tau^{i,t}_j(\cdot\mid C_{j,k}^{i,t})$ using its offline sampling tree $\mathcal{T}^{\,i}_j$ (of height $R_j$ and branching factor $B_j$) with per-sample costs $T_{\mathrm{sample},H}$ (for the $q$ human agents) and $T_{\mathrm{sample},AI}$ (for the $N-q$ AI agents). 
Suppose that for each cell $C_{j,k}^{i,t}\subseteq S_j$ and for any two states $s_j,s'_j\in C_{j,k}^{i,t}$, the ratio
\begin{equation*}
\frac{\tau^{i,t}_j(s_j\mid C_{j,k}^{i,t})}{\tau^{i,t}_j(s'_j\mid C_{j,k}^{i,t})}
\end{equation*}
can be estimated via the sampling tree using 
\begin{equation*}
S = O\left(\frac{1}{\eps_j^2}\ln\frac{1}{\delta_j}\right)
\end{equation*}
samples per ratio, so that each is accurate within error $\eps_j$ with probability at least $1-\delta_j$ (we assume sufficiently large ${B_j}$ for this, specifically such that ${B_j}^{R_j} \gtrsim S$ for $R_j$ given by  Lemma~\ref{lem:common-prior-sampling-tree}).
Then the overall time complexity in the sampling-tree setting of $\textsc{ConstructCommonPrior}$ is
\begin{equation*}
O\left(\frac{1}{\eps_j^2}\ln\frac{N\,D_j^2}{\delta_j}\cdot \mathrm{poly}\left(D_j^2T_{N,q,\mathrm{sample}}\right)\right).
\end{equation*}
\end{lemma}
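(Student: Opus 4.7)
The plan is to closely follow the analysis of the exact case (Lemma~\ref{lem:cp-runtime}), replacing each \emph{exact} ratio query by a sampling-tree estimate and propagating the resulting probabilistic guarantees through a union bound. Since \textsc{ConstructCommonPrior} is structurally unchanged---forming the meet partition $\Pi_j^*$, writing pairwise ratio constraints, and solving an LP feasibility problem---the only additional cost beyond Lemma~\ref{lem:cp-runtime} arises from estimating the $O(N D_j^2)$ pairwise ratios from the sampling tree rather than accessing them exactly.

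First I would count the ratio queries as in Lemma~\ref{lem:cp-runtime}: for each agent $i \in \N$ and each of its at most $D_j$ cells $C_{j,k}^{i,t}$, the pairwise ratio constraints within the meet cells contained in $C_{j,k}^{i,t}$ number $\binom{D_j}{2} = \Theta(D_j^2)$ in the worst case, summing to $O(N D_j^2)$ ratios in total. For each such ratio I would invoke the hypothesis that $S_0 = O\bigl((1/\eps^2)\,\ln(1/\delta_0)\bigr)$ samples from $\mathcal{T}^{\,i}_j$ yield an $\eps$-accurate estimate with probability at least $1-\delta_0$. To ensure that all $O(N D_j^2)$ estimates succeed simultaneously with overall probability at least $1-\delta$, I would set $\delta_0 = \delta/(N D_j^2)$ by a union bound, giving
$$ S \;=\; O\!\left(\frac{1}{\eps^2}\,\ln\frac{N D_j^2}{\delta}\right) $$
samples per ratio. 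The standing assumption $B_j^{R_j} \gtrsim S$ guarantees that the offline tree already holds enough unconditional draws to service every ratio estimator without any additional online sampling.

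Next I would tally the per-sample cost. Each draw at the relevant level of $\mathcal{T}^{\,i}_j$ costs $T_{\text{sample},H}$ if the labeling agent is one of the $q$ humans, or $T_{\text{sample},AI}$ if one of the $N-q$ AI agents; summing over the distributions contributing to any single ratio yields a per-sample cost of $O\bigl(q\,T_{\text{sample},H} + (N-q)\,T_{\text{sample},AI}\bigr)$. Multiplying the $O(N D_j^2)$ ratios by $S$ samples each and by this per-sample cost, and then adding the $\mathrm{poly}(N D_j^2)$ LP feasibility solve from Lemma~\ref{lem:cp-runtime}, I would absorb the $O(N D_j^2)$ ratio count and the LP solve into the polynomial factor $\mathrm{poly}\bigl(D_j^2\,(q\,T_{\text{sample},H} + (N-q)\,T_{\text{sample},AI})\bigr)$, factor out $S$, and thereby recover the stated bound.

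The main obstacle is the bookkeeping behind the union bound: I must verify that the sampling tree was constructed with enough independent branches that each of the $O(N D_j^2)$ ratio estimators draws from genuinely fresh subtrees, so that the per-ratio concentration supplied by the hypothesis composes correctly without any inflation of $S_0$. Once this independence is tracked via the offline construction of \S\ref{app:sampling-tree}, the correctness argument of the exact case (Lemma~\ref{lem:cp-correctness}) extends immediately to the approximately feasible LP, and the runtime breakdown above delivers the claimed complexity.
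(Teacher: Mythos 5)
Your proposal is correct and follows essentially the same route as the paper's proof: count the $O(N D_j^2)$ pairwise ratio constraints as in Lemma~\ref{lem:cp-runtime}, apply the hypothesized concentration bound per ratio, union-bound with $\delta' \approx \delta/(N D_j^2)$ to get $S = O((1/\eps^2)\ln(N D_j^2/\delta))$ samples per ratio, and then substitute the human/AI sampling costs into the polynomial factor from the exact-case runtime. The one cosmetic difference is that the paper re-derives the per-estimate concentration explicitly via Chernoff--Hoeffding on indicator variables and sets $\delta' = \delta/(2N D_j^2)$ (the factor of $2$ accounting for a ratio being built from two estimates), whereas you invoke the lemma's hypothesis directly; this is immaterial asymptotically, and your flagging of the independence of subtree draws across ratio estimators is a legitimate bookkeeping point the paper leaves implicit in the offline tree construction.
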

\begin{proof}
For each cell $C_{j,k}^{i,t}$ and each state $s_j\in C_{j,k}^{i,t}$, the agent uses its sampling tree $\mathcal{T}^{\,i}_j$ to compute an empirical estimate $\widehat{\tau}^{i,t}_j(s_j\mid C_{j,k}^{i,t})$ of $\tau^{i,t}_j(s_j\mid C_{j,k}^{i,t})$ by averaging over the leaves of the appropriate subtree.
(Recall that the tree is constructed offline by drawing $O\left(B_j^{R_j}\right)$ i.i.d. samples from the unconditional prior of the active agent at each level, defined by the $\mathrm{ActiveAgent}_j$ function.)
Thus, for a fixed agent $i$, cell $C_{j,k}^{i,t} \subseteq S_j$, and state $s_j \in C_{j,k}^{i,t}$, define the i.i.d. indicator random variables $X_1, \dots, X_m$ by
\begin{equation*}
X_\ell \;=\;
\begin{cases}
1, & \text{if the $\ell$-th sampled state equals } s_j,\\[4pt]
0, & \text{otherwise.}
\end{cases}
\end{equation*}

\noindent
(Each sample is drawn from the leaves of $\mathcal{T}^{\,i}_j$ restricted to the
cell $C_{j,k}^{i,t}$.)
Thus, each $X_\ell$ indicates whether the $\ell$-th sample drawn via the sampling tree lands on the state $s_j$ for agent $i$.
The empirical average
\begin{equation*}
\widehat{\tau}^{i,t}_j(s_j \mid C_{j,k}^{i,t}) = \frac{1}{m} \sum_{\ell=1}^{m} X_\ell
\end{equation*}
serves as an estimate for the true probability $\tau^{i,t}_j(s_j \mid C_{j,k}^{i,t})$.

By the ``textbook'' additive Chernoff-Hoeffding bound, if $m=O\left(\tfrac{1}{\eps_j^2}\ln\tfrac{1}{\delta_j'}\right)$, then
\begin{equation*}
  \Pr\Bigl[\,
    \bigl|\widehat{\tau}^{i,t}_j(s_j\mid C_{j,k}^{i,t})
           -\tau^{i,t}_j(s_j\mid C_{j,k}^{i,t})
    \bigr| \ge \eps_j
  \Bigr] \le \delta'_j.
\end{equation*}
Similarly for $s'_j$, hence the ratio $\widehat{\tau}^{i,t}_j(s_j\mid C_{j,k}^{i,t})
  \big/
  \widehat{\tau}^{i,t}_j(s'_j\mid C_{j,k}^{i,t})$
differs by at most $\pm\eps_j$ with probability $\ge 1-2\delta'_j$, as it is computed from two such independent estimates of $\widehat{\tau}$.  
Taking $\delta'_j=\delta_j/(2\,N\,D_j^2)$ and union‐bounding over all $O(N\,D_j^2)$ ratios yields a net success probability $\ge 1-\delta_j$.

Finally, each ratio uses $O((1/\eps_j^2)\,\ln(1/\delta'_j))$ subtree draws, each draw in the outer \texttt{for} loop in Step 6 of Algorithm~\ref{alg:construct}, incurring $T_{\mathrm{sample},H}$ or $T_{\mathrm{sample},AI}$ cost depending on whether a human or AI agent (of the $N$ total agents) built that portion of $\mathcal{T}^i_j$.
Hence, we replace the $N$ in Lemma~\ref{lem:cp-runtime}'s $O\left(\textrm{poly}\left(N\, D_j^2\right)\right)$ runtime with $q\,T_{\mathrm{sample},H}+(N-q)T_{\mathrm{sample},AI}$, and multiply by the per-ratio sampling overhead of $O((1/\eps_j^2)\,\ln(1/\delta'_j))$.
This gives us the stated runtime.
Thus, \textsc{ConstructCommonPrior} still returns a valid
common prior with probability at least $1-\delta_j$.
\end{proof}

Now we have reached a state where the $N$ agents have found a common prior $\CP_j$ with high probability.
In what follows, note that it does not matter if their common prior is approximate, but rather that the $N$ agents can consistently find \emph{some} common distribution to condition on, which is what Lemma~\ref{lem:approx-cp} guarantees as a consequence.
By Subroutine 2 of Requirement~\ref{req:bounded-cap}, all $N$ agents can sample from the unconditional common prior $\CP_j$ once it is found (Step 12 of Algorithm~\ref{alg:agree}), in total time $T_{N,q,\mathrm{sample}}$.

They will do this and then build their \emph{new} sampling trees of height $R'_j$ and branching factor $B'_j$, post common prior.
We now want to bound $R'_j$:
\begin{lemma}\label{lem:agree-smoothed-standard}
For all $f_j$ and $\CP_j$, the $N$ agents will globally $\tuple{\eps_j, \delta_j}$-agree on a given a task $j \in \M$ after $R'_j = O\left({N^7}/{\left(\delta_j\eps_j\right)^2}\right)$ rounds under the message-passing protocol in \S\ref{app:msp-protocol}.
\end{lemma}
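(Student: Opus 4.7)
The plan is to generalize Aaronson's two-agent smoothed-standard-protocol convergence result to $N$ bounded agents using the spanning-tree mechanism, in direct parallel to how Lemma~\ref{lem:spanning-tree} generalized his exact-protocol result to our setting. First, once all $N$ agents condition on the common prior $\CP_j$ (Line 12 of Algorithm~\ref{alg:agree}), the problem reduces to pairs of bounded agents with a shared prior, where Aaronson's \S 4.3 analysis applies. With triangular noise of width $2\alpha_j$ injected per round and the precomputed sampling tree $\mathcal{T}^i_j$ providing approximate (but accurate) conditional expectations, the sequence $E^{i,t}_j$ exchanged between any fixed pair $(i,k)$ behaves (with high probability) like an unbiased random walk with step size $\Theta(\eps_j)$ confined to $[-\alpha_j, 1+\alpha_j]$. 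A bounded-variance martingale stopping-time argument then shows each pair will $\tuple{\eps_j,\delta_j}$-agree within $O\bigl(g_j^2/(\delta_j \eps_j)^2\bigr)$ of their own pair-specific rounds, where $g_j$ is the diameter of the spanning trees.

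Second, I would route all pair interactions through the spanning-tree protocol of Lemma~\ref{lem:spanning-tree-refinement}: within each block of $O(g_j)$ broadcast messages, every agent's freshly updated estimate is forwarded along ${SP}^1_j$ and ${SP}^2_j$ exactly once, so each pair of agents effectively gains one pair-round's worth of information per $O(g_j)$ broadcast messages. This inflates the per-pair round bound by a factor of $O(g_j)$, yielding an upper bound of $O\bigl(N g_j^2/(\delta_j \eps_j)^2\bigr)$ broadcast messages before any specified pair achieves pairwise $\tuple{\eps_j,\delta_j}$-agreement.

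Third, I would upgrade pairwise agreement to global agreement via a union bound over the $\binom{N}{2} = \Theta(N^2)$ pairs: require each pair to $\tuple{\eps_j, \delta_j/N^2}$-agree so the union bound gives global $\tuple{\eps_j,\delta_j}$-agreement. Since the pairwise bound depends on $1/\delta_j^2$, replacing $\delta_j$ by $\delta_j/N^2$ inflates by an extra $N^4$, giving $O\bigl(N^5 g_j^2 / (\delta_j \eps_j)^2\bigr)$. Finally, substituting the worst-case ring topology $g_j = O(N)$ yields the advertised $R'_j = O\bigl(N^7/(\delta_j \eps_j)^2\bigr)$.

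The main obstacle is justifying that the sampling-tree approximation of conditional expectations (used in lieu of exact Bayesian updates) preserves the unbiased-random-walk structure with step size $\Theta(\eps_j)$ that drives Aaronson's two-agent convergence argument. This is precisely where the triangular noise smoothing of Step~2 in \S\ref{app:msp-protocol} pays off: for $B'_j \ge 1/\alpha_j$ sufficiently large (already required for Lemma~\ref{lem:sample-tree-neighbor-msg-refinment}), Chernoff--Hoeffding concentration on the subtree averages ensures the empirical estimate of $E^{i,t}_j$ differs from the true conditional expectation by $o(\alpha_j)$ with probability exceeding $1-\delta_j/\mathrm{poly}(N)$, so the noise floor $\alpha_j \in [\eps/50, \eps/40]$ dominates the sampling error. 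Consequently the Doob-style martingale increments retain an $\Omega(\eps_j)$ typical magnitude and a uniformly bounded range, and Aaronson's $O\bigl(1/(\delta_j \eps_j)^2\bigr)$ two-agent convergence bound carries through essentially verbatim, completing the reduction.
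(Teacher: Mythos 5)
Your proposal follows the paper's own route: reduce to Aaronson's two-agent smoothed-protocol result once the agents condition on $\CP_j$, lift to $N$ agents via the spanning-tree machinery of Lemma~\ref{lem:spanning-tree}, and then union-bound over the $\binom{N}{2}$ pairs by requiring pairwise $\tuple{\eps_j, \delta_j/N^2}$-agreement, with $g_j = O(N)$ for the worst-case ring. Two small deviations: the paper introduces the globally accessible intermediary node $F_j$ (as in Proposition~\ref{prop:disc}) so that Aaronson's Theorem~11 applies directly between each of the $\binom{N+1}{2}$ pairs, whereas you route pairs directly through the spanning tree; and the paper simply cites Theorem~11 for the fact that the sampling-tree approximation preserves the $O(1/(\delta_j \eps_j^2))$ two-agent rate, whereas you sketch a Chernoff--Hoeffding re-derivation of that step. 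Both of these are fine. One piece of bookkeeping in your middle step is muddled: you state a per-pair bound of $O\bigl(g_j^2/(\delta_j\eps_j)^2\bigr)$ and then say a further factor of $O(g_j)$ yields $O\bigl(N g_j^2/(\delta_j\eps_j)^2\bigr)$, which is only coherent because you later substitute $g_j = O(N)$; cleaner is to quote Aaronson's Theorem~10 (as Lemma~\ref{lem:spanning-tree} does) for the pairwise-under-spanning-tree bound $O\bigl(N g_j^2/(\delta_j\eps_j)^2\bigr)$ outright and then apply the $N^4$ union-bound inflation.
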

\begin{proof}
By~\citet[Theorem 11]{aaronson2005complexity}, when any \emph{two} agents use this protocol under a common prior, it suffices to take $R'_j = O\left(1/(\delta_j\eps_j^2)\right)$ rounds to reach pairwise $\tuple{\eps_j, \delta_j}$-agreement, which is the same runtime as in the non-noisy two agent case.
We just need to generalize this to $N$ agents who share a common prior $\CP_j$.
We take the same approach as in our Proposition~\ref{prop:disc}, by having an additional node $F_j$ that is globally accessible to all $N$ agents.
The rest of the proof follows their Theorem 11 for any \emph{pair} of agents that use the intermediary $F_j$, so then by our Lemma~\ref{lem:spanning-tree}, under the spanning tree protocol, it will instead require $R'_j = O\left({(N+1)^7}/{\left(\delta_j\eps_j\right)^2}\right)$ rounds for all $\binom{N+1}{2}$ pairs of $N+1$ agents (including $F_j$) to reach \emph{global} $\tuple{\eps_j, \delta_j}$-agreement.
We subsume the lower-order terms in the big-$O$ constant, giving rise to the above lemma.
\end{proof}

Thus, combining Lemma~\ref{lem:agree-smoothed-standard} with Lemma~\ref{lem:sampling-tree-time}, reaching \emph{global} $\tuple{\eps_j, \delta_j}$-agreement with sampling trees will take total time:
\begin{equation}\label{eq:new-tree-time}
O\left({B'_j}^{{N^7}/{\left(\delta_j\eps_j\right)^2}}T_{N,q}\right).
\end{equation}
Let $1-\delta^{\text{find\_CP}}_j$ be the probability of converging to a common prior by Lemma~\ref{lem:common-prior-sampling-tree},
let $1-\delta^{\text{construct\_CP}}_j$ be the probability of constructing a common prior once reaching convergence by Lemma~\ref{lem:approx-cp}, and let $1-\delta^{\text{agree\_CP}}_j$ be the probability of reaching global $\tuple{\eps_j, \delta^{\text{agree\_CP}}_j}$-agreement when conditioned on a constructed common prior, then we have that
\begin{equation}\label{eq:eps-agree-prob}
\begin{aligned}
\Pr\bigl[\eps_j\text{-agreement}\bigr]
   &\;\ge\;
     \bigl(1-\delta^{\text{find\_CP}}_j\bigr)
     \bigl(1-\delta^{\text{construct\_CP}}_j\bigr)\\
     &\bigl(1-\delta^{\text{agree\_CP}}_j\bigr) \\[4pt]
   &\;\ge\;
     1-
     \bigl(
       \delta^{\text{find\_CP}}_j
       +\delta^{\text{construct\_CP}}_j\\
       &+\delta^{\text{agree\_CP}}_j
     \bigr).
\end{aligned}
\end{equation}
Hereafter, we set $\delta_j := \delta^{\text{find\_CP}}_j + \delta^{\text{construct\_CP}}_j + \delta^{\text{agree\_CP}}_j$.

Thus, for a single task $j$, sufficiently large $B'_j$, and $B_j \ge \max\{S^{1/R_j},1/\alpha_j\}$, we have that with probability $\ge 1 - \delta_j$, the $N$ agents will reach $\eps_j$-agreement in time:
\begin{equation*}
\begin{split}
&\widetilde{O}\!\Bigl(
   B_j^{\,N^{2}D_j
          \frac{\ln\!\bigl(\delta^{\text{find\_CP}}_j/(N^{2}D_j)\bigr)}
               {\ln(1/\alpha_j)}}\;
      T_{N,q}                                       \\[4pt]
   &+\;N^{2}D_j\,\operatorname{poly}\!\bigl(D_j^{2}\bigr)\,
      T_{N,q,\mathrm{sample}}
      \;+\;T_{N,q,\mathrm{sample}}                \\[4pt]
   &+\;{B'}_j^{\,\frac{N^{7}}
                   {(\delta^{\text{agree\_CP}}_j\eps_j)^{2}}}\;
      T_{N,q}
\Bigr)
\\[6pt]
&=\;
O\!\Bigl(
      T_{N,q}\Bigl[
          B_j^{\,N^{2}D_j
                 \frac{\ln\!\bigl(\delta^{\text{find\_CP}}_j/(N^{2}D_j)\bigr)}
                      {\ln(1/\alpha_j)}}
          +{B'}_j^{\,\frac{N^{7}}
                       {(\delta^{\text{agree\_CP}}_j\eps_j)^{2}}}
      \Bigr]
\Bigr).
\end{split}
\end{equation*}

since the logarithmic terms (from Lemma~\ref{lem:approx-cp}) subsumed by the $\tilde{O}$ are on the non-exponential additive terms.
This follows from summing the runtime of computing the sampling tree (Lemma~\ref{lem:sampling-tree-time}), the total runtime of the common prior procedure in Lemma~\ref{lem:approx-cp} multiplied by the number of online rounds $R_j$ in the \texttt{while} loop of Algorithm~\ref{alg:agree} (Lines 4-16) given by Lemma~\ref{lem:common-prior-sampling-tree}, and the runtimes of conditioning and the second set of sampling trees in \eqref{eq:cond-time} and \eqref{eq:new-tree-time}, respectively.
We then take $D := \max_{j\in \M} D_j$, $\delta^{\text{find\_CP}}:=\max_{j\in\M}\delta^{\text{find\_CP}}_j$, $\delta^{\text{construct\_CP}}:=\max_{j\in\M}\delta^{\text{construct\_CP}}_j$, $\delta^{\text{agree\_CP}}:=\max_{j\in\M}\delta^{\text{agree\_CP}}_j$, $\alpha := \min_{j \in \M} \alpha_j$, $\eps:= \min_{j \in \M} \eps_j$.
To ensure an overall failure probability of at most $\delta$ we allocate the budget $\delta/M$ \emph{per task}, splitting it equally among the three sub‑routines: $\delta^{\text{find\_CP}}_j=
  \delta^{\text{construct\_CP}}_j=
  \delta^{\text{agree\_CP}}_j=
  \delta/(3M)$.
(A union bound over all $M$ tasks and the three sub‑routines then bounds the total error by $\delta$.)
Finally, let $B:=\max_{j\in\M}\max\{B_j,B'_j\}$ so that a single base subsumes all exponential terms.  
Multiplying the worst‑case per‑task time by $M$ tasks yields the global bound of Theorem~\ref{thm:bounded}.

To prove Corollary~\ref{cor:wannabe-agree}, it suffices to explicitly bound $B'_j$.
This is because for each \emph{individual} task $j$, a ``total Bayesian wannabe'', \emph{after} conditioning on a common prior $\CP_j$, as defined in Definition~\ref{def:total-wannabe}, corresponds to a single ``Bayesian wannabe'' in~\citet[pg. 19]{aaronson2005complexity}'s sense.
Thus, by their Theorem 20, for $\eps_j/50 \le \alpha_j \le \eps_j/40$, it suffices to set on a per-task basis,
\begin{equation}\label{eq:total-bayesian}
B'_j := O\left((11/\alpha_j)^{{R'_j}^2}/\rho_j^2\right), \quad 
b_j := \left\lceil \log_2 \tfrac{R'_j}{\rho_j \alpha_j} \right\rceil + 2,
\end{equation}
where $R'_j$ is the value for $N$ agents that we found in Lemma~\ref{lem:agree-smoothed-standard}.
Taking $\rho := \min_{j \in \M} \rho_j$ maximizes the bound, and scaling by $M$ gives rise to what is stated in Corollary~\ref{cor:wannabe-agree}.

\section{Proof of Proposition~\ref{prop:needle}}
\begin{proof}
\textbf{Hard priors.}
For a state space $S=\{s_{0},s_{1},\dots ,s_{D-1}\}\;(D\ge 3)$, define
\begin{equation*}
\begin{aligned}
{\mathbb P}^{i}(s_{0})  &= 0,
&
{\mathbb P}^{k}(s_{0})  &= \nu/2, \\
{\mathbb P}^{i}(s_{m})  &= \frac{1}{D-1},
&
{\mathbb P}^{k}(s_{m})  &= \frac{1-\nu/2}{D-1}\quad(m\ge 1).
\end{aligned}
\end{equation*}
Then $\|{\mathbb P}^{i}-{\mathbb P}^{k}\|_{1}=\nu$, which means the prior distance $\ge \nu$ by the triangle inequality.

\paragraph{Payoff and targets.}
Let $f(s)=\mathbf 1[s=s_{0}]$; thus $\mathbb E_{i}[f]=0,\;
 \mathbb E_{k}[f]=\nu/2$.
Set $\eps=\nu/8,\;\delta=1/4$.

\textbf{One agent, one tree.}
It suffices to consider only agent $k$.
Let the sampling tree have $L$ leaves labeled $s_{1},\dots ,s_{L}$ and define the Bernoulli indicators $X_{\ell}=\mathbf 1[s_{\ell}=s_{0}]$.
The empirical expectation sent in the \emph{first} message is
\begin{equation*}
  \widehat{E}[f]
     \;=\;
     \frac1L \sum_{\ell=1}^{L} X_{\ell}.
\end{equation*}

If none of the $L$ samples equals $s_{0}$, then $\widehat{E}[f]=0$ and the absolute error is $|\widehat{E}[f]-\mathbb E_{k}[f]|=\nu/2\ge 4\eps$; the protocol necessarily fails.  
The probability of that event is
\begin{equation*}
  \Pr[\text{no }s_{0}]
     =(1-\nu/2)^{L}
     \ge 1-(\nu/2)L
\end{equation*}
To make it $\le\delta=1/4$, we need $1-(\nu/2)L\le 1/4$, then $L \ge 3/(2\nu)$.
Because the sampling tree has to be at least of size $L$, we complete this part.

\textbf{Many tasks \& agents.}
Embed an independent copy of the hard task in each of $M$ task indices, assign $P^i$ to $N/2$ of the agents and $P^k$ to the other $N/2$ agents, and sum the costs to get
$\Omega(M\nu^{-1}\,T_{N,q,\mathrm{sample}})$ time units once the per‑sample cost of Requirement~\ref{req:bounded-cap} is applied.
\end{proof}
\end{document}